\definecolor{DarkBlue}{HTML}{02079e}
\definecolor{DarkRed}{HTML}{9e0202}
\newtheorem{definition}{Definition}[section]
\newtheorem{theorem}[definition]{Theorem}
\newtheorem{lemma}[definition]{Lemma}
\newtheorem{proposition}[definition]{Proposition}
\theoremstyle{definition}
\newtheorem{example}[definition]{Example}
\newtheorem{remark}[definition]{Remark}
\newcommand{\Nat}{\mathbb{N}}
\newcommand{\R}{\mathbb{R}}
\newcommand{\norm}[1]{\lVert#1\rVert}
\newcommand{\ind}{\mathbbm{1}}
\DeclarePairedDelimiterX{\infdivx}[2]{(}{)}{#1\;\delimsize\|\;#2}
\tikzstyle{morphism}=[fill=white, draw=black, shape=rectangle]
\tikzstyle{medium box}=[fill=white, draw=black, shape=rectangle, minimum width=0.8cm]
\tikzstyle{medium large morphism}=[fill=white, draw=black, shape=rectangle, minimum width=1.2cm]
\tikzstyle{large morphism}=[fill=white, draw=black, shape=rectangle, minimum width=1.7cm]
\tikzstyle{bn}=[fill=black, draw=black, shape=circle, inner sep=1.5pt]
\tikzstyle{state}=[fill=white, draw=black, regular polygon, regular polygon sides=3, minimum width=0.8cm, shape border rotate=180, inner sep=0pt]
\tikzstyle{long state}=[fill=white, draw=black, shape=isosceles triangle, isosceles triangle apex angle=90, shape border rotate=270]
\tikzstyle{medium state}=[fill=white, draw=black, regular polygon, regular polygon sides=3, minimum width=1.3cm, inner sep=0pt, shape border rotate=180]
\tikzstyle{large state}=[fill=white, draw=black, regular polygon, regular polygon sides=3, minimum width=2.2cm, shape border rotate=180, inner sep=0pt]
\tikzstyle{wn}=[fill=white, draw=black, shape=circle, inner sep=1.5pt]
\tikzstyle{likelihood}=[fill=white, draw=black, regular polygon, regular polygon sides=3, minimum width=0.8cm, shape border rotate=0, inner sep=0pt]
\tikzstyle{arrow}=[->]
\tikzstyle{dashed box}=[-, dashed]
\tikzstyle{new edge style 0}=[-, fill={rgb,255: red,148; green,162; blue,255}, draw=none]
\title{Stochastic Neural Network Symmetrisation in \\ Markov Categories}
\author{Rob Cornish \\ \small \textit{Department of Statistics, University of Oxford}}
\date{}
\begin{document}

\maketitle

\begin{abstract}
	We consider the problem of \emph{symmetrising} a neural network along a group homomorphism: given a homomorphism $\varphi : \HH \to \G$, we would like a procedure that converts $\HH$-equivariant neural networks to $\G$-equivariant ones.
We formulate this in terms of Markov categories, which allows us to consider neural networks whose outputs may be stochastic, but with measure-theoretic details abstracted away.
We obtain a flexible and compositional framework for symmetrisation that relies on minimal assumptions about the structure of the group and the underlying neural network architecture.
Our approach recovers existing canonicalisation and averaging techniques for symmetrising deterministic models, and extends to provide a novel methodology for symmetrising stochastic models also.
Beyond this, our findings also demonstrate the utility of Markov categories for addressing complex problems in machine learning in a conceptually clear yet mathematically precise way.

\end{abstract}

\tableofcontents

\section{Introduction}

In many machine learning problems, it is useful to have a neural network that is \emph{equivariant} with respect to some group actions.
That is, for some group $\G$ acting on input and output spaces $\X$ and $\Y$, we would like a neural network $f : \X \to \Y$ that satisfies
\begin{equation} \label{eq:equivariance}
	f(g \cdot x) = g \cdot f(x)
\end{equation}
for all $x \in \X$ and $g \in \G$.
A special case of this is \emph{invariance}, which takes the action on $\Y$ to be trivial, and so the requirement becomes $f(g \cdot x) = f(x)$ instead.
Such constraints arise in many applications involving some geometric structure, such as computer vision, or scientific problems where the data involved are known to follow certain symmetries \citep{bronstein2017geometric,bronstein2021geometric}.
However, most off-the-shelf neural networks are not equivariant.
Unless care is taken, even after training on data that contains symmetries, typically \eqref{eq:equivariance} will fail to hold, possibly to a large degree.
This can reduce performance and robustness, and so an active research area considers how to develop neural networks that are equivariant by design.

\paragraph{Intrinsic equivariance vs.\ symmetrisation}

Following \citet{yarotsky2018universal}, it is helpful to distinguish between two major approaches to obtaining equivariant neural networks.
A significant body of work has focussed on \emph{intrinsic equivariance}, which imposes certain constraints on individual layers of a neural network to ensure that the network as a whole is equivariant \citep{cohen2016group,ravanbakhsh2017equivariance,finzi2021practical}.
In contrast, a recent line of work may be described as \emph{symmetrisation}, which takes an unconstrained neural network and \emph{modifies} it in some way to become equivariant.
For example, when $\G$ is finite, the function
\begin{equation} \label{eq:group-theoretic-janossy-pooling}
	x \,\, \mapsto \,\, \frac{1}{|\G|} \sum_{g \in \G} f(g^{-1} \cdot x)
\end{equation}
that averages over the elements of the group is seen always to be invariant, regardless of $f$ \citep{yarotsky2018universal}.
This observation formed the basis of the \emph{Janossy pooling} approach of \citet{murphy2018janossy}, who took $\G$ to be the symmetric group of permutations, and obtained in this way a neural network that does not depend on the ordering of its inputs.
Subsequently, \emph{frame averaging} \citep{puny2022frame} extended this to the case of equivariance and to more general groups, and provided a technique for reducing the cost of the averaging operation, which can be expensive when $\G$ is large.
An initially parallel approach of \emph{canonicalisation} was proposed by \citet{kaba2023equivariance}, which relies on a single representative element of the group that is chosen in an equivariant way, and thereby avoids averaging altogether.
Both techniques were then generalised by \emph{probabilistic symmetrisation} \citep{kim2023learning}, which averages over a random element of the group that is sampled in an equivariant way.
A related approach of \emph{weighted frames} was also recently proposed by \citet{dym2024equivariant}.
Overall, symmetrisation approaches are attractive as they can leverage unconstrained neural network architectures as their ``backbone'', while still ensuring equivariance overall.
This leads to greater modelling flexibility, which these earlier works have shown can often improve performance compared with intrinsic approaches.

\paragraph{Stochastic equivariance}

\begin{figure}
	\centering
	\includegraphics[width=.8\textwidth]{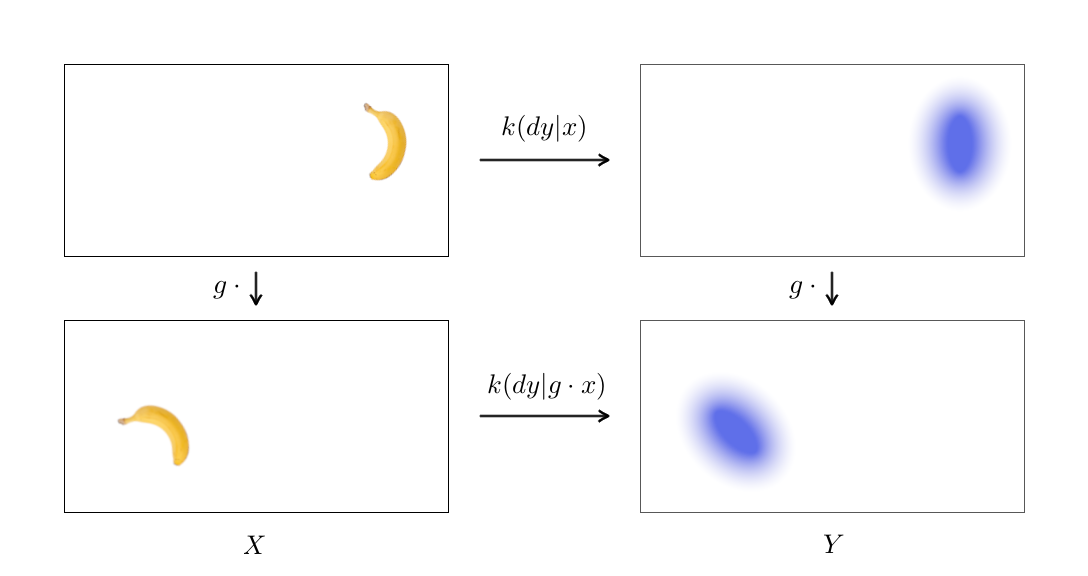}
	\caption{An illustration of stochastic equivariance.
	Here $\X$ is a space of images, $\Y$ is a space of coordinates, and the group $\G$ consists of 2D rotations and translations.
	The model $k : \X \to \Y$ produces a noisy estimate of the location of the banana in its input, with repeated samples depicted here in blue.
	Stochastic equivariance means that the overall \emph{distribution} of these samples varies with the action of the group as shown.
	This is distinct from equivariance at the level of \emph{individual} samples, which is a more rigid constraint.}
	\label{fig:equivariance}
\end{figure}

In this work, we will consider neural networks $f$ that are allowed to depend on some additional randomness $\bm{U}$. %
For such models, the following \emph{stochastic} notion of equivariance is useful in many applications:
\begin{equation} \label{eq:equivariance-stochastic}
	f(g \cdot x, \bm{U}) \eqd g \cdot f(x, \bm{U})
\end{equation}
for all $x \in \X$, $g \in \G$, where $\eqd$ denotes equality in distribution.
This generalises the original deterministic condition \eqref{eq:equivariance}, which is recovered as the degenerate case where $\bm{U}$ is constant.
It is also more general than the pointwise notion considered in equation (17) of \citet{bloem2020probabilistic} and (1) of \citet{lawrence2024improving}, which uses \emph{almost sure} equality in place of the distributional equality here.
In particular, \eqref{eq:equivariance-stochastic} only implies equivariance across repeated samples of $\bm{U}$ rather than for any individual one.
As a result, stochastic equivariance provides a more flexible notion of symmetry that encompasses a wider range of $f$ and $\bm{U}$ than these alternative conditions.

Throughout much of the following, it is convenient to regard the pair of $f$ and $\bm{U}$ as a single entity thought of as a ``stochastic neural network'', rather than decoupling these as in \eqref{eq:equivariance-stochastic}.
We formalise this in terms of \emph{Markov kernels}, defined in Section \ref{sec:markov-kernels}, which are a standard construction in probability theory for modelling conditional distributions, or ``functions with stochastic output''.
In short, a Markov kernel $k : \X \to \Y$ encodes for each $x \in \X$ a probability distribution on $\Y$ that we denote $k(dy|x)$.
We will say that $k$ is \emph{stochastically equivariant} (or simply \emph{equivariant}) if
\begin{equation} \label{eq:equivariance-stochastic-markov-desideratum}
	k(dy|g \cdot x) = g \cdot k(dy|x)
\end{equation}
for all $x \in \X$, $g \in \G$, where the right-hand side denotes the \emph{pushforward} of $k(dy|x)$ by $g$, or in other words the distribution of $g \cdot \bm{Y}$ when $\bm{Y} \sim k(dy|x)$.
See Figure \ref{fig:equivariance} for an illustration.
If \eqref{eq:equivariance-stochastic} holds, we can always obtain an equivariant Markov kernel $k$ by defining $k(dy|x)$ as the distribution of $f(x, \bm{U})$.
Markov kernels therefore subsume our discussion above, and are our main case of interest in what follows.

In the machine learning literature, stochastic equivariance is often defined in terms of conditional \emph{densities} rather than Markov kernels.
Specifically, a conditional density $p(y|x)$ is said to be equivariant if
\[
	p(g \cdot y | g \cdot x) = p(y|x)
\]
for all $x \in \X$, $y \in \Y$, and $g \in \G$, where it is assumed implicitly that the action on $\Y$ has unit Jacobian \citep[Proposition 1]{xu2022geodiff}.
This condition has appeared in the context of generative modelling \citep{xu2022geodiff,hoogeboom2022equivariant,anand2022protein,yim2023se3} and reinforcement learning \citep{brehmer2023edgi,yang2024equibot}, and in situations where uncertainty quantification is required \citep{minartz2024equivariant}.
It is also relevant for the probabilistic symmetrisation method of \citet{kim2023learning}, who require this to hold for a particular subcomponent of their model.
Given its close resemblance to the original deterministic condition \eqref{eq:equivariance}, we regard the kernel-based condition \eqref{eq:equivariance-stochastic-markov-desideratum} as a more natural starting definition of stochastic equivariance and will emphasise this in what follows.
However, as Proposition \ref{prop:stochastic-equivariance-density} below shows, the density-based definition can be recovered as a special case of this, and so we do not lose generality with this approach.

\paragraph{Markov categories}

We use \emph{Markov categories} \citep{cho2019disintegration,fritz2020synthetic} as a framework for reasoning about stochastically equivariant neural networks.
An overview of this topic is provided in Section \ref{sec:markov-categories} below.
At a high level, rather than dealing with Markov kernels directly, we study the behaviour of abstract entities known as \emph{morphisms} that behave like Markov kernels in a precise sense.
In doing so, we can prove results about Markov kernels using intuitive, high-level, and often purely diagrammatic arguments, and without needing to worry about measure theoretic details.
We also gain significant additional generality, and can specialise to various other settings of interest in a seamless way.
For example, although we emphasise stochastic equivariance in what follows, our results are still valid in Markov categories that happen to be purely deterministic, and so also apply to existing work on deterministic symmetrisation as a result.

One consequence of this approach is that we need to generalise various concepts from classical, set-theoretic group theory so that they make sense in a general Markov category.
We do so in Section \ref{sec:group-theory} below, including for groups, homomorphisms, actions, equivariance, semidirect and direct products, orbits, and cosets.
These definitions reduce to the usual ones in classical settings: for example, our general definition of equivariance in Definition \ref{def:equivariance} below recovers the stochastic equivariance condition \eqref{eq:equivariance-stochastic-markov-desideratum} when applied to Markov kernels (Example \ref{ex:stochastic-equivariance}).
Our account aims to be self-contained and accessible, and involves specific considerations that may be of interest for other work that combines groups and Markov categories also.

Throughout the paper, we will make use of various standard concepts from general category theory, including functors, coequalisers, and adjoints.
We have sought to do so sparingly, and only when this provides a large enough conceptual benefit to be justified.
We have also sought to present the overall methodology we obtain in a way that can be applied in practice even without a detailed understanding of these concepts (see e.g.\ Sections \ref{sec:end-to-end-procedure} and \ref{sec:examples}).
For readers unfamiliar with category theory, a highly accessible introduction can be found in \citet{perrone2021notes}.
We will also provide more specific references in various places, as well as examples of these concepts in more concrete settings where appropriate.
For readers who do know category theory, these parts can be skipped over without loss of continuity.

\paragraph{Symmetrising along a homomorphism}

Let $\C$ be a Markov category, and let
\[
	\varphi : \HH \to \G
\]
be a homomorphism between groups in $\C$ (see Definition \ref{def:homomorphism}).
For practical purposes, $\varphi$ may be thought of as a subgroup inclusion.
At a high level, the problem of symmetrisation we consider is to find some mapping that converts $\HH$-equivariant morphisms to $\G$-equivariant morphisms.
Most existing work has considered the specific case where $\HH$ is the trivial group, in which case $\HH$-equivariance always holds vacuously, and a mapping of this kind therefore takes as input an arbitrary morphism in $\C$, which may be regarded as an unconstrained neural network.
However, in the deterministic setting, Section 3.3 of \citet{kaba2023equivariance} also provides a sufficient condition for their canonicalisation procedure also to apply for general subgroup inclusions.
This allows already equivariant models to be made ``more so'' without inadvertently undoing other existing symmetries that are already present.
We take this more general problem as our starting point, and our methodology applies uniformly for all choices of $\varphi$, including in the stochastic setting.
It can also be applied compositionally by symmetrising along multiple homomorphisms in sequence, thereby building up more complex equivariance properties in a structured way (Section \ref{sec:composing-procedures}).

More formally, the group $\G$ canonically gives rise to a Markov category $\C^\G$ whose objects are \emph{$\G$-objects}, that is,  objects of $\C$ equipped with some action of $\G$, and whose morphisms are equivariant with respect to these $\G$-actions (see Definition \ref{def:markov-category-of-equivariant-morphisms}).
We obtain a Markov category $\C^\HH$ of $\HH$-objects and $\HH$-equivariant morphisms in a similar way.
Both categories are always related by a functor $\Res_\varphi : \C^\G \to \C^\HH$ that acts by \emph{restriction} along the homomorphism $\varphi$ (Definition \ref{def:action-restriction-functor}).
For example, when $\varphi$ is a subgroup inclusion, $\Res_\varphi$ maps each $\G$-object to the $\HH$-object obtained simply by restricting its $\G$-action to $\HH$ (Example \ref{ex:subgroup-restricted-action}).
Given $\G$-objects $\X$ and $\Y$, we now formalise the overall goal of symmetrisation as to obtain some function $\sym$ of the form
\begin{equation} \label{eq:symmetrisation-procedure-intro}
	\begin{tikzcd}
		\C^\HH(\Res_\varphi\X, \Res_\varphi\Y) \ar{r}{\sym} & \C^\G(\X, \Y),
	\end{tikzcd}
\end{equation}
which we refer to as a \emph{symmetrisation procedure} (Definition \ref{def:symmetrisation-procedure}).
Here we use the standard notation $\D(\U, \V)$ to denote the set of morphisms $\U \to \V$ in a category $\D$.
Notice that $\sym$ of this form maps $\HH$-equivariant morphisms to $\G$-equivariant ones, which is intuitively what a symmetrisation procedure ought to do.
In particular, when $\varphi$ is a subgroup inclusion, the function $\sym$ ``upgrades'' a morphism that is only equivariant with respect to the subgroup to become equivariant with respect to the whole group.
As an important special case, when $\HH$ is the trivial group, $\sym$ converts completely unconstrained morphisms in $\C$ into ones that are $\G$-equivariant (Example \ref{ex:symmetrisation-along-trivial-homomorphism}).

\paragraph{General framework}

The methodological question now is, how can we actually construct symmetrisation procedures in practice?
In many familiar settings, such as for set-theoretic groups, the functor $\Res_\varphi$ admits a \emph{left adjoint} $\Ext_\varphi : \C^\HH \to \C^\G$ known as \emph{extension} or \emph{induction} (see e.g.\ Chapter I.1 of \citet{may1997equivariant}).
When this holds, for all $\G$-objects $\X$ and $\Y$, we obtain a bijection
\begin{equation} \label{eq:symmetrisation-procedure-intro-3}
	\begin{tikzcd}
		\C^\HH(\Res_\varphi \X, \Res_\varphi \Y) \ar{r}{\cong} & \C^\G(\Ext_\varphi \Res_\varphi \X, \Y).
	\end{tikzcd}
\end{equation}
Notice that the left-hand side here is exactly the left-hand side of \eqref{eq:symmetrisation-procedure-intro}.
This yields a full characterisation of \emph{every} possible symmetrisation procedure $\sym$: each one can be written uniquely as a composition
\begin{equation*} \label{eq:symmetrisation-procedure-intro-2}
	\begin{tikzcd}
		\C^\HH(\Res_\varphi \X, \Res_\varphi \Y) \ar{r}{\cong} & \C^\G(\Ext_\varphi \Res_\varphi \X, \Y) \ar{r} & \C^\G(\X, \Y)
	\end{tikzcd}
\end{equation*}
for some choice of function in the second step (which may be arbitrary).
If we want a method that is general purpose and applies without further assumptions on the components involved, then there is only one obvious choice for this second step: \emph{precomposition} by some morphism $\Pre : \X \to \Ext_\varphi \Res_\varphi \X$ in $\C^\G$.
In other words, given an input $k : \Ext_\varphi \Res_\varphi \X \to \Y$ in $\C^\G$, this step outputs $k \circ \Pre : \X \to \Y$, which is always a morphism of the correct type simply because $\C^\G$ is closed under composition.
End-to-end, by applying the bijection \eqref{eq:symmetrisation-procedure-intro-3} and then precomposing the result by some $\Pre$, we always obtain a symmetrisation procedure $\sym_\Pre$ of the desired form \eqref{eq:symmetrisation-procedure-intro}.
This strategy works for all groups and actions in $\C$ without additional requirements such as compactness, and is therefore highly generic.

A full left adjoint $\Ext_\varphi$ is more than we require here: to obtain a bijection of the form \eqref{eq:symmetrisation-procedure-intro-3}, we in fact only need to construct the composite functor $\Ext_\varphi \Res_\varphi$.
In Theorem \ref{thm:partial-left-adjoint-existence}, we provide a sufficient condition that allows us to do so.
We also show in Theorem \ref{thm:topstoch-has-orbits} that this condition is always satisfied when $\C$ is the Markov category of topological spaces and continuous Markov kernels, which seems more than adequate for most applications in machine learning.
In practice, it is also readily satisfied for many other groups of interest in other contexts, as we demonstrate with various examples throughout.
In Section \ref{sec:concrete-instantiation}, we then show that the existing symmetrisation methods mentioned above can all be recovered as instances of this approach for specific choices of $\C$ and $\varphi$, in some cases when combined with a further \emph{expectation} step that ensures the result is always deterministic (see Section \ref{sec:deterministic-symmetrisation}).
Our framework thereby shows these methods all ``come from'' a common underlying principle based on \eqref{eq:symmetrisation-procedure-intro-3}.

\paragraph{Methodological contributions}

Beyond this streamlined theoretical picture, our framework also provides various novel methodological contributions.
For example, apart from \citet{kaba2023equivariance} (Section 3.3), previous work has only considered the task of symmetrising unconstrained morphisms in $\C$.
Our approach extends these methods to handle morphisms that are already ``partially'' equivariant, which provides additional structure that can be exploited.
It also gives rise to novel techniques for \emph{compositional} and \emph{recursive} symmetrisation, both of which can be useful in practice (Section \ref{sec:composing-procedures} and Remark \ref{rem:recursive-symmetrisation}).

In addition, since our theory is developed entirely in terms of Markov categories, it yields a methodology that applies directly to Markov kernels, the symmetrisation of which has not previously been considered.
We refer to this as \emph{stochastic symmetrisation} to distinguish it from the \emph{deterministic symmetrisation} methods of previous work, which instead apply only to deterministic functions.
Stochastic symmetrisation is very simple to implement, admitting an exact sampling procedure given in Example \ref{ex:symmetrising-markov-kernels}, and resembles a generalised data augmentation procedure that may be learned during training and is applied also at test time.
It moreover avoids the expectation operator $\ave$ required by various deterministic procedures such as \citet{puny2022frame} and \citet{kim2023learning}, which is computationally expensive (or only approximate), and is not even defined when $\Y$ is not a convex space like $\R^d$ (whereas stochastic symmetrisation always is).
We use stochastic symmetrisation to obtain the stochastically equivariant subcomponent required by \citet{kim2023learning}, and show empirically that this yields improved performance on several synthetic numerical examples.

More generally, our work serves as a case study in how Markov categories can be used to reason about real machine learning problems arising in practice.
Markov categories provides a canvas to describe everything from our background theory to our experimental setup using the same coherent language and notations, which allows us to encompass a variety of complex situations in a lightweight and uniform way.
Section \ref{sec:examples} demonstrates this through a variety of examples that show how our symmetrisation framework can be applied across compact groups, translation groups, semidirect products, and even the full general linear group.
Anecdotally, the new methodology we propose was found directly as a result of this greater conceptual clarity, and we believe similar benefits could be obtained by applying categorical ideas to other applications in machine learning also.

\paragraph{Outline}

The rest of the paper is structured as follows.
In Section \ref{sec:markov-kernels} we provide a brief introduction to Markov categories.
We aim in particular to allow readers unfamiliar with \emph{string diagrams} to parse these easily, as they will be used throughout the paper.
In Section \ref{sec:group-theory}, we develop some basic concepts from group theory in the context of a general Markov category.
For readers primarily interested in our methodology, this section can initially be skipped over and referred back to on an as-needed basis.
In Section \ref{sec:symmetrisation}, we formulate the overall goal of symmetrisation, and describe the high-level behaviour of symmetrisation procedures.
In Section \ref{sec:adjunctive-symmetrisation}, we give our overall framework for symmetrisation based on bijections of the form \eqref{eq:symmetrisation-procedure-intro-3}.
In Section \ref{sec:examples}, we provide various examples of how this approach can be applied for different groups and actions of interest.
Finally, in Section \ref{sec:empirical-results}, we consider implementation details and provide empirical results.

\section{Background on Markov categories} \label{sec:markov-categories}

We provide a short introduction to Markov categories here, and refer the reader to \citet{cho2019disintegration} and \citet{fritz2020synthetic} for a more detailed treatment.

\subsection{Markov kernels} \label{sec:markov-kernels}

A Markov category can be understood as an abstraction of the key structural behaviour of \emph{Markov kernels}, also known as stochastic maps.
Given measurable spaces $\X$ and $\Y$, recall that a Markov kernel is a function
\begin{equation} \label{eq:markov-kernel}
	k : \Sigma_Y \times \X \to [0, 1],
\end{equation}
where $\Sigma_\Y$ denotes the $\sigma$-algebra of $\Y$, such that $x \mapsto k(B|x)$ is measurable for each $B \in \Sigma_\Y$, and $B \mapsto k(B|x)$ is a probability measure for each $x \in \X$.
The idea is that Markov kernels formalise the notion of a conditional probability distribution in a rigorous, measure theoretic way, thereby allowing more careful reasoning than is permitted by the usual ``density'' notation $p(y|x)$ often used in applications.
In what follows, we will denote the probability measure $B \mapsto k(B|x)$ more suggestively using ``infinitesimal'' notation.
So, for example,
\[
	\bm{Y} \sim k(dy|x)
\]
denotes a $\Y$-valued random variable sampled from $k$ given the input $x \in \X$.
In addition, rather than writing out the full signature of a Markov kernel as in \eqref{eq:markov-kernel}, we will simply write
\[
	k : \X \to \Y.
\]
This emphasises the interpretation of Markov kernels as stochastic maps, with $\X$ and $\Y$ regarded as the \emph{domain} and \emph{codomain} of $k$ respectively.
Following other work on Markov categories, we will also represent this graphically using \emph{string diagrams} \citep{selinger2010survey} as follows:
\[
	\tikzfig{basic-markov-kernel-string-diagram}
\]
We will follow the same conventions as Section 2 of \citet{cho2019disintegration} when denoting string diagrams, so that in particular these should always be read from bottom to top.
Intuitively, a string diagram represents a \emph{generative process}, with each box denoting some kind of (potentially stochastic) operation or computation, and whose wires track the flow of information, with time flowing upwards.
Such diagrams are used informally throughout the machine learning literature already when describing (for example) neural network architectures, and readers will likely find themselves comfortable with this notation after seeing the examples below.

\subsection{The Markov category $\Stoch$} \label{sec:markov-category-stoch}

The prototypical example of a Markov category is $\Stoch$, the Markov category of measurable spaces and Markov kernels.
We describe the structure of this Markov category now.
Our description will be somewhat informal, emphasising the ``sampling'' or ``generative'' interpretation of Markov kernels rather than their rigorous definition as functions of the form \eqref{eq:markov-kernel}.
A formal treatment can be found in Section 4 of \cite{fritz2020synthetic}.

Markov kernels $k : \X \to \Y$ and $m : \Y \to \Z$ can always be \emph{composed sequentially} to obtain a new Markov kernel $m \circ k : \X \to \Z$.
We sample from $(m \circ k)(dz|x)$ as follows:
\[
	\bm{Y} \sim k(dy|x) \quad\quad \bm{Z} \sim m(dz|{\bm Y}) \qquad \text{return $\bm{Z}$.}
\]
This make sense whenever the codomain of $k$ matches the domain of $m$.
In string diagrams, $m \circ k$ is represented as follows:
\[
	\tikzfig{composition-of-markov-kernels}
\]
For every measurable space $\X$, there is also an \emph{identity} kernel $\id_\X : \X \to \X$ that simply returns its input.
In string diagrams, this is drawn simply as a wire:
\[
	\tikzfig{identity-markov-kernel}
\]
However, there is more structure at play here. %
Given two measurable spaces $\X$ and $\Y$, we can always form the product measurable space, denoted $\X \otimes \Y$.
Now suppose we have two Markov kernels $k : \X \to \Y$ and $m : \U \to \V$.
We can then always obtain a new Markov kernel $k \otimes m : \X \otimes \U \to \Y \otimes \V$ between these product spaces by \emph{parallel composition}, or in other words by sampling from each kernel independently.
That is, to sample from $(k \otimes m)(dy, dv|x, u)$, we do the following:
\[
	\bm{Y} \sim k(dy|x) \quad\quad \bm{V} \sim m(dv|u) \qquad \text{return $(\bm{Y}, \bm{V})$,}
\]
where here $\bm{Y}$ and $\bm{V}$ are sampled \emph{independently}.
In string diagrams, $k \otimes m$ is denoted
\[
	\tikzfig{parallel-composition-of-markov-kernels}
\]
In addition, for any two measurable spaces $\X$ and $\Y$, there is also always a kernel $\swap_{\X,\Y} : \X \otimes \Y \to \Y \otimes \X$ that deterministically \emph{swaps} its two inputs, so that given an input $(x, y)$, it simply returns $(y, x)$.
In string diagrams, $\swap_{\X,\Y}$ is denoted suggestively as
\[
	\tikzfig{swap-markov-kernel}
\]
While this may seem like a trivial operation, these kernels perform a fundamental role: they allow us to reorganise the layout of a string diagram in any way we want, provided we do not change its overall topology (so each box is connected to the same inputs and outputs before and afterwards).
A similarly trivial but very useful operation is \emph{copying}: for any measurable space $\X$, there is a Markov kernel $\cop_\X : \X \to \X \otimes \X$ that, when given $x$ as input, returns the pair $(x, x)$.
In string diagrams, we denote this kernel as:
\[
	\tikzfig{copy-markov-kernel}
\]
Finally, let $\I$ denote the trivial measurable space consisting of the singleton set $\{\singleton\}$ equipped with the trivial $\sigma$-algebra.
Then for any measurable space $\X$, there is always a (unique) Markov kernel $\del_\X : \X \to \I$.
We represent this in string diagrams as:
\[
	\tikzfig{discard-markov-kernel}
\]
and understand this as the Markov kernel that simply \emph{discards} its input.
The trivial space $\I$ also plays another useful role: kernels $p : \I \to \X$ encode a single probability distribution $p(dx|\singleton)$, which allows us to recover \emph{unconditional} distributions as a special case of Markov kernels.
In string diagrams, we denote these kernels without an input wire as follows:
\[
	\tikzfig{unconditional-distribution}
\]

\subsection{Markov categories}

The general definition of a \emph{Markov category} axiomatises the behaviour of Markov kernels described in the previous section.
We provide a high-level overview here, and refer the reader to Definition 2.1 of \citet{fritz2020synthetic} for a rigorous definition.
Instead of measurable spaces and Markov kernels, the data of a general Markov category $\C$ consists of a collection of \emph{objects} and a collection of \emph{morphisms}.
Like we did for Markov kernels, we denote these morphisms by $k : \X \to \Y$, where $\X$ and $\Y$ are objects in $\C$ referred to as the \emph{domain} and \emph{codomain} of $k$ respectively.
We can compose morphisms sequentially (provided they are compatibly typed), and each object $\X$ comes equipped with an identity morphism $\id_\X$.
For any pair of objects $\X$ and $\Y$, we can form their \emph{monoidal product} $\X \otimes \Y$, which plays the role of the product measurable space.
We can then compose morphisms $k$ and $m$ in parallel to obtain a new morphism $k \otimes m$ between the monoidal products of their domains and codomains.
There is also a distinguished object $\I$ referred to as the \emph{monoidal unit} that plays the role of the singleton set, as well as morphisms for swapping, copying, and deleting information.
We depict all these constructions using string diagrams in just the same way as we did for Markov kernels.

The formal definition of a Markov category includes additional axioms that ensure these constructions behave like Markov kernels do.
For example, it always holds that:
\[
	\tikzfig{copy-and-swap}
\]
The first condition says that copying some input and then swapping the result is the same as just copying the input.
The second condition says that sampling from $k$ given some input and then discarding the result is the same as just discarding the input.
Both are intuitively always true for Markov kernels, at least when these are regarded informally as generative processes.
We will perform similar manipulations throughout the paper, which can also be understood by analogy with Markov kernels in this way.

\subsection{Examples of Markov categories}

The Markov category $\Stoch$ described in Section \ref{sec:markov-category-stoch}, whose objects are measurable spaces and whose morphisms are Markov kernels, will serve as a key example throughout the paper.
However, there are other interesting Markov categories beyond this.
For our purposes, it will also be relevant to consider $\TopStoch$ \citep[Example A.1.4]{fritz2023absolute}, whose objects are topological spaces and whose morphisms are \emph{continuous} Markov kernels, where a Markov kernel between topological spaces $k : \X \to \Y$ is continuous if for every open subset $U \subseteq \Y$, the function $x \mapsto k(U|x)$ is lower semicontinuous \citep[Section 4]{fritz2021probability}.
The monoidal product $\otimes$ returns the product topological space, the monoidal unit $\I$ is the singleton topological space, and the remaining components are defined analogously as for $\Stoch$.
Our main reason for interest in $\TopStoch$ is that it allows us to prove Theorem \ref{thm:topstoch-has-orbits} below, whereas we are not sure whether the analogous result holds for $\Stoch$.
For practical purposes, $\TopStoch$ is still very general, since neural networks used in practice are almost invariably continuous.

Beyond these, more basic examples of Markov categories include: $\Set$, whose objects are sets and whose morphisms are functions; $\Meas$, whose objects are measurable spaces and whose morphisms are measurable functions; and $\Top$, whose objects are topological spaces and whose morphisms are continuous functions.
For $\Set$, the monoidal product is just the cartesian product and the monoidal unit is the singleton set.
For $\Meas$ and $\Top$ these are similar, but are now equipped with suitable $\sigma$-algebras and topologies respectively.
That these are indeed Markov categories follows from Remark 2.4 of \cite{fritz2020synthetic}.
For the purposes of modelling stochastic phenomena, these examples are clearly less interesting than $\Stoch$ and $\TopStoch$, but they will be useful for examples and intuition in what follows.

\begin{remark}
	All the preceding examples are \emph{positive} Markov categories \citep[Definition 11.22]{fritz2020infinite}.
	This is a mild technical condition that, roughly speaking, ensures \emph{determinism} (see below) has various implications that we would expect.
	That $\Stoch$ is positive is shown in Example 11.25 of \citet{fritz2020infinite}, and this in turn also implies that $\TopStoch$ is positive \citep[Remark 11.26]{fritz2020infinite}.
	It is also straightforward to show that $\Set$, $\Meas$, and $\Top$ are all positive too.
	We will only make use of this condition in a few places.
	The point of this remark is that, when we do, we will not have sacrificed much generality for practical purposes.
\end{remark}

\subsection{Determinism} \label{sec:determinism}

Following Definition 10.1 of \citet{fritz2020synthetic}, we will say that a morphism $f : \X \to \Y$ in a Markov category $\C$ is \emph{deterministic} if it holds that
\[
	\tikzfig{deterministic-definition}
\]
Intuitively, this says that repeated independent samples from $f$ always produce the same result.
For example, if $f$ is a deterministic Markov kernel and we independently sample $\bm{Y}, \bm{Y}' \sim f(dy|x)$,
then we have
\[
	(\bm{Y}, \bm{Y}) \eqd (\bm{Y}, \bm{Y}'),
\]
where notice that the random variable $\bm{\Y}$ appears twice on the left-hand side.
By Remark 10.13 of \citet{fritz2020synthetic}, the deterministic morphisms themselves always form a Markov category, denoted $\Cdet$.

\begin{remark} \label{rem:measurable-function-as-markov-kernel}
	By Example 10.4 of \citet{fritz2020synthetic}, every measurable function $f : \X \to \Y$ gives rise to a deterministic Markov kernel $k_f : \X \to \Y$, where $k_f(dy|x)$ is Dirac on $f(x)$.
	More tersely, morphisms in $\Meas$ may always be \emph{lifted} to become deterministic morphisms in $\Stoch$.
	A similar story is also true for $\Top$ and $\TopStoch$, where it is straightforward to show that $k_f$ is a lower semicontinuous Markov kernel whenever $f$ is continuous.
	In these cases, we will often slightly abuse notation by writing the lifted morphism $k_f$ simply as $f$.
\end{remark}

\section{Group theory in Markov categories} \label{sec:group-theory}

We now show how various concepts from basic group theory can be considered in the context of a general Markov category.
This section can be read somewhat nonlinearly and referred back to as definitions appear in later sections.

\subsection{Groups, homomorphisms, and actions}

We begin by providing the basic definitions of group theory internal to a general Markov category.
A feature of our approach is that we require the various group operations all to be \emph{deterministic}.
This allows us to recover standard results from set-theoretic group theory that might otherwise not hold in a general Markov category.
We discuss this in more detail in Remark \ref{rem:translating-classical-theory} below.

\begin{definition} \label{def:group}
	A \emph{group} in a Markov category $\C$ consists of an object $\G$ and deterministic morphisms $\mul : \G \otimes \G \to \G$, $\e : \I \to \G$, and $\inv : \G \to \G$ in $\C$ that satisfy the following:
	\[
		\tikzfig{group-object-definition}
	\]
\end{definition}

To streamline notation, we will refer to the overall group simply as $\G$, leaving its operations implicit.
We will also mostly reuse the same symbols $\mul$, $\e$, and $\inv$ to denote these operations across all groups, even when these may be distinct.
When we need to disambiguate, we will use a subscript, e.g.\ $\mul_\G$ for the multiplication operation of $\G$.

\begin{example} \label{ex:group-in-set}
We spell out how this relates to the classical definition of a group.
Under Definition \ref{def:group}, a group in $\Set$ is a set $\G$ equipped with functions $\mul$, $\e$ and $\inv$.
Writing $\mul(g, h)$ simply as $g h$, the first axiom here says
\[
	(g h) n = g (h n)
\]
for all $g, h, n \in \G$.
In other words, $\mul$ is \emph{associative}.
Similarly, recall that $\I$ in $\Set$ is the singleton set $\{\singleton\}$, and so $\e : \I \to \G$ encodes a unique value $\e(\singleton)$.
Denoting $\e(\singleton)$ simply as $e$, the second axiom says
\[
	eg = ge = g
\]
for all $g \in \G$,
and hence $e$ serves as the \emph{unit}.
Finally, writing $\inv(g)$ as $g^{-1}$, the third axiom says
\[
	g^{-1} g = g g^{-1} = e
\]
for all $g \in \G$.
In other words, every element in $\G$ has an \emph{inverse}.
This shows that groups in $\Set$ in the sense of Definition \ref{def:group} are precisely groups in the classical sense.

In what follows, we will denote the multiplication, unit, and inversion operations of groups in $\Set$ in the same traditional way as we do here.
We will do the same for groups in $\Meas$ and $\Top$ (discussed next), where this notation also makes sense.
\end{example}

\begin{example}
	For other Markov categories whose morphisms are functions, such as $\Meas$ and $\Top$, a similar story to Example \ref{ex:group-in-set} applies.
	However, in these cases, the group operations are more restricted than in $\Set$, since they are required to be morphisms of the ambient category.
	For example, group operations in $\Meas$ are required to be measurable functions, and group operations in $\Top$ are required to be continuous functions.
	This is very natural: for example, in probability theory it is standard to study \emph{measurable groups} (e.g.\ Page 15 of \citet{kallenberg1997foundations}), which exactly correspond to groups in $\Meas$ here.

	In $\Stoch$ and $\TopStoch$, the situation is slightly different, since the group operations here are technically Markov kernels rather than functions.
	However, by Proposition \ref{prop:lifting-via-monad} below, groups in $\Meas$ and $\Top$ can always be regarded as groups in $\Stoch$ and $\TopStoch$ by lifting their operations to deterministic Markov kernels as in Remark \ref{rem:measurable-function-as-markov-kernel}.
	All the concrete examples of groups in $\Stoch$ and $\TopStoch$ that we consider will be obtained in this way.
\end{example}

\begin{example} \label{sec:trivial-group}
	In any Markov category $\C$, the unit $\I$ is always a group, regarded as the \emph{trivial group}.
	This is clear in $\Set$: the trivial group is simply the singleton set $\{\singleton\}$, and there is only one possible choice for each group operation, and the group axioms hold immediately.
	For general $\C$, the same idea holds because $\I$ is a terminal object \citep[Remark 2.3]{fritz2020synthetic}.
	Alternatively, this also follows from the case of $\Set$ by Remark \ref{rem:translating-classical-theory} below.

\end{example}

\begin{definition} \label{def:homomorphism}
	A \emph{homomorphism} in a Markov category $\C$ is a deterministic morphism $\varphi : \HH \to \G$ between groups $\G$ and $\HH$ in $\C$ that satisfies
	\begin{equation} \label{eq:homomorphism-definition}
		\tikzfig{homomorphism-definition}
	\end{equation}
\end{definition}

\begin{example}
	In $\Set$, as well as in $\Meas$ and $\Top$, the condition \eqref{eq:homomorphism-definition} translates as saying
	\[
		\varphi(h) \, \varphi(n) = \varphi(h n)
	\]
	for all $h, n \in \HH$, which recovers the usual definition of a set-theoretic homomorphism.
\end{example}

\begin{remark}
It is straightforward to check that the composition of homomorphisms is again a homomorphism, and that identities are homomorphisms.
In this way, we naturally obtain a category whose objects are groups in $\C$ and whose morphisms are homomorphisms between these.
\end{remark}

\begin{example} \label{ex:trivial-homomorphism}
	For every group $\G$ in any Markov category $\C$, there always exists a unique homomorphism $\I \to \G$, namely the unit operation $\e : \I \to \G$.
	In $\Set$, where $\I$ becomes the singleton set $\{\singleton\}$, this is a standard exercise to show.
	In turn, the same then holds for a general $\C$ by Remark \ref{rem:translating-classical-theory} below.
	In categorical terms, this says that $\I$ is an \emph{initial object} in the category of groups in $\C$.
	(It is also a \emph{terminal object} in this same category.)
\end{example}

\begin{definition} \label{def:action}
	Let $\C$ be a Markov category.
	An \emph{action} of a group $G$ on an object $\X$ in $\C$ (or simply a \emph{$\G$-action}) is a deterministic morphism $\act : \G \otimes \X \to \X$ that satisfies
	\begin{equation} \label{eq:action-of-group-object-definition}
		\tikzfig{action-of-group-object-definition}
	\end{equation}
\end{definition}

\begin{example}
	In $\Set$, denoting $\act(g, x)$ as $g \cdot x$, the first equation in \eqref{eq:action-of-group-object-definition} says that
	\[
		g \cdot (g' \cdot x) = (g g') \cdot x \qquad \text{for all $g, g' \in \G$ and $x \in \X$},
	\]
	so $\act$ is associative.
	The second axiom says
	\[
		e \cdot x = x \qquad \text{for all $x \in \X$},
	\]
	so $\act$ is unital.
	As a result, actions in $\Set$ are precisely group actions in the classical sense.

	In what follows, we will often use the notation $g \cdot x$ to denote group actions in $\Set$ more generally, as well as in $\Meas$ and $\Top$, where this notation also makes sense.
	The specific action this refers to ($\act$ here) will always be clear from context.
\end{example}

\begin{example} \label{ex:trivial-action}
	A basic example that will nevertheless be useful for us is the \emph{trivial action}.
	For any group $\G$ and object $\X$ in a Markov category $\C$, this is defined as follows:
	\[
		\tikzfig{trivial-action-definition}
	\]
	When $\C = \Set$, this becomes simply $g \cdot x = x$ for all $g \in \G$ and $x \in \X$.
	In what follows, we will use the same notation $\triv$ to denote all trivial actions, even though these are technically distinct morphisms for different choices of $\G$ and $\X$.
\end{example}

\begin{example} \label{ex:diagonal-action}
	Let $\act_\X : \G \otimes \X \to \X$ and $\act_\Y : \G \otimes \Y \to \Y$ be actions in a Markov category $\C$.
	Then we always obtain a \emph{diagonal action} defined as follows:
	\begin{equation} \label{eq:diagonal-action-definition-1}
		\tikzfig{diagonal-action-definition-1}
	\end{equation}
	When $\C = \Set$, this may be written as $g \cdot (x, y) = (g \cdot x, g \cdot y)$, where $g \in \G$, $x \in \X$, and $y \in \Y$, and some straightforward algebra shows that this is indeed an action.
	By Remark \ref{rem:translating-classical-theory} below, this then also implies that \eqref{eq:diagonal-action-definition-1} is an action for all $\C$, which is otherwise tedious to prove in terms of string diagrams directly.
\end{example}

\begin{example} \label{ex:group-action-on-itself}
	By comparing Definition \eqref{def:group} and Definition \eqref{def:action}, it can be seen that for any group $\G$ in a Markov category $\C$, its multiplication operation $\mul : \G \otimes \G \to \G$ is always an action.
	In other words, every group acts on itself by left multiplication.
	Additionally, every group acts on itself by the following action, which corresponds to right multiplication by the inverse:
	\[
		\tikzfig{right-multiplication-action}
	\]
	For example, in $\Set$, this is associative because
	\[
		g \cdot (h \cdot n) = n h^{-1} g^{-1} = n(gh)^{-1} = (gh) \cdot n
	\]
	for all $g, h, n \in \G$.
	Unitality is similarly straightforward.
	This in fact also shows that $\rmul$ is an action for all Markov categories $\C$, as we explain in Remark \ref{rem:translating-classical-theory} below.
\end{example}

\begin{remark} \label{rem:translating-classical-theory}
	All the morphisms involved in the previous definitions are deterministic. %
	At a high level, we do this in order to ensure that all the familiar results from set-theoretic group theory apply in our context also.
	In more technical terms (which are not necessary to understand in detail), we have defined groups, homomorphisms, and actions internal to the subcategory $\Cdet$ of deterministic morphisms in $\C$.
	Since $\Cdet$ is cartesian monoidal \citep[Remark 10.13]{fritz2020synthetic}, equations involving these constructions in $\Set$ can be lifted directly to equations in $\Cdet$ (and hence $\C$) also.
	This is a standard technique that makes use of the Yoneda embedding: the idea is that a diagram in $\C$ commutes if and only if its image under the Yoneda embedding does, which reduces to a question about set-theoretic groups, homomorphisms, and actions.
	Note however that this approach relies on the fact that
	\[
		\Cdet(\X, \Y \otimes \Z) \cong \Cdet(\X, \Y) \times \Cdet(\X, \Z),
	\]
	which holds because $\Cdet$ is cartesian monoidal.
	The corresponding statement is not always true for $\C$ itself, and more care is required when reasoning by analogy with $\Set$ in that case.

	As an example of this technique, recall that for all groups $\G$ in $\Set$ and all actions $\act : \G \otimes \X \to \X$ of $\G$, where $\X$ may be any set, we have
	\begin{equation} \label{eq:action-by-inverse-is-identity}
		g \cdot (g^{-1} \cdot x) = (g g^{-1}) \cdot x = e \cdot x = x \qquad \text{for all $g \in \G$ and $x \in \X$.}
	\end{equation}
	By applying the Yoneda embedding in this way, the corresponding equation in $\Cdet$ then follows automatically, namely:
	\begin{equation} \label{eq:action-by-inverse-is-identity-in-C}
		\tikzfig{action-of-inverse-is-identity}
	\end{equation}
	We will often use this technique to simplify our presentation, providing set-theoretic justifications like \eqref{eq:action-by-inverse-is-identity} for equations like \eqref{eq:action-by-inverse-is-identity-in-C} that hold in a general Markov category.
	If this appears too opaque, then our set-theoretic proofs may alternatively be thought of as a convenient shorthand for the ``real'' proofs in terms of string diagrams (which can be more tedious to write out).\footnote{The cartesian monoidal structure of $\Cdet$ in fact allows this correspondence to be made rigorous also. See e.g.\ \citet{leinster2008doing} for a discussion.}
	For example, \eqref{eq:action-by-inverse-is-identity} translates directly to string diagrams as follows:
	\[
		\tikzfig{string-diagrams-example-proof}
	\]
	Here we apply the associativity axiom from Definition \ref{def:action}, then the inversion axiom from Definition \ref{def:group}, and finally the unital axiom from Definition \ref{def:action}.
\end{remark}

\subsection{Equivariance and invariance}

The previous definitions suggest an obvious notion of equivariance given next.
Notice that, whereas before we defined everything in terms of deterministic morphisms, the notion of equivariance here now applies to all morphisms in $\C$.

\begin{definition} \label{def:equivariance}
	Suppose $\act_\X : \G \otimes \X \to \X$ and $\act_\Y : \G \otimes \Y \to \Y$ are actions in a Markov category $\C$.
	A morphism $k : \X \to \Y$ in $\C$ is \emph{$\G$-equivariant} (or simply \emph{equivariant}) with respect to $\act_\X$ and $\act_\Y$ if it holds that
	\begin{equation} \label{eq:equivariance-definition}
		\tikzfig{equivariance-definition}
	\end{equation}
	We will say that $k$ is \emph{invariant} if this holds when $\act_\Y$ is the trivial action $\triv$.
\end{definition}

\begin{example}
	In $\Set$, the morphism $k$ is just a function $\X \to \Y$, and $k$ is equivariant if
	\[
		k(g \cdot x) = g \cdot k(x)
	\]
	for all $g \in \G$ and $x \in \X$.
	For invariance, this becomes instead $k(g \cdot x) = k(x)$.
	In both cases, we recover the classical definitions.
\end{example}

\begin{example} \label{ex:stochastic-equivariance}
	Importantly for our purposes, Definition \ref{def:equivariance} also applies directly to the stochastic setting.
	In $\Stoch$ (for example), the condition \eqref{eq:equivariance-definition} says that
	\[
		\int k(B|x') \, \act_\X(dx'|g, x) = \int \act_Y(B|g, y) \, k(dy|x)
	\]
 	for all measurable $B \subseteq \Y$, $g \in \G$, and $x \in \X$.
	Our primary case of interest obtains $\act_\X$ and $\act_\Y$ by lifting the actions of some group in $\Meas$ via Proposition \ref{prop:lifting-via-monad} below.
	In this case, the condition \eqref{eq:equivariance-definition} becomes
	\[
		k(dy|g \cdot x) = g \cdot k(dy|x)
	\]
	for all $g \in \G$ and $x \in \X$, where $g \cdot k(dy|x)$ denotes the pushforward of $k(dy|x)$ by the function $y \mapsto g \cdot y$.
	This recovers the original equivariance condition \eqref{eq:equivariance-stochastic-markov-desideratum} that we gave for stochastic neural networks at the beginning of the paper.
\end{example}

In the machine learning literature, stochastic equivariance is often defined in a slightly different way.
In particular, a conditional \emph{density} $p(y|x)$ is said to be equivariant if 
\begin{equation} \label{eq:stochastic-equivariance-density-definition}
	p(g \cdot y | g \cdot x) = p(y|x)
\end{equation}
for all $g \in \G$, $x \in \X$, and $y \in \Y$ (see e.g.\ Proposition 2.1 of \citet{xu2022geodiff}).
This is a special case of Definition \ref{def:equivariance}, as the following result shows.

\begin{proposition} \label{prop:stochastic-equivariance-density}
	Let $p(y|x)$ be conditional density given $x \in \X$ with respect to some base measure $\mu$ on $\Y$, and denote by $k : \X \to \Y$ the Markov kernel this induces, namely
	\[
		k(B|x) \coloneqq \int_B p(y|x) \, \mu(dy)
	\]
 	where $B \subseteq \Y$ is measurable and $x \in \X$.
	Suppose $\G$ is a group acting on $\X$ and $\Y$ in $\Meas$ such that \eqref{eq:stochastic-equivariance-density-definition} holds, and that moreover $g \cdot \mu = \mu$ for all $g \in \G$, where $g \cdot \mu$ denotes the pushforward of $\mu$ by the function $y \mapsto g \cdot y$.
	Then $k$ is equivariant in the sense of Definition \ref{def:equivariance}, where $\act_\X$ and $\act_\Y$ are obtained by lifting the actions of $\G$ to become Markov kernels via Proposition \ref{prop:lifting-via-monad} below. 
\end{proposition}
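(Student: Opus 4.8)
The plan is to unwind the categorical equivariance condition into a concrete identity between Markov kernels and then verify that identity by a single change of variables. First I would apply Example~\ref{ex:stochastic-equivariance}: since $\act_\X$ and $\act_\Y$ are obtained by lifting measurable actions via Proposition~\ref{prop:lifting-via-monad}, the condition \eqref{eq:equivariance-definition} for $k$ is equivalent to $k(dy \mid g\cdot x) = (g\cdot k)(dy \mid x)$ for all $g\in\G$, $x\in\X$, where $(g\cdot k)(dy\mid x)$ is the pushforward of $k(dy\mid x)$ along $y\mapsto g\cdot y$. Unpacking this pushforward, the goal becomes the elementary statement
\[
	k(B \mid g\cdot x) \;=\; k(g^{-1}\cdot B \mid x) \qquad\text{for all measurable } B\subseteq\Y,\ g\in\G,\ x\in\X,
\]
where $g^{-1}\cdot B \coloneqq \{\, y\in\Y : g\cdot y \in B \,\}$; note this set is measurable because $y\mapsto g\cdot y$ is a measurable map, being a slice of the measurable action morphism $\act_\Y$.

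To establish this identity I would write $k(B\mid g\cdot x) = \int_\Y \ind_B(y)\, p(y\mid g\cdot x)\,\mu(dy)$ and use the hypothesis $g\cdot\mu = \mu$, i.e.\ that $\mu$ is its own pushforward along $\phi_g : y\mapsto g\cdot y$. The change-of-variables formula for pushforward measures then gives
\[
	\int_\Y \ind_B(y)\, p(y\mid g\cdot x)\,\mu(dy) \;=\; \int_\Y \ind_B(g\cdot y')\, p(g\cdot y'\mid g\cdot x)\,\mu(dy').
\]
Now $\ind_B(g\cdot y') = \ind_{g^{-1}\cdot B}(y')$, and by hypothesis \eqref{eq:stochastic-equivariance-density-definition} we have $p(g\cdot y'\mid g\cdot x) = p(y'\mid x)$; substituting both identities turns the right-hand side into $\int_{g^{-1}\cdot B} p(y'\mid x)\,\mu(dy') = k(g^{-1}\cdot B\mid x)$, which is exactly what was required. (Throughout, $p$ is taken to be jointly measurable, as is implicit in the hypothesis that it induces a Markov kernel.)

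There is no substantial obstacle here: the argument uses each of the two hypotheses exactly once --- invariance of $\mu$ to perform the substitution, and equivariance of the density to collapse the shifted argument --- leaving little room for error. The only point requiring mild care is the bookkeeping around pushforwards: correctly identifying the measure-theoretic pushforward appearing in Example~\ref{ex:stochastic-equivariance} with the set operation $B\mapsto g^{-1}\cdot B$, and checking the direction of the change of variables. Verifying that the relevant preimages are measurable is immediate from the fact that actions in $\Meas$ are, by definition, measurable maps.
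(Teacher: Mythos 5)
Your proposal is correct and follows essentially the same route as the paper's proof: reduce via Example \ref{ex:stochastic-equivariance} to the identity $k(B \mid g \cdot x) = k(g^{-1} \cdot B \mid x)$, then establish it by substituting $g \cdot \mu = \mu$ and changing variables (the paper's ``law of the unconscious statistician'' step), applying the density equivariance \eqref{eq:stochastic-equivariance-density-definition} to finish. The only cosmetic difference is that you define $g^{-1} \cdot B$ as a preimage while the paper writes it as the image under $g^{-1}$, which coincide since the action is by bijections.
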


\begin{proof}
	See Section \ref{sec:proof-stochastic-equivariance-density} of the Appendix.
\end{proof}

The condition $g \cdot \mu = \mu$ used here is often not made explicit in the machine learning literature.
This is automatically satisfied in many cases that have appeared (e.g.\ \citet{xu2022geodiff,hoogeboom2022equivariant}), which take $\mu$ to be the Lebesgue measure and $\G$ to be some group of transformations with unit Jacobian, such as the orthogonal group.
However, this condition may not hold for more general group actions such as scale transformations, in which case equivariance in the sense of Definition \ref{def:equivariance} need not follow from \eqref{eq:stochastic-equivariance-density-definition}.

\subsection{Compactness}

Compact groups serve as a particularly useful class of groups in many applications.
For practical purposes, the most important feature of these is that they admit a \emph{Haar measure}, which is thought of as a uniform distribution over elements of the group.
We use this as the basis for our definition of compactness in a general Markov category.

\begin{definition} \label{def:compact-group}
	A group $\G$ in a Markov category $\C$ is \emph{compact} if there exists a morphism $\lambda : \I \to \G$ in $\C$, referred to as its \emph{Haar measure}, that satisfies
	\begin{equation} \label{eq:haar-measure-definition}
		\tikzfig{haar-measure-definition}
	\end{equation}
\end{definition}

\begin{example}
	Let $\G$ be a group in $\Meas$, and lift this to a group in $\Stoch$ in the usual way by Proposition \ref{prop:lifting-via-monad} below.
	The condition \eqref{eq:haar-measure-definition} says that if $\bm{G} \sim \haar$, then
	\[
		g\bm{G} \eqd \bm{G}
	\]
	for all $g \in \G$. %
	This recovers the usual definition of a (normalised) Haar measure.
	A standard result shows that a unique such $\haar$ exists for every compact second-countable Hausdorff topological group \citep[Theorem 2.27]{kallenberg1997foundations}.
	In particular, this is true for all finite groups, as well as the classical compact matrix groups (such as the orthogonal group).
\end{example}

\subsection{Semidirect products}

Many groups of interest in applications arise as semidirect products of simpler groups.
The following definition allows us to consider these in an abstract Markov category without requiring further assumptions.
We give the definition before showing how it corresponds to the usual set-theoretic definition in Example \ref{ex:outer-semidirect-product-in-set} below.

\begin{definition} \label{def:semidirect-product-definition}
	Let $\C$ be a Markov category, $\N$ and $\HH$ groups in $\C$, and $\actr : \HH \otimes \N \to \N$ some action of $\HH$ on $\N$ such that
	\begin{equation} \label{eq:semidirect-product-conditions}
		\tikzfig{equivariant-group-mul-1}
	\end{equation}
	The \emph{semidirect product} of $\N$ and $\HH$, denoted $\N \rtimes_\actr \HH$, is the group in $\C$ whose underlying object is $\N \otimes \HH$, and whose group operations $\mul$, $\e$, and $\inv$ are defined respectively as
	\begin{equation} \label{eq:semidirect-product-group-operations}
		\tikzfig{semidirect-product-structure-maps-minimal}
	\end{equation}
\end{definition}

Technically, we should check that these operations always satisfies the group axioms.
By Remark \ref{rem:translating-classical-theory}, since all the morphisms involved are deterministic, it suffices to do so in $\Set$.
The following example shows how Definition \ref{def:semidirect-product-definition} in $\Set$ corresponds to the usual set-theoretic definition of outer semidirect products, after which this becomes standard to prove.

\begin{example} \label{ex:outer-semidirect-product-in-set}
	In $\Set$, an (outer) semidirect product is usually defined in terms of a homomorphism $\varphi : \HH \to \Aut(\N)$, $h \mapsto \varphi_h$, where $\Aut(\N)$ denotes the automorphism group of $\N$.
	(Recall that this is the group of invertible homomorphisms $\N \to \N$ equipped with function composition as its multiplication operation.)
	The idea behind Definition \ref{def:semidirect-product-definition} is that every such $\varphi$ canonically gives rise to an action $\actr$ that satisfies \eqref{eq:semidirect-product-conditions} and vice versa.
	For example, in one direction we may uncurry $\varphi$ to obtain
	\begin{equation} \label{eq:semidirect-product-action-from-varphi}
 		\actr(h, n) \coloneqq \varphi_h(n)
	\end{equation}
	for $h \in \HH$ and $n \in \N$.
	That $\actr$ is indeed an action follows from the fact that $\varphi$ is a homomorphism, which means $\varphi_{e_\HH} = \id_\N$ and $\varphi_h \circ \varphi_{h'} = \varphi_{h h'}$, and moreover $\actr$ satisfies \eqref{eq:semidirect-product-conditions} because each $\varphi_h : \N \to \N$ is itself a homomorphism.
	Conversely, by flipping \eqref{eq:semidirect-product-action-from-varphi} left-to-right, each action $\actr$ that satisfies \eqref{eq:semidirect-product-conditions} also defines a homomorphism $\varphi : \HH \to \Aut(\N)$, as may be checked.
	Under this correspondence, the group operations \eqref{eq:semidirect-product-group-operations} then also correspond to their usual set-theoretic definitions.
	For example, multiplication may be written as
	\[
		(n, h) \, (n', h') = (n \, \actr(h, n'), h h')
	\]
	where $n, n' \in \N$ and $h, h' \in \HH$, which recovers the usual definition by replacing $\actr(h, n')$ with $\varphi_h(n')$.
	The usual argument for outer semidirect products in $\Set$ now translates directly to show that the group axioms hold, although we omit the details.

\end{example}

\begin{example} \label{ex:euclidean-groups}

	The Euclidean groups $\Euc(d)$ and $\SE(d)$ are well-known examples of semidirect products.
	We show how these can be expressed as instances of Definition \ref{def:semidirect-product-definition} in $\Set$ (although this applies equally in $\Meas$ and $\Top$).
	Let $\T^d$ be the group of translations, or in other words $\R^d$ equipped with vector addition, and let $\Orth(d)$ be the orthogonal group.
	We then obtain an action $\actr : \Orth(d) \otimes \T^d \to \T^d$ in the obvious way with $Q \cdot t \coloneqq Q t$.
	The condition \eqref{eq:semidirect-product-conditions} becomes $Q \cdot (t + t') = Q \cdot t + Q \cdot t'$, which is clearly satisfied.
	This allows us to obtain the Euclidean group as follows:
	\[
	\Euc(d) \coloneqq \T_d \rtimes_\actr \Orth(d).
	\]
	For example, the group multiplication in \eqref{eq:semidirect-product-group-operations} becomes
	\[
		(t, Q) \, (t', Q') = (t + Q t', Q Q'),
	\]
	which recovers the usual definition.
	By substituting the special orthogonal group $\SO(d)$ for $\Orth(d)$, we also obtain the special Euclidean group $\SE(d)$ in the same way.
\end{example}

\begin{remark}
	The condition \eqref{eq:semidirect-product-conditions} amounts to saying that $\N$ is a group in $\C$ whose multiplication operation is $\HH$-equivariant, where $\HH$ acts on $\N$ diagonally as in Example \ref{ex:diagonal-action}.
	For example, in $\Set$, writing $\mul_\N$ as $m$, this condition becomes
	\[
		m(h \cdot n, h \cdot n') = h \cdot m(n, n')
	\]
 	for all $h \in \HH$ and $n, n' \in \N$.
	By some straightforward manipulations, this in turn implies that the inversion and identity operations of $\N$ are also $\HH$-equivariant in a particular sense: we obtain $h \cdot n^{-1} = (h \cdot n)^{-1}$ and $h \cdot e_\N = e_\N$ in $\Set$, and similar equations more generally.
	This gives rise to a particularly streamlined way to describe semidirect products: in terminology from Definition \ref{def:markov-category-of-equivariant-morphisms} below, each semidirect product $\N \rtimes_\actr \HH$ in $\C$ corresponds to a group $(\N, \actr)$ in the Markov category $\C^\HH$ of $\HH$-equivariant morphisms, and vice versa.
	This is known for example in the context of algebraic topology (see Lemma 2.1.4 of \citet{sati2022equivariant}), and shows that Definition \ref{def:semidirect-product-definition} arises very naturally, despite its apparently complex structure.

\end{remark}

A familiar result says that actions of a classical set-theoretic semidirect product $\N \rtimes_\actr \HH$ can always be computed as some $\HH$-action followed by some $\N$-action.
This translates directly to Markov categories as follows.

\begin{proposition} \label{prop:semidirect-product-actions}
	Let $\C$ be a Markov category.
	Every action of a semidirect product $\N \rtimes_{\actr} \HH$ on an object $\X$ in $\C$ can be written in the form
	\begin{equation} \label{eq:semidirect-product-action-correspondence}
		\tikzfig{semidirect-product-action-correspondence}
	\end{equation}
	where $\act_\HH : \HH \otimes \X \to \X$ and $\act_\N : \N \otimes \X \to \X$ are actions that satisfy
	\[
		\tikzfig{semidirect-product-action-condition}
	\]
	Conversely, for any actions $\act_\HH$ and $\act_\N$ in $\C$ with these properties, \eqref{eq:semidirect-product-action-correspondence} defines an action of $\N \rtimes_{\actr} \HH$ on $\X$.
\end{proposition}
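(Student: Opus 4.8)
The plan is to reduce the whole statement to the corresponding classical fact about set-theoretic groups using the technique of Remark~\ref{rem:translating-classical-theory}. Every morphism that appears here is deterministic: the actions $\act$, $\actb$ and the hypothesized action $\gamma$ of $\N \rtimes_\actr \HH$ are deterministic by Definition~\ref{def:action}, and the composite displayed in~\eqref{eq:semidirect-product-action-correspondence} is built from these together with copying and the group structure maps, all of which live in $\Cdet$. Since $\Cdet$ is cartesian monoidal, the Yoneda embedding $\Cdet \hookrightarrow [\Cdet^{\op}, \Set]$ is fully faithful and preserves finite products, so it sends $\N$, $\HH$ and $\N \rtimes_\actr \HH$ to genuine (set-theoretic) groups and $\act$, $\actb$, $\gamma$ to genuine actions; moreover, because the structure maps of $\N \rtimes_\actr \HH$ are defined by the formulas~\eqref{eq:semidirect-product-group-operations}, the embedding identifies $\N \rtimes_\actr \HH$ with the ordinary outer semidirect product (in the sense of Example~\ref{ex:outer-semidirect-product-in-set}) of the images of $\N$ and $\HH$ taken along the image of $\actr$. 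Consequently it is enough to establish both directions of the correspondence in $\Set$, where they are standard.

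For the converse direction, suppose in $\Set$ that $\act$, $\actb$ are actions and that $\actb$ satisfies the depicted condition, which reads $\actb(\actr(h,n), \act(h,x)) = \act(h, \actb(n,x))$ for all $h \in \HH$, $n \in \N$, $x \in \X$. Define $\gamma$ by~\eqref{eq:semidirect-product-action-correspondence}, i.e.\ $\gamma((n,h), x) = \actb(n, \act(h,x))$. Unitality $\gamma((e_{\N}, e_{\HH}), x) = x$ follows at once from unitality of $\act$ and then of $\actb$. For associativity, expanding the multiplication rule $(n,h)(n',h') = (n \mul_{\N} \actr(h,n'),\, h h')$ of Definition~\ref{def:semidirect-product-definition} and using that $\act$ and $\actb$ are actions, both $\gamma((n,h)(n',h'),x)$ and $\gamma((n,h),\gamma((n',h'),x))$ reduce to expressions that agree precisely by the stated compatibility condition applied with $\act(h',x)$ in place of $x$. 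Thus $\gamma$ is an action of $\N \rtimes_\actr \HH$, and it is deterministic because it is a composite of deterministic morphisms.

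For the forward direction, let $\gamma$ be an action of $\N \rtimes_\actr \HH$ on $\X$ and set $\actb := \gamma \circ (\iota_{\N} \otimes \id_{\X})$ and $\act := \gamma \circ (\iota_{\HH} \otimes \id_{\X})$, where $\iota_{\N} : \N \to \N \rtimes_\actr \HH$ and $\iota_{\HH} : \HH \to \N \rtimes_\actr \HH$ are the evident inclusions (in $\Set$, $n \mapsto (n, e_{\HH})$ and $h \mapsto (e_{\N}, h)$), which are homomorphisms because $\actr$ is an action (so $\actr(e_{\HH}, -) = \id_{\N}$) and because each $\actr(h, -)$ is a homomorphism and so preserves the identity of $\N$ (by~\eqref{eq:semidirect-product-conditions}). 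Since restriction of an action along a homomorphism is again an action, $\actb$ is an $\N$-action and $\act$ is an $\HH$-action; from the decomposition $(n,h) = (n, e_{\HH})(e_{\N}, h)$ and associativity of $\gamma$ one gets $\gamma((n,h),x) = \actb(n, \act(h,x))$, i.e.\ $\gamma = \actb \rtimes_\actr \act$; and the compatibility condition on $\actb$ then follows from associativity of $\gamma$ by the same computation as above. (One may also check that the two constructions are mutually inverse, since restricting $\actb \rtimes_\actr \act$ along $\iota_{\N}$ and $\iota_{\HH}$ returns $\actb$ and $\act$ by unitality, so the correspondence is in fact a bijection, although only the two implications are asked for.) Each of these is an equation between deterministic morphisms, so by the first paragraph it suffices to verify it in $\Set$, which is the classical argument just given.

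I do not expect a genuine obstacle. The one place needing a little care is the claim in the first paragraph that the Yoneda embedding carries the abstract semidirect product $\N \rtimes_\actr \HH$ to the set-theoretic semidirect product of the images, as this is what licenses the appeal to the classical result, together with the (again deterministic, hence $\Set$-reducible) verification that $\iota_{\N}$ and $\iota_{\HH}$ are homomorphisms. Everything else is routine bookkeeping with the group and action axioms.
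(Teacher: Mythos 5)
Your proposal is correct and takes essentially the same route as the paper: the paper's proof is precisely "apply the correspondence of Example \ref{ex:outer-semidirect-product-in-set}, verify the standard claim in $\Set$, and lift via Remark \ref{rem:translating-classical-theory}," which is what you do, only with the routine set-theoretic computations written out in full.
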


\begin{proof}
	After applying the correspondence from Example \ref{ex:outer-semidirect-product-in-set}, this becomes standard to show in $\Set$.
	The result then applies more generally by Remark \ref{rem:translating-classical-theory}.
\end{proof}

\begin{example} 
	Continuing Example \ref{ex:euclidean-groups}, suppose we now have an action of $\SE(d) = \T_d \rtimes \SO(d)$ on an arbitrary object in $\C$.
	Proposition \ref{prop:semidirect-product-actions} says this can always be computed as an action of $\SO(d)$ followed by an action of $\T_d$.
	Adopting the usual interpretation of $\SO(d)$ as rotation matrices, this amounts to a rotation followed by a translation, which recovers the usual result.
\end{example}

As for the classical case, equivariance with respect to the action of a semidirect product is equivalent to equivariance with respect to its two induced actions separately.

\begin{proposition} \label{prop:semidirect-product-equivariance}
	Let $\act_\X$ and $\act_\Y$ be actions of a semidirect product $\N \rtimes_\actr \K$ on objects $\X$ and $\Y$ in a Markov category $\C$.
	Then $k : \X \to \Y$ in $\C$ is equivariant with respect to $\act_\X$ and $\act_\Y$ if and only if it is equivariant with respect to both 1) $\act_{\X,\HH} $ and $\act_{\Y,\HH}$, and 2) $\act_{\X,\N}$ and $\act_{\Y,\N}$, where here $\act_{\X,\HH}$ and $\act_{\X,\N}$ denote the $\HH$- and $\N$-actions obtained from $\act_\X$ via Proposition \ref{prop:semidirect-product-actions}, and similarly for $\act_\Y$.
\end{proposition}

\begin{proof}
	We sketch the main idea.
	That $\N$- and $\K$-equivariance implies $(\N \rtimes_\actr \K)$-equivariance follows immediately from two applications of the definition of equivariance.
	Conversely, since group actions are unital, every action of the form \eqref{eq:semidirect-product-action-correspondence} becomes an $\N$-action when $e_\K$ is attached its $\K$-input, and a $\K$-action when $e_\N$ is attached to its $\N$-input.
	As such, $(\N \rtimes_\actr \K)$-equivariance implies both $\N$- and $\K$-equivariance.
\end{proof}

\subsection{Direct products}

An important special case of semidirect products is the direct product.
We give this its own definition as follows.

\begin{definition} \label{def:direct-product}
	Let $\G$ and $\HH$ be groups in a Markov category $\C$.
	The \emph{direct product} of $\G$ and $\HH$, denoted $\G \times \HH$, is the group in $\C$ consisting of $\G \otimes \HH$ equipped with group operations $\mul$, $\e$, and $\inv$ defined respectively as
	\[
		\tikzfig{direct-product-operations}
	\]
\end{definition}

\begin{remark} \label{rem:direct-product-as-semidirect-product}
	As classically, direct products can be regarded as semidirect products where the action of $\HH$ on $\G$ is trivial.
	That is, $\G \times \HH = \G \rtimes_\triv \HH$.
	The equivariance condition \eqref{eq:semidirect-product-conditions} is then immediate, which shows $\G \times \HH$ is indeed always a group since $\G \rtimes_\triv \HH$ is.
\end{remark}

\begin{example}
	Suppose $\G$ and $\HH$ are groups in $\Set$.
	Then $\G \times \HH$ consists of pairs $(g, h)$ with $g \in \G$ and $h \in \HH$.
	Multiplication, identities, and inversions are all computed componentwise, so that for example
	\[
		(g, h) \, (g', h') = (g g', h h')
	\]
	where $g, g' \in \G$ and $h, h' \in \HH$, which recovers the usual definition.
\end{example}

\begin{remark} \label{rem:direct-product-actions}
	By specialising Proposition \ref{prop:semidirect-product-actions} to this context, it holds that actions of a direct product $\G \times \HH$ on an object $\X$ correspond bijectively to pairs of actions $\act_\G : \G \otimes \X \to \X$ and $\act_\HH : \HH \otimes \X \to \X$ such that
	\[
		\tikzfig{direct-product-action-condition}
	\]
	That is, an action of $\G \times \HH$ is simply a pair of an $\G$- and a $\HH$-action that commute with each other.
	Likewise, by specialising Proposition \ref{prop:semidirect-product-equivariance}, it holds that equivariance with respect to $\G \times \HH$ is equivalent to equivariance with respect to $\G$ and $\HH$ separately.
\end{remark}

\subsection{Orbits} \label{sec:orbits}

A classical result from group theory is that the action of a group $\G$ on a set $\X$ always induces an equivalence relation on $\X$, with $x \sim x'$ when $x = g \cdot x'$ for some $g \in \G$.
The equivalence classes $[x]$ obtained in this way are referred to as \emph{orbits}.
The following definition allows us to talk about orbits in the context of Markov categories also.

\begin{definition} \label{def:orbit-map}
	Let $\act : \G \otimes \X \to \X$ be an action in a Markov category $\C$.
	An \emph{orbit map} is a deterministic morphism $q : \X \to \X/\G$, where $\X/\G$ may be any object of $\C$, such that the following is a coequaliser diagram in $\C$
	\begin{equation} \label{eq:orbit-map-coequaliser}
		\begin{tikzcd}%
			\G \otimes \X \ar[shift left=1.5]{r}{\act} \ar[shift right=1.5,swap]{r}{\triv} & \X \ar{r}{q} & \X/\G
		\end{tikzcd}
	\end{equation}
	that is moreover preserved by the functor $(-) \otimes \Y$ for every $\Y$ in $\C$.
\end{definition}

Coequalisers are a standard construction in category theory, and are often used as a way to talk about \emph{quotients} in an abstract setting.
We provide the main idea here.
For \eqref{eq:orbit-map-coequaliser} to be a coequaliser diagram, two conditions must hold.
First, $q$ must satisfy $q \circ \act = q \circ \triv$, where recall that $\triv$ is the trivial action from Example \ref{ex:trivial-action}.
In string diagrams this becomes
\[
	\tikzfig{orbit-map-invariance}
\]
which just says that $q$ is \emph{invariant} with respect to $\act$.
In addition, $q$ must be \emph{universal} with this property in the sense that if $k : \X \to \Z$ is any other invariant morphism in $\C$, then there is a unique morphism $k/\G : \X/\G \to \Z$ such that the following diagram commutes:
\[
	\begin{tikzcd}
		\X \ar{r}{q} \ar[swap]{dr}{k} & \X/\G \ar[dashed]{d}{k/\G} \\
		& \Z
	\end{tikzcd}
\]
Intuitively, this means that $q$ is the most information-preserving invariant morphism out of $\X$, since any other invariant morphism can be computed in some way in terms of its output.
In practice, this amounts to saying that $q$ takes a unique value on each orbit induced by the action $\act$ (see e.g.\ Example \ref{ex:orbit-map-example} and Remark \ref{rem:orbit-map-need-not-be-equivalence-classes} below).
For a more detailed discussion of coequalisers, we refer the reader to Section 3.4.2 of \citet{perrone2021notes}.

\begin{remark}
	The requirement that $q$ is deterministic is a fairly natural one, as the examples below will show.
	Our main technical reason for imposing it is for the proof of Proposition \ref{prop:induced-action-on-cosets} below.
	In practice, determinism is not a major restriction.
	For example, if $\C$ is positive and \eqref{eq:orbit-map-coequaliser} is a coequaliser diagram, then $q$ is automatically deterministic, and so this does not need to be checked separately.
	This is true more generally for all coequalisers of deterministic morphisms: see Proposition \ref{prop:coequalisers-in-deterministic-category} in the Appendix.
\end{remark}

\begin{remark} \label{rem:product-of-orbit-maps-is-orbit-map}
	We briefly comment on the preservation condition in Definition \ref{def:orbit-map}.
	For practical purposes, this may be understood as a mild technical requirement that ensures parallel compositions of orbit maps are well-behaved.
	For various $\C$ of interest we can even avoid checking this condition altogether (see Remark \ref{rem:orbit-map-existence-implies-preservation} below).
	Its key implication is that orbit maps are closed under parallel composition with identity morphisms (which are themselves always orbit maps by Example \ref{ex:trivial-orbit-map} below).
	In other words, if $q$ is an orbit map with respect to $\act : \G \otimes \X \to \X$, then $q \otimes \id_\Y$ is also an orbit map, where $\G$ acts on $\X$ via $\act$ and on $\Y$ trivially.
	See Proposition \ref{prop:orbit-map-preserved-action} in the Appendix for a proof.
\end{remark}

\begin{example} \label{ex:orbit-map-example}
	Consider what Definition \ref{def:orbit-map} means in $\Set$.
	Let $\X/\G \coloneqq \{[x] \mid x \in \X\}$ be the set of orbits induced by $\act$, and let $q : \X \to \X/\G$, $x \mapsto [x]$ map each element to its orbit.
	We claim that $q$ is an orbit map.
	Certainly it is deterministic.
	We need to show that it is also a coequaliser, or in other words that it is invariant with respect to $\act$, and universal.
	Invariance is straightforward since
	\[
		q(g \cdot x) = [g \cdot x] = [x] = q(x)
	\]
	by the definition of orbits.
	For universality, if $f$ is any other invariant function, then $f/\G : \X/\G \to \Y$ defined as $[x] \mapsto f(x)$ is well-defined and invariant, and moreover
	\[
		f(x) = (f/\G)([x]) = (f/G)(q(x)),
	\]
	which shows $f = f/\G \circ q$.
	Moreover, since $q$ is surjective, $f/\G$ is the only function that can satisfy this, and so it follows that $q$ is a coequaliser.

	Finally, to be an orbit map, we must show that $q$ is preserved by every functor $(-) \otimes \Y$.
	Recall that this functor sends each object $\Z$ in $\C$ to $\Z \otimes \Y$, and each morphism $f$ to $f \otimes \id_\Y$.
	The image of \eqref{eq:orbit-map-coequaliser} under this functor is therefore
	\begin{equation}
		\begin{tikzcd}[column sep=3em]
			(\G \otimes \X) \otimes \Y \ar[shift left=1.5]{r}{\act \otimes \id_\Y} \ar[shift right=1.5,swap]{r}{\triv \otimes \id_\Y} & \X \otimes \Y \ar{r}{q \otimes \id_\Y} & \X/\G \otimes \Y
		\end{tikzcd}
	\end{equation}
	and we want to show that this is also a coequaliser diagram.
	This can be done using a similar argument as we just gave.
	(In fact, what we showed is really a special case of this more general statement with $\Y \coloneqq \I$ the singleton set.)
	As a result, $q$ is an orbit map.
\end{example}

\begin{example} \label{ex:trivial-orbit-map}
	For any group $\G$ activing trivially on an object $\X$ in a Markov category $\C$, the identity $\id_\X$ is always an orbit map.
	Indeed, it is easily checked that the following is a coequaliser diagram:
	\begin{equation} \label{eq:trivial-orbit-map}
		\begin{tikzcd}
			\G \otimes \X \ar[shift left=1.5]{r}{\triv} \ar[shift right=1.5,swap]{r}{\triv} & \X \ar{r}{\id_\X} & \X
		\end{tikzcd}
	\end{equation}
	Moreover, this is preserved by every functor $(-) \otimes \Y$ since we have $\id_{\X} \otimes \id_{\Y} = \id_{\X \otimes \Y}$. %
	Although trivial, this example will nevertheless be useful for us in what follows.
\end{example}

\begin{remark} \label{rem:orbit-map-need-not-be-equivalence-classes}
	Often there will exist more than one possible choice of orbit map for the same group action.
	For example, consider the rotation group $\G \coloneqq \SO(2)$ acting on the plane $\R^2$ in $\Set$.
	By checking the various requirements, it follows that the function $r : \R^2 \to [0, \infty)$, $x \mapsto \norm{x}$ is an orbit map.
	This is a different construction to the orbit map $q : \X \to \X/\G$ from Example \ref{ex:orbit-map-example}, where $\X/\G$ was defined as a set of equivalence classes.
	However, in another sense, these two orbit maps are equivalent: a straightforward argument shows that there exists a unique bijection $\X/\G \to [0, \infty)$ such that the following commutes:
	\[
		\begin{tikzcd}
			X \ar{r}{q} \ar[swap]{rd}{r} & \X/\G \ar[dashed]{d}{\cong} \\
			& \left[0, \infty\right)
		\end{tikzcd}
	\]
	In this way, $r$ and $q$ describe the same information, just with different ``encodings''.
	More generally, as a consequence of the Yoneda Lemma, if $q$ and $r$ are orbit maps with respect to the same group action in a Markov category $\C$, then there always exists a unique isomorphism $\phi$ such that $r = \phi \circ q$ holds.
	For practical purposes, this means we are free to choose whichever orbit map is most convenient, and will not sacrifice generality in doing so.
\end{remark}

It is not guaranteed that a Markov category will admit all orbit maps: there may exist some $\act$ for which a coequaliser diagram \eqref{eq:orbit-map-coequaliser} does not exist, or is not preserved by every functor $(-) \otimes \Y$.
We therefore consider examples in which orbit maps are always available.

\begin{example} \label{ex:set-has-orbits}
	$\Set$ admits all orbit maps.
	This is shown directly by Example \ref{ex:orbit-map-example}.
	More abstractly, this can be seen by noting that $\Set$ has coequalisers and is cartesian closed, which means $(-) \otimes \Y$ is a left adjoint and hence cocontinuous (see e.g.\ Corollary 4.3.2 of \citet{perrone2021notes}).
	The same idea applies more generally: a Markov category $\C$ admits all orbit maps if it has coequalisers (or even reflexive coequalisers) and is monoidal closed.
\end{example}

\begin{example} \label{ex:top-has-orbits}
	$\Top$ admits all orbit maps.
	Unlike with $\Set$, a quick abstract proof is less obvious to us here, but a standard lower-level argument can be used instead.
	For completeness, we include this in Proposition \ref{prop:top-has-orbits} in the Appendix.
\end{example}

In both $\Set$ and $\Top$, every morphism is deterministic.
For reasoning about stochastic neural networks, we would like a Markov category that admits all orbit maps, but also more general morphisms.
The following provides a convenient and very general example.

\begin{theorem} \label{thm:topstoch-has-orbits} 
	Every action in $\TopStoch$ admits an orbit map.
\end{theorem}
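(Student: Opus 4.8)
The plan is to reduce the statement to the known case of $\Top$ (Example~\ref{ex:top-has-orbits} / Proposition~\ref{prop:top-has-orbits}) by observing that an action of $\G$ on $\X$ in $\TopStoch$ is, by Definition~\ref{def:action}, a \emph{deterministic} morphism $\act : \G \otimes \X \to \X$, and deterministic morphisms in $\TopStoch$ are exactly (lifts of) continuous functions — that is, $\act$ arises from an honest continuous action of the topological group $\G$ on the topological space $\X$ in $\Top$. Here I would use the fact that $\TopStoch$ is positive, so a deterministic morphism is a zero-one kernel, which is precisely the Dirac-lift of a continuous function, together with the observation that the group axioms in $\TopStoch_{\mathrm{det}}$ (which is cartesian monoidal, cf.\ Remark~\ref{rem:translating-classical-theory}) coincide with those in $\Top$. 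So the underlying data is genuinely a $\Top$-action.

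Next I would take $q : \X \to \X/\G$ to be the topological orbit map supplied by Proposition~\ref{prop:top-has-orbits}, with $\X/\G$ carrying the quotient topology, and lift $q$ to a deterministic morphism in $\TopStoch$ via~\eqref{eq:measurable-function-as-markov-kernel}. The task is then to verify that this lifted $q$ is an orbit map \emph{in $\TopStoch$} in the sense of Definition~\ref{def:orbit-map}: (i) it is invariant, i.e.\ $q \circ \act = q \circ \triv$, which is immediate since this equation already holds in $\Top$ and the lifting functor $\Top \to \TopStoch$ is a functor; and (ii) the diagram~\eqref{eq:orbit-map-coequaliser} is a coequaliser in $\TopStoch$ that is moreover preserved by every $(-) \otimes \Y$. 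The preservation clause is actually automatic once we know $\Top$ has these coequalisers preserved by products (which is part of what Example~\ref{ex:top-has-orbits} gives), because $\otimes$ on $\TopStoch$ restricts to the product on $\Top$; so the crux is showing the coequaliser property survives passage from $\Top$ to the strictly larger category $\TopStoch$.

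The main obstacle — and the part deserving real work — is exactly this last point: being a coequaliser in $\Top$ does not a priori imply being a coequaliser in $\TopStoch$, because there are strictly more test morphisms out of $\X$ in $\TopStoch$ (namely continuous Markov kernels $\X \to \Z$, not just continuous functions). So I must show: given any continuous Markov kernel $k : \X \to \Z$ in $\TopStoch$ with $k \circ \act = k \circ \triv$, there is a unique continuous Markov kernel $\bar k : \X/\G \to \Z$ with $\bar k \circ q = k$. Invariance $k(B \mid g\cdot x) = k(B \mid x)$ for all $g$ means that for each Borel $B \subseteq \Z$ the function $x \mapsto k(B\mid x)$ is $\G$-invariant, hence — by the universal property of the set/measurable-space quotient, or simply because $q$ is a topological quotient and the invariant function descends set-theoretically — factors as $x \mapsto \bar k(B \mid q(x))$ for a well-defined function $\bar k(B\mid-)$ on $\X/\G$. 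I then need to check $\bar k$ is itself a continuous Markov kernel: measurability/lower semicontinuity of $y \mapsto \bar k(U \mid y)$ for open $U$ follows because $x \mapsto k(U\mid x)$ is lower semicontinuous and $q$ is a quotient map (lower semicontinuity descends along quotient maps by the definition of the quotient topology), and $B \mapsto \bar k(B \mid y)$ is a probability measure because it equals $B \mapsto k(B \mid x)$ for any $x \in q^{-1}(y)$. Uniqueness of $\bar k$ is forced by surjectivity of $q$. Finally I would note the same descent argument works after applying $(-) \otimes \Y$, since $q \otimes \id_\Y$ is still the topological quotient by the $\G$-action on $\X \otimes \Y$ (with $\G$ acting trivially on $\Y$) — this is the content of Proposition~\ref{prop:top-has-orbits} applied to that action — giving the required preservation and completing the proof.
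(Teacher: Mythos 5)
Your reduction to $\Top$ fails at its very first step, and this is precisely the obstruction the paper's proof is designed to get around. You claim that a deterministic morphism of $\TopStoch$ ``is precisely the Dirac-lift of a continuous function''. But by \cite[Example 10.4]{fritz2020synthetic}, determinism only forces the kernel to be zero--one, i.e.\ $\act(B\mid g,x)\in\{0,1\}$, and a zero--one probability measure on a general topological or measurable space need not be a Dirac measure at any point. Positivity does not rescue this: it governs how determinism interacts with other structure, not the identification of zero--one kernels with functions. The paper flags exactly this issue at the start of its proof, citing \cite[Example 3.9]{moss2022probability} for a zero--one kernel in $\Stoch$ that is not the lift of any measurable function and noting that the same phenomenon is expected in $\TopStoch$; for that reason it explicitly declines to assume that every action in $\TopStoch$ arises by lifting an action in $\Top$. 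Your argument therefore only covers the (possibly proper) subclass of lifted actions --- a case the paper dismisses in one sentence via Proposition~\ref{prop:lifting-via-monad} --- and misses the general case, which is the actual content of the theorem. The paper instead proceeds directly: given an arbitrary action $\act$, it equips the \emph{same} point-set $\X$ with the coarser ``invariant topology'' (the open $U$ with $\act(U\mid g,x)=1$ for all $x\in U$ and $g\in\G$), takes $q$ to be the identity-on-points Dirac kernel into this space, and verifies the coequaliser and product-preservation conditions by hand (Propositions~\ref{prop:invariant-indicator-function}--\ref{prop:product-of-invariant-topologies} and Lemma~\ref{lem:orbit-map-coequaliser-existence}), never identifying points into orbits.

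Even granting your reduction, the descent step is also underpowered. You check lower semicontinuity of $y\mapsto\bar k(U\mid y)$ for open $U$, but a morphism of $\TopStoch$ must additionally have $y\mapsto\bar k(B\mid y)$ Borel measurable for \emph{every} Borel $B$, with respect to the Borel $\sigma$-algebra generated by the quotient topology on $\X/\G$. Measurability does not automatically descend along a topological quotient: a subset of $\X/\G$ whose preimage is Borel and saturated need only lie in the (generally strictly larger) quotient $\sigma$-algebra, not in the Borel $\sigma$-algebra of the quotient topology. This secondary issue could likely be patched, but the primary gap above is fatal to the strategy as stated.
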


\begin{proof}
	See Section \ref{sec:top-has-orbits} of the Appendix.
\end{proof}

We do not know whether $\Meas$ or $\Stoch$ admit all orbit maps.
In $\Meas$, it is true that every action gives rise to a coequaliser \eqref{eq:orbit-map-coequaliser} obtained from the usual measure-theoretic quotient space. %
A similar statement also holds in $\Stoch$ as a consequence of Proposition 3.7 of \citet{moss2023category}.
However, in both cases, we were not able to prove that these coequalisers are always preserved by the functor $(-) \otimes \Y$, which our theory below requires.

\begin{remark} \label{rem:orbit-map-existence-implies-preservation}
	For our symmetrisation methodology below, we will need to construct orbit maps more explicitly than these existence results provide.
	However, it is still practically useful to know that some orbit map does indeed exist.
	A straightforward argument shows that if an action $\act$ admits some orbit map, then every coequaliser of the form \eqref{eq:orbit-map-coequaliser} automatically satisfies the preservation condition and is therefore an orbit map.
	See Proposition \ref{prop:orbit-map-from-coequaliser} in the Appendix.
	As such, to show a morphism is an orbit map in $\TopStoch$ (for example), we just need to establish the coequaliser condition: the preservation condition does not need to be checked separately.

\end{remark}

\subsection{Cosets} \label{sec:coset-maps}

In set-theoretic group theory, a subgroup $\HH$ of a group $\G$ gives rise to a set of (left) cosets, whose elements have the form $g\HH \coloneqq \{gh \mid h \in \HH\}$ where $g \in \G$. %
The following definition allows us to talk about cosets in general Markov categories also.

\begin{definition} \label{def:coset-map}
	Let $\varphi : \HH \to \G$ be a homomorphism in a Markov category $\C$.
	A \emph{$\varphi$-coset map} is an orbit map with respect to the following action of $\HH$ on $\G$:
	\begin{equation} \label{eq:rmul-definition}
		\tikzfig{coset-map-definition-action}
	\end{equation}
\end{definition}

In terminology introduced in Definition \ref{def:pullback-action-definition} below, the morphism \eqref{eq:rmul-definition} is just the restriction via $\varphi$ of the action $\rmul$ from Example \ref{ex:group-action-on-itself}, and is therefore indeed always an action.

\begin{example} \label{ex:coset-map-example}
	In $\Set$, Definition \ref{def:coset-map} recovers cosets in the usual sense by letting $\varphi : \HH \to \G$ be a subgroup inclusion, so that $\varphi(h) = h$.
	The action \eqref{eq:rmul-definition} becomes
	\[
		h \cdot g = g \, \varphi(h)^{-1} = gh^{-1}
	\]
 	for $h \in \HH$ and $g \in \G$.
	The cosets of $\HH$ in $\G$ are then exactly the orbits obtained from this action, since we have
	\[
		gH = \{gh \mid h \in H\} = \{gh^{-1} \mid h \in H\} = \{h \cdot g \mid h \in H\},
	\]
	and the right-hand side is seen to be the orbit of $g$.
	Letting $\G/\HH$ be the set of these cosets, it now follows from Example \ref{ex:orbit-map-example} that the function $\G \to \G/\HH$ defined as $g \mapsto g\HH$ is an orbit map, and hence a $\varphi$-coset map.
\end{example}

\begin{remark}
	We will often use the same suggestive notation $\G/\HH$ from Example \ref{ex:coset-map-example} to denote the codomain of a $\varphi$-coset map, where $\varphi : \HH \to \G$ is a homomorphism in a general Markov category $\C$.
	However, we emphasise that this is not meant to imply $\G/\HH$ is a quotient \emph{group} in any sense.
	In particular, recall that $\G/\HH$ from Example \ref{ex:coset-map-example} does not inherit a group structure unless $\HH$ is a \emph{normal} subgroup.
	Instead, $\G/\HH$ should be thought of simply as a space of orbits (or more specifically, cosets) without associated group operations.
	This makes sense to consider regardless of any additional properties $\HH$ may have.
\end{remark}

\begin{example} \label{ex:trivial-coset-map}
	Let $\G$ be a group in a Markov category $\C$, and $\varphi : \I \to \G$ the unique homomorphism out of the trivial group from Example \ref{sec:trivial-group}.
	Then \eqref{eq:rmul-definition} becomes trivial.
	By Example \ref{ex:trivial-orbit-map}, it follows that $\id_\G : \G \to \G$ is an orbit map, and hence a $\varphi$-coset map.
\end{example}

\begin{example} \label{ex:semidirect-product-coset-maps}
	Let $\N \rtimes_\actr \HH$ be a semidirect product in a Markov category $\C$.
	Recall from Definition \ref{def:semidirect-product-definition} that this has $\N \otimes \HH$ as its underlying object.
	We therefore always obtain inclusion and projection morphisms
	\[
		\begin{tikzcd}[column sep=4em]
				\N \ar[shift left=1.5]{r}{i_\N} & \N \rtimes_\actr \HH \ar[swap,shift left=1.5,swap]{l}{p_\N} \ar[shift right=1.5,swap]{r}{p_\HH} & \HH \ar[shift right=1.5,swap]{l}{i_\HH} 
		\end{tikzcd}
	\]
	where for example
	\[
		\tikzfig{semidirect-product-inclusions}
	\]
	and $i_\HH$ and $p_\HH$ are similar.
	By a straightforward argument, both $i_\N$ and $i_\HH$ are homomorphisms. %
	Moreover, $p_\HH$ is a $i_\N$-coset map, and $p_\N$ is a $i_\HH$-coset map.
	See Proposition \ref{prop:semidirect-product-coset-maps} in the Appendix.
	Heuristically, this says that $(\N \rtimes_\actr \HH) / \N \cong \HH$ and $(\N \rtimes_\actr \HH) / \HH \cong \N$, so that after ``quotienting out'' one factor, we are left with the other. 
\end{example}

\begin{example} \label{ex:glnr-coset-map}
	Let $\varphi : \Orth(d) \to \GL(d, \R)$ be the inclusion homomorphism of the orthogonal group into the general linear group in $\Set$ (for example).
	Let $\PD(d)$ denote the set of $d$-dimensional positive-definite matrices, and define $q : \GL(d, \R) \to \PD(d)$ by $A \mapsto A A^T$.
	Then $q$ is a $\varphi$-coset map.
	Indeed, this is invariant because
	\[
		q(AQ^{-1}) = AQ^TQA^T = AA^T
	\]
	whenever $Q$ is orthogonal.
	Now suppose $r$ is also invariant.
	For $A \in \GL(d, \R)$ a standard matrix decomposition allows us always to write $A = RQ$ for some orthogonal $Q$ and upper triangular $R$, which yields
	\[
		r(A) = r(RQ) = r(R) = r(\mathrm{Chol}(RR^T)) = r(\mathrm{Chol}(AA^T)) = r(\mathrm{Chol}(q(A))),
	\]
	where $\mathrm{Chol}$ denotes the Cholesky decomposition.
	This shows that $r(A)$ can be expressed as a function of $q(A)$, and this function must be unique since $q$ is surjective.
	By Remark \ref{rem:orbit-map-existence-implies-preservation}, the preservation condition now holds automatically, and so $q$ is a $\varphi$-coset map.
\end{example}

\subsection{Coset actions}

If $\varphi : \HH \to \G$ is a homomorphism, then by definition every $\varphi$-coset map is $\HH$-invariant.
However, importantly, a $\varphi$-coset map also becomes $\G$-equivariant in a canonical way.

\begin{proposition} \label{prop:induced-action-on-cosets}
	Let $q : \G \to \G/\HH$ be a $\varphi$-coset map.
	There exists a unique action $\mul/\HH : \G \otimes \G/\HH \to \G/\HH$ that makes $q$ equivariant in the sense that
	\[
		\tikzfig{induced-equivariance-of-coset-map}
	\]
\end{proposition}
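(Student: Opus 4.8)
The plan is to obtain $\mul/\HH$ from the universal property of the coequaliser that defines $q$, mirroring the classical fact that the left‑multiplication $\G$‑action on $\G$ descends to $\G/\img\varphi$. Write $\act : \HH \otimes \G \to \G$ for the action \eqref{eq:rmul-definition}, so that $q$ sits in a coequaliser diagram $\HH \otimes \G \rightrightarrows \G \xrightarrow{q} \G/\HH$ with parallel maps $\act$ and $\triv$, and this coequaliser is preserved by $(-) \otimes \Y$ for every $\Y$. Since $\C$, being a Markov category, is symmetric monoidal, the braiding identifies $\G \otimes (-)$ with $(-) \otimes \G$ up to natural isomorphism, so $\id_\G \otimes q$ is again a coequaliser, namely of $\id_\G \otimes \act$ and $\id_\G \otimes \triv$. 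I will take $\mul/\HH$ to be the morphism induced by $q \circ \mul : \G \otimes \G \to \G/\HH$ through this coequaliser.

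First I would check that $q \circ \mul$ coequalises the pair $(\id_\G \otimes \act,\ \id_\G \otimes \triv)$. Because $\mul$ is associative and $\act$ is right multiplication by $\varphi(-)^{-1}$, a routine rewriting — which by Remark~\ref{rem:translating-classical-theory} it suffices to carry out in $\Set$, all the morphisms involved being deterministic — gives $\mul \circ (\id_\G \otimes \act) = \act \circ (\id_\HH \otimes \mul) \circ \sigma$ and $\mul \circ (\id_\G \otimes \triv) = \triv \circ (\id_\HH \otimes \mul) \circ \sigma$, where $\sigma : \G \otimes \HH \otimes \G \to \HH \otimes \G \otimes \G$ is the evident symmetry. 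Postcomposing both with $q$ and using the coequaliser identity $q \circ \act = q \circ \triv$ makes the two sides agree. The universal property of $\id_\G \otimes q$ then yields a unique $\mul/\HH : \G \otimes \G/\HH \to \G/\HH$ with $(\mul/\HH) \circ (\id_\G \otimes q) = q \circ \mul$, which is precisely the equivariance of $q$ asserted by the proposition; and since a coequaliser is an epimorphism, $\mul/\HH$ is the only morphism satisfying this identity, hence a fortiori the only possible action making $q$ equivariant.

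It then remains to verify that $\mul/\HH$ really is an action. For determinism, note that $\id_\G \otimes q$ is deterministic (a monoidal product of deterministic morphisms) and, being a coequaliser, an epimorphism, whereas $q \circ \mul$ is deterministic because $q$ and $\mul$ are; applying $\cop$ to both sides of $(\mul/\HH) \circ (\id_\G \otimes q) = q \circ \mul$, using determinism of $q \circ \mul$ and of $\id_\G \otimes q$ to push the copies through, and then cancelling the epimorphism $\id_\G \otimes q$, gives the determinism equation for $\mul/\HH$ — this is the sole place the determinism of $q$ required in Definition~\ref{def:orbit-map} is used. For the unitality and associativity axioms of Definition~\ref{def:action}, the same cancellation strategy works: by the preservation condition together with symmetry, each of $q$, $\id_\G \otimes q$, and $\id_\G \otimes \id_\G \otimes q$ is an epimorphism, and precomposing the relevant axiom for $\mul/\HH$ with the appropriate one of these, then unwinding the defining identity, reduces it to the corresponding axiom for $\mul$, which holds by Definition~\ref{def:group}.

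The content here is almost entirely formal; the thing to be careful about is the bookkeeping around the monoidal structure — obtaining the epimorphisms $\id_\G \otimes q$ and $\id_\G \otimes \id_\G \otimes q$ from preservation of the coequaliser under $(-) \otimes \Y$, which really does use that $\C$ is symmetric and not merely monoidal, and tracking associators, unitors and the symmetry $\sigma$ when reducing the action axioms. I expect the fiddliest single step to be the unitality check, i.e.\ matching $(\mul/\HH) \circ (\e \otimes \id_{\G/\HH})$ with $\id_{\G/\HH}$ modulo the left unitor.
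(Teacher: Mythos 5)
Your proof is correct and follows essentially the same route as the paper: the paper also obtains $\mul/\HH$ from the universal property of the coequaliser $\id_\G \otimes q$ (available by the preservation condition), verifies the coequalising condition via the same computation that left multiplication commutes with $h \cdot g = g\,\varphi(h)^{-1}$, gets uniqueness from $\id_\G \otimes q$ being an epimorphism, and establishes determinism and the action axioms by exactly the epi-cancellation arguments you describe. The only difference is presentational: the paper packages the construction as a general lemma about a $\G$-action descending to the quotient by any commuting $\HH$-action and then specialises, whereas you run the argument directly for the coset case.
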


\begin{proof}
	See Section \ref{sec:proof-of-induced-action-on-cosets} of the Appendix.
\end{proof}

\begin{example}
	Continuing Example \ref{ex:coset-map-example}, the action of $\G$ on $\G/\HH$ is given by
	\[
		g \cdot (g'H) = (gg')H.
	\]
	It is straightforward to check that this is well-defined and satisfies the axioms of an action.
	Moreover,
	\[
		q(gg') = (gg')H = g \cdot (g'H) = g \cdot q(g'),
	\]
	which shows that $q$ is $\G$-equivariant with respect to this action.
\end{example}

\begin{example} \label{ex:trivial-coset-map-action}
	Continuing Example \ref{ex:trivial-coset-map}, the action induced by $\id_\G$ is simply the action of $\G$ on itself by left multiplication.
	This is always an action by Example \ref{ex:group-action-on-itself}, and it holds trivially that $\id_\G$ is equivariant with respect to this action on its domain and codomain.
\end{example}

\begin{example} \label{ex:semidirect-product-coset-maps-induced-actions}
	Continuing Example \ref{ex:semidirect-product-coset-maps} on semidirect products, recall that the projection $p_\HH : \N \rtimes_\actr \HH \to \HH$ is always an $i_\N$-coset map.
	Proposition \ref{prop:induced-action-on-cosets} then induces the following action of $\N \rtimes_\actr \HH$ on $\HH$:
	\begin{equation*} %
		\tikzfig{semidirect-product-H-coset-projection-action}
	\end{equation*}
	Indeed, it can be seen from Proposition \ref{prop:semidirect-product-actions} that this is always an action.
	We therefore only need to show that $p_\HH$ is equivariant with respect to $\mul$ and $\mul/\N$.
	By Remark \ref{rem:translating-classical-theory}, this may be done in $\Set$, where for $n, n' \in \N$ and $h, h' \in \HH$ we have
	\[
		p_\HH((n, h) \, (n', h'))
			= p_\HH(n \, \actr(h, n'), h h')
			= h h'
			= (n, h) \cdot p_\HH(n', h'),
	\]
	where we use the definition of $\mul/\HH$ in the last step.
	By a similar argument, the action on $\N$ induced by the projection $p_\N : \N \rtimes_\actr \HH \to \N$, which is an $i_\HH$-coset map, is given by
	\begin{equation*} %
		\tikzfig{semidirect-product-N-coset-projection-action}
	\end{equation*}
\end{example}

\begin{example} \label{ex:glnr-coset-map-action}
	Continuing Example \ref{ex:glnr-coset-map}, the unique action of $\GL(d, \R)$ on $\PD(d)$ that makes $q$ equivariant is given by
	\[
		A \cdot P = A P A^T,
	\]
	where $A \in \GL(d, \R)$ and $P \in \PD(d)$.
	It is straightforward to check that this indeed satisfies the axioms of an action, and moreover
	\[
		q(AB) = (AB)(AB)^T = ABB^TA^T = A \cdot (BB^T) = A \cdot q(B)
	\]
	for all $A, B \in \GL(d, \R)$, so that $q$ is equivariant with respect to this action.
\end{example}

\subsection{Lifting group constructions} \label{sec:lifting-group-constructions}

It is usually tedious to specify groups, actions, and so on in $\Stoch$ or $\TopStoch$ directly, because the morphisms of these categories are Markov kernels, which are not as straightforward to write down as ordinary functions are.
The following result provides a convenient way around this.

\begin{proposition} \label{prop:lifting-via-monad}
	The lifting operation described in Remark \ref{rem:measurable-function-as-markov-kernel} sends groups, homomorphisms, actions, equivariant morphisms, orbit maps, and sections (see Remark \ref{rem:transpose-via-section} below) from $\Meas$ to $\Stoch$, and from $\Top$ to $\TopStoch$.
\end{proposition}

\begin{proof}
	See Section \ref{sec:proof-of-lifting-via-monad} in the Appendix.
\end{proof}

This allows us to specify these various constructions in terms of functions in $\Meas$ or $\Top$, and then lift these to $\Stoch$ or $\TopStoch$ automatically.
For example, the following may be checked to be an action of the orthogonal group $\Orth(n)$ on $\R^{n \times n}$ in $\Top$, and hence in $\TopStoch$:
\[
	Q \cdot A \coloneqq Q A Q^T.
\]
Similarly, orbit maps always exist for actions defined in $\Top$ (see Example \ref{ex:top-has-orbits}), and these lift directly to $\TopStoch$ also.

\section{Symmetrisation procedures} \label{sec:symmetrisation}

We now formalise the overall problem of symmetrisation in Markov categories.
At a high level, our goal is to construct \emph{symmetrisation procedures}, defined in Section \ref{sec:symmetrisation-procedures}.
Intuitively, these take a morphism that is unconstrained or only ``partially'' equivariant and ``upgrade it'' to become fully equivariant with respect to some specified group actions.
We also consider desirable properties of these procedures and their overall usage in general terms.
Our framework for actually constructing concrete procedures is given in Section \ref{sec:adjunctive-symmetrisation} below.

\subsection{Markov categories of equivariant morphisms}

It is easily checked that the composition of equivariant morphisms is again equivariant, and that the identity morphism is equivariant.
In this way, the equivariant morphisms in $\C$ form a category.
By Proposition \ref{prop:markov-category-of-equivariant-morphisms} in the Appendix, this is in fact also a Markov category, whose components are defined as follows.

\begin{definition} \label{def:markov-category-of-equivariant-morphisms}
	Given a group $\G$ in a Markov category $\C$, we denote by $\C^\G$ the \emph{Markov category of $\G$-equivariant morphisms}, whose components are as follows:
	\begin{itemize}
		\item Objects are pairs $(\X, \act_\X)$ of an object $\X$ and an action $\act_\X : \G \otimes \X \to \X$ in $\C$
		\item Morphisms $(\X, \act_\X) \to (\Y, \act_\Y)$ are morphisms $\X \to \Y$ in $\C$ that are equivariant with respect to $\act_\X$ and $\act_\Y$, with composition and identities inherited from $\C$
		\item The monoidal product $(\X, \act_\X) \otimes (\Y, \act_\Y)$ is $(\X \otimes \Y, \act_{\X \otimes \Y})$, where $\act_{\X \otimes \Y}$ is the diagonal action defined in Example \ref{ex:diagonal-action}
		\item The monoidal unit is $(\I, \triv)$, where $\triv$ is the trivial action defined in Example \ref{ex:trivial-action}
		\item The structure morphisms (e.g.\ copy and discard) are inherited from $\C$
	\end{itemize}
\end{definition}

It can be cumbersome to write objects in $\C^\G$ always as pairs like $(\X, \act_\X)$ and $(\Y, \act_\Y)$.
To streamline notation, we will sometimes denote these simply by $\X$, $\Y$, etc.
We will always make clear when we are doing so.

\begin{example} \label{ex:trivial-equivariant-markov-category}
	It is straightforward to check that in any Markov category $\C$, the only actions of the trivial group $\I$ are trivial.
	Moreover, a morphism $k : \X \to \Y$ in $\C$ is always equivariant with respect to the trivial actions on $\X$ and $\Y$.
	As a result, $\C^\I$ consists of objects of the form $(\X, \triv)$, where $\X$ may be any object in $\C$, and its morphisms $(\X, \triv) \to (\Y, \triv)$  are all the morphisms $\X \to \Y$ in $\C$.
	In this way, we may regard $\C^\I$ and $\C$ as being the same (although more precisely they are isomorphic as categories).
\end{example}

\begin{remark}
	In technical terms (which are not essential for what follows), $\C^\G$ is closely related to the Eilenberg-Moore category of the action monad $\G \otimes (-)$ on $\C$.
	Precisely, it is the full subcategory spanned by the algebras of this monad that are deterministic.
	In the context of deep learning, the action monad in $\Set$ has also featured in \citet{gavranovic2024categorical} (Example 2.6), who proposed using other monads in its place as a way to describe more general architectural properties than group equivariance.
\end{remark}

\subsection{Action restriction} \label{sec:action-restriction}

A homomorphism $\varphi : \HH \to \G$ always gives rise to a functor $\C^\G \to \C^\HH$.
This provides a link between $\G$-equivariant and $\HH$-equivariant morphisms, and so is highly relevant for symmetrisation.
This functor acts on objects via \emph{restriction}, which we define first.

\begin{definition} \label{def:pullback-action-definition}
	Let $\varphi: \HH \to \G$ be a homomorphism in a Markov category $\C$.
	Given an action $\act : \G \otimes \X \to \X$ in $\C$, its \emph{restriction} via $\varphi$ is the following action of $\HH$:
	\begin{equation} \label{eq:restricted-action-definition}
		\tikzfig{pullback-action-definition}
	\end{equation}
\end{definition}

Technically, we should check that \eqref{eq:restricted-action-definition} indeed always defines an action.
The following example establishes this in $\Set$, and hence in general by Remark \ref{rem:translating-classical-theory}.

\begin{example} \label{ex:subgroup-restricted-action}
	In $\Set$, the action \eqref{eq:restricted-action-definition} may be written as
	\begin{equation} \label{eq:restricted-action-definition-set}
		h \cdot x = \varphi(h) \cdot x \qquad \text{where $h \in \HH$ and $x \in \X$}.
	\end{equation}
	Notice that the original $\G$-action appears here on the right-hand side.
	This is associative because
	\[
		\varphi(h) \cdot (\varphi(h') \cdot x) = (\varphi(h) \varphi(h')) \cdot x = \varphi(hh') \cdot x,
	\]
	where we use the fact that $\varphi$ is a homomorphism in the second step.
	Unitality is similar.

	An important special case occurs when $\HH \subseteq \G$ is a subgroup and $\varphi : \HH \hookrightarrow \G$ is the inclusion homomorphism, so that $\varphi(h) = h$.
	The action \eqref{eq:restricted-action-definition-set} is then just the restriction of the original $\G$-action to the subgroup $\HH$, which motivates the terminology ``restriction''.
\end{example}

With Definition \ref{def:pullback-action-definition} given, we can now define the functor $\C^\G \to \C^\HH$ mentioned above.

\begin{definition} \label{def:action-restriction-functor}
	Let $\varphi : \HH \to \G$ be a homomorphism in a Markov category $\C$.
	The \emph{restriction functor}
	\[
		\Res_\varphi : \C^\G \to \C^\HH
	\]
	is defined on objects as $\Res_\varphi(\X, \act) \coloneqq (\X, \act_\varphi)$, where $\act_\varphi$ denotes the action \eqref{eq:restricted-action-definition}.
	On morphisms, $\Res_\varphi$ is just the identity, so that $\Res_\varphi(k) \coloneqq k$.
\end{definition}

To see that $\Res_\varphi$ is indeed a well-defined functor, just observe that, by definition, a morphism $k : \X \to \Y$ in $\C$ is a morphism $\Res_\varphi(\X, \act_\X) \to \Res_\varphi(\Y, \act_\Y)$ in $\C^\HH$ if it holds that
\begin{equation} \label{eq:restricted-equivariance-definition}
	\tikzfig{restricted-equivariance-definition}
\end{equation}
This follows immediately when $k$ is equivariant with respect to $\act_\X$ and $\act_\Y$, and hence for all $k : (\X, \act_\X) \to (\Y, \act_\Y)$ in $\C^\G$.
The other functor axioms are immediate.

\subsection{Symmetrisation procedures} \label{sec:symmetrisation-procedures}

Our primary methodological aim in this paper is to obtain procedures for \emph{symmetrisation}.
At a high-level, given some homomorphism $\varphi : \HH \to \G$ that we specify, we would like a mapping that sends $\HH$-equivariant morphisms to $\G$-equivariant ones.
We formalise this precisely as follows.
Recall that we use the standard notation $\D(\U, \V)$ to denote the set of morphisms $\U \to \V$ in a category $\D$.

\begin{definition} \label{def:symmetrisation-procedure}
	Let $\C$ be a Markov category.
	A \emph{symmetrisation procedure} is a function of the form
	\begin{equation} \label{eq:symmetrisation-procedure-signature}
		\C^\HH(\Res_\varphi(\X, \act_\X), \Res_\varphi(\Y, \act_\Y)) \to \C^\G((\X, \act_\X), (\Y, \act_\Y)) %
	\end{equation}
	where $\varphi : \HH \to \G$ is a homomorphism in $\C$, and $(\X, \act_\X)$ and $(\Y, \act_\Y)$ are objects in $\C^\G$.
\end{definition}

Observe that the left-hand side of \eqref{eq:symmetrisation-procedure-signature} consists of morphisms satisfying \eqref{eq:restricted-equivariance-definition}, whereas the right-hand side consists of morphisms that satisfy this condition when $\varphi$ is removed.
For this reason, we intuitively think of a symmetrisation procedure as transporting ``less equivariant'' morphisms to ``more equivariant'' ones.

\begin{example} \label{ex:subgroup-inclusion-symmetrisation-procedure}
	Let $\C \coloneqq \Set$, and suppose $\varphi : \HH \hookrightarrow \G$ is a subgroup inclusion.
	By Example \ref{ex:subgroup-restricted-action},
	a morphism $f : \Res_\varphi(\X, \act_\X) \to \Res_\varphi(\Y, \act_\Y)$ in $\Set^\HH$ is a function $f : \X \to \Y$ that satisfies
	\[
		f(\act_\X(h, x)) = \act_\Y(h, f(x))
	\]
	for all $h \in \HH$ and $x \in \X$.
	In other words, $f$ is only equivariant with respect to the subgroup $\HH$.
	A symmetrisation procedure along $\varphi$ then ``upgrades'' $f$ to become equivariant with respect to $\act_\X$ and $\act_\Y$ over the full group $\G$.
\end{example}

\begin{example} \label{ex:symmetrisation-along-trivial-homomorphism}
	In a general Markov category $\C$, an important special case takes $\varphi : \I \to \G$ to be the unique homomorphism out of the trivial group from Example \ref{ex:trivial-homomorphism}.
	Since we can identify $\C^\I$ with $\C$ by Example \ref{ex:trivial-equivariant-markov-category}, a symmetrisation procedure for this $\varphi$ is equivalently a function
	\[
		\C(\X, \Y) \to \C^\G((\X, \act_\X), (\Y, \act_\Y)).
	\]
	In other words, this produces $\G$-equivariant morphisms from morphisms in $\C$ that are not subject to any equivariance constraints at all.

\end{example}

\begin{example} \label{ex:semidirect-product-symmetrisation}
	From Example \ref{ex:semidirect-product-coset-maps}, every semidirect product $\N \rtimes_\actr \HH$ in a Markov category $\C$ admits an inclusion homomorphism
	\[
		i_\N : \N \to \N \rtimes_\actr \HH
	\]
	Also recall from Proposition \ref{prop:semidirect-product-actions} that every $(\N \rtimes_\actr \HH)$-action $\act_\X$ can be decomposed as some $\HH$-action $\act_{\X,\HH}$ followed by some $\N$-action $\act_{\X,\N}$.
	A symmetrisation procedure along $i_\N$ is then a function
	\[
		\C^\N((\X, \act_{\X, \N}), (\Y, \act_{\Y, \N})) \to \C^{\N \rtimes_\actr \HH}((\X, \act_{\X}), (\Y, \act_{\Y})).
	\]
	This is the case because
	\[
		\tikzfig{restriction-functor-of-iN}
	\]
	which shows $\Res_{i_\N}(\X, \act_\X) = (\X, \act_{\X, \N})$, and likewise we have $\Res_{i_\N}(\Y, \act_\Y) = (\Y, \act_{\Y, \N})$.
	The situation for the other inclusion homomorphism $i_\HH : \HH \to \N \rtimes_\actr \HH$ is completely analogous.
\end{example}

\subsection{Potential desiderata} \label{sec:symmetrisation-desiderata}

Definition \ref{def:symmetrisation-procedure} is very minimal, and imposes essentially no structure on the function involved: it must simply map $\HH$-equivariant morphisms to $\G$-equivariant ones.
In practice, there are various additional properties that we might want a symmetrisation procedure to satisfy.
Recall from Section \ref{sec:action-restriction} that every morphism $\X \to \Y$ in $\C^\G$ is also a morphism of the form $\Res_\varphi \X \to \Res_\varphi \Y$ in $\C^\HH$.
In other words, it holds that
\[
	\C^\G(\X, \Y) \subseteq \C^\HH(\Res_\varphi \X, \Res_\varphi \Y).
\]
As such, a reasonable requirement of a symmetrisation procedure is that it restricts to the identity on $\C^\G(\X, \Y)$, and so does not modify its input unless strictly necessary.
Precisely:

\begin{definition}
	A symmetrisation procedure $\sym : \C^\HH(\Res_\varphi \X, \Res_\varphi \Y) \to \C^\G(\X, \Y)$ is \emph{stable} if for $k : \X \to \Y$ in $\C^\G$ we have
	\[
		\sym(k) = k.
	\]
\end{definition}

Beyond its intuitive appeal, stability has other desirable consequences.
For example, it follows essentially by definition that a stable symmetrisation procedure is \emph{surjective}.
This is clearly of interest for machine learning applications, since it provides a basic guarantee of the overall expressiveness of the procedure: it is possible to obtain every $\G$-equivariant morphism given some appropriate input.
Additionally, a stable symmetrisation procedure $\sym$ is always \emph{idempotent}, so that $\sym(\sym(k)) = \sym(k)$.
This is useful to know at implementation time: we can run all our experiments with $\sym$ applied just once, confident that we have not sacrificed any performance by doing so.

\subsection{Composing procedures} \label{sec:composing-procedures}

Symmetrisation procedures may be applied in sequence.
For example, suppose we have two homomorphisms
\[
	\begin{tikzcd}
		\K \ar{r}{\phi} & \HH \ar{r}{\varphi} & \G,
	\end{tikzcd}
\]
and two symmetrisation procedures of the form
\begin{equation} \label{eq:symmetrisation-procedure-schematic}
	\begin{tikzcd}[column sep=4em]
		\C^K(\Res_{\phi} \Res_\varphi \X, \Res_{\phi} \Res_\varphi \Y) \ar{r}{\sym_\phi} & \C^\HH(\Res_\varphi \X, \Res_\varphi \Y) \ar{r}{\sym_\varphi} & \C^\G(\X, \Y).
	\end{tikzcd}
\end{equation}
We may then compose these in the obvious way by first applying $\sym_\phi$ and then applying $\sym_\varphi$.
This allows us to start with a morphism that is $\K$-equivariant and end up with one that is $\G$-equivariant.
It follows immediately from inspection of Definition \ref{def:pullback-action-definition} that $\Res_\phi \Res_\varphi = \Res_{\varphi \circ \phi}$, and so this composition is a function
\[
	\begin{tikzcd}
		\C^K(\Res_{\varphi \circ \phi} \X, \Res_{\varphi \circ \phi} \Y) \ar{r} & \C^\G(\X, \Y).
	\end{tikzcd}
\]
This is still a symmetrisation procedure, now along the homomorphism $\varphi \circ \phi : \K \to \G$.

\begin{remark}
	It is clear that stability and surjectivity are both preserved under composition of symmetrisation procedures.
	However, idempotence is not in general.
\end{remark}

\subsection{Deterministic symmetrisation} \label{sec:deterministic-symmetrisation}

In many situations, it is desirable to obtain an equivariant neural network that is also deterministic.
For this, one approach is simply to work in a Markov category whose morphisms are always deterministic, such as $\Set$ or $\Top$ rather than $\Stoch$ or $\TopStoch$.
This ensures any symmetrisation procedure will return a deterministic morphism by construction.
However, recent work suggests that there is an advantage to working in a probabilistic setting even when determinism is ultimately sought \citep{kim2023learning,dym2024equivariant}.
An alternative approach is instead to symmetrise probabilistically, and then to make the result deterministic by computing its expectation.
We describe this now in $\Stoch$.

\begin{definition} \label{def:expectation-operator}
	Let $\Y \coloneqq \R^d$ for some $d \in \Nat$.
	The \emph{expectation operator}, denoted $\ave$, sends a Markov kernel $k : \X \to \Y$ to the measurable function $\ave(k) : \X \to \Y$, defined as
	\begin{equation} \label{eq:deterministic-symmetrised-function}
		\ave(k)(x) \coloneqq \int y \, k(dy|x),
	\end{equation}
	provided this integral exists for all $x \in \X$.
	We will also regard this interchangeably as a deterministic Markov kernel by Remark \ref{rem:measurable-function-as-markov-kernel}, where $\ave(k)(dy|x)$ is Dirac on \eqref{eq:deterministic-symmetrised-function}.
\end{definition}

Here the integral is meant componentwise.
A more general definition, where $\R^d$ is replaced by a Banach space, also appears possible.
That \eqref{eq:deterministic-symmetrised-function} is indeed measurable in $x$ follows by a standard argument using Fubini's theorem.

In general, $\ave(k)$ may not be equivariant even if $k$ is.
However, $\ave$ does preserve equivariance with respect to \emph{affine} group actions.
This occurs very often in practice, since many actions are defined in terms of elementary matrix operations.
The following result makes this more precise.

\begin{proposition} \label{prop:expectation-preserves-equivariance}
	Let $\G$ be a group in $\Meas$ acting on measurable spaces $\X$ and $\Y \coloneqq \R^d$, where the action on $\Y$ is affine, and suppose $k : \X \to \Y$ is a Markov kernel that is equivariant when these actions are lifted to $\Stoch$ by Proposition \ref{prop:lifting-via-monad}.
	Then $\ave(k)$ is also equivariant (provided the integral \eqref{eq:deterministic-symmetrised-function} is everywhere defined).

\end{proposition}

\begin{proof}
	In the notation of Example \ref{ex:stochastic-equivariance}, for all $x \in \X$ and $g \in \G$ we have
	\begin{align*}
		\int y \, k(dy|g \cdot x)
			= \int y \, (g \cdot k(dy| x))
			= \int (g \cdot y) \, k(dy| x)
			= g \cdot \int y \, k(dy| x),
	\end{align*}
	where the second step uses the law of the unconscious statistician, and the third uses the assumption that the action on $\Y$ is affine.
	This shows $\ave(k)$ is equivariant when regarded as a measurable function, and Proposition \ref{prop:lifting-via-monad} then implies that it is equivariant when regarded as a deterministic Markov kernel also.
\end{proof}

In this way, provided these conditions are met, a general strategy for obtaining a deterministic $\G$-equivariant neural network is to apply the following composition:
\begin{equation} \label{eq:deterministic-symmetrisation-procedure}
	\begin{tikzcd}[column sep=3em]
		\Stoch^\HH(\Res_\varphi \X, \Res_\varphi \Y) \ar{r}{\sym} & \Stoch^\G(\X, \Y) \ar{r}{\ave} & \Stoch_\Det^\G(\X, \Y),
	\end{tikzcd}
\end{equation}
where $\sym$ may be any symmetrisation procedure of the type shown.

\begin{remark}
	Technically we should we should regard the composition \eqref{eq:deterministic-symmetrisation-procedure} as a partial function that is defined only on the subset of its domain for which the required integral exists.
	However, in practice, this is not a major issue.
	One reason for this is that $\ave(k)$ is always defined when $k$ is deterministic, since then $k(dy|x)$ is Dirac on \eqref{eq:deterministic-symmetrised-function} (see Example 10.5 of \citet{fritz2020synthetic}).
	As a result, even factoring in integrability caveats, it follows that $\ave \circ \sym$ is always surjective if $\sym$ is, which provides a basic guarantee of its overall expressiveness.
\end{remark}

\begin{remark}
	The preceding discussion works very concretely in the category $\Stoch$.
	A more abstract treatment along the same lines seems possible for a general Markov category that is \emph{representable} in the sense of \citet{fritz2023representable}. %
	However, we will not require this greater generality in what follows.
\end{remark}

\section{A general methodology for symmetrisation} \label{sec:adjunctive-symmetrisation}

\subsection{Motivating idea} \label{sec:adjunctive-symmetrisation-motivation}

Suppose the restriction functor $\Res_\varphi$ admits a left adjoint $\Ext_\varphi$.
For all objects $\X$ and $\Y$ in $\C^\G$, this yields a bijection between the morphisms in $\C^\HH$ and $\C^\G$ as follows:
\begin{equation} \label{eq:adjunction-isomorphism}
	\begin{tikzcd}[column sep=scriptsize]
		\C^\HH(\Res_\varphi \X, \Res_\varphi \Y)
			\ar{r}{\cong} & \C^\G(\Ext_\varphi \Res_\varphi \X, \Y).
	\end{tikzcd}
\end{equation}
This allows us to map $\HH$-equivariant morphisms directly to $\G$-equivariant ones, which seems promising for symmetrisation.
However, \eqref{eq:adjunction-isomorphism} is not yet a symmetrisation procedure in the sense of Definition \ref{def:symmetrisation-procedure} since its output does not have the desired type $\X \to \Y$.
To address this, we can add a second step that simply precomposes by some arbitrary morphism
\[
	\Pre : \X \to \Ext_\varphi \Res_\varphi \X \qquad \text{in $\C^\G$}.
\]
That is, given $k : \Ext_\varphi \Res_\varphi \X \to \Y$ in $\C^\G$ obtained from \eqref{eq:adjunction-isomorphism}, we return $k \circ \Pre : \X \to \Y$, which always has the desired type just by definition of composition.
In this way, when a left adjoint $\Ext_\varphi$ exists, every $\Pre$ gives rise to a symmetrisation procedure defined as follows:
\begin{equation} \label{eq:symmetrisation-high-level-procedure}
	\begin{tikzcd}[column sep=8em]
		\C^\HH(\Res_\varphi \X, \Res_\varphi \Y) \ar{r}{\textnormal{Apply \eqref{eq:adjunction-isomorphism}}} & \C^\G(\Ext_\varphi \Res_\varphi \X, \Y) \ar[description]{r}{\textnormal{Precompose by $\Pre$}} & \C^\G(\X, \Y).
	\end{tikzcd}
\end{equation}
To obtain the best results empirically, we want to choose $\Pre$ ``well'' in some sense.
For this, we will parameterise $\Pre$ using a neural network that we will then optimise, as discussed later.

Overall, this strategy is fully generic and applies without further assumptions on the groups $\G$ and $\HH$ (such as compactness) or their actions on $\X$ and $\Y$ (such as linearity).
Moreover, it is also the only obvious way to construct symmetrisation procedures at this level of generality.
Indeed, since \eqref{eq:adjunction-isomorphism} is a bijection, every symmetrisation procedure can be obtained by first applying \eqref{eq:adjunction-isomorphism} and then mapping the result through \emph{some} function $\C^\G(\Ext_\varphi \Res_\varphi \X, \Y) \to \C^\G(\X, \Y)$.
But without additional structure to exploit, it is unclear how to write down such a function that is \emph{not} precomposition.
In this sense, \eqref{eq:symmetrisation-high-level-procedure} is the only strategy for obtaining symmetrisation procedures that is fully ``general purpose''.\footnote{The Yoneda lemma gives a more precise (but more technical) characterisation than this: the symmetrisation procedures that are \emph{natural} in $Y$ are precisely those of the form \eqref{eq:symmetrisation-high-level-procedure}.}

\subsection{General approach}

The requirement of a full left adjoint is stronger than necessary: rather than $\Ext_\varphi$ itself, the procedure \eqref{eq:symmetrisation-high-level-procedure} only requires us to construct the composition $\Ext_\varphi \Res_\varphi$.\footnote{In technical terms, rather than a full left adjoint to $\Res_\varphi$, we require only a \emph{left $\Res_\varphi$-relative adjoint}.}
In classical settings such as $\Set$, this can be obtained as
\begin{equation} \label{eq:relative-left-adjoint}
	\Ext_\varphi \Res_\varphi \X \cong \G/\HH \otimes \X,
\end{equation}
where $\G/\HH$ is a coset space (see e.g.\ equation (1.6) of \citet{may1997equivariant}).
The following result in effect generalises this fact to an arbitrary Markov category.

\begin{theorem} \label{thm:partial-left-adjoint-existence}
	Let $\C$ be a Markov category and $\varphi : \HH \to \G$ a homomorphism in $\C$.
	Suppose a $\varphi$-coset map $q : \G \to \G/\HH$ exists, and let $\mul/\HH$ be the action of $\G$ on $\G/\HH$ induced by Proposition \ref{prop:induced-action-on-cosets}.
	Then for every $(\X, \act_\X)$ and $(\Y, \act_\Y)$ in $\C^\G$, there is a natural bijection
	\begin{equation} \label{eq:transpose-characterisation-hom-sets}
		\begin{tikzcd}
			\C^\HH(\Res_\varphi(\X, \act_\X), \Res_\varphi(\Y, \act_\Y)) \ar{r}{\cong} & \C^\G((\G/\HH, \mul/\HH) \otimes (\X, \act_\X), (\Y, \act_\Y))
		\end{tikzcd}
	\end{equation}
	that sends $k : \Res_\varphi(\X, \act_\X) \to \Res_\varphi(\Y, \act_\Y)$ in $\C^\HH$ to the unique $k^\transpose : \G/\HH \otimes \X \to \Y$ in $\C$ such that
	\begin{equation} \label{eq:transpose-characterisation}
		\tikzfig{transpose-characterisation}
	\end{equation}
	which always exists, and is always a morphism $(\G/\HH, \mul/\HH) \otimes (\X, \act_\X) \to (\Y, \act_\Y)$ in $\C^\G$.
\end{theorem}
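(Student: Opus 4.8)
The plan is to build the hom-set bijection \eqref{eq:transpose-characterisation-hom-sets} by hand, mimicking the classical induction--restriction adjunction but without ever constructing a full left adjoint $\Ext$. The central device is an auxiliary morphism $F : \G \otimes \X \to \Y$ in $\C$ that depends on $k$: it copies its $\G$-input, uses one copy (after inversion) to act on the $\X$-input via $\act_\X$, applies $k$, and then uses the other copy to act on the result via $\act_\Y$ --- informally $(g,x)\mapsto g\cdot k(g^{-1}\cdot x)$, built from $\cop_\G$, $\inv$, $\act_\X$, $\act_\Y$ and $k$, with $k$ the only non-deterministic factor and appearing exactly once. The equation \eqref{eq:transpose-characterisation} will then be precisely $k^\transpose \circ (q \otimes \id_\X) = F$.

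The first and main step is to show that $F$ is invariant with respect to the $\HH$-action on $\G \otimes \X$ given by the coset-defining action \eqref{eq:rmul-definition} on $\G$ and the trivial action on $\X$; in $\Set$-style notation this reads $F(g\varphi(h)^{-1}, x) = F(g, x)$. This is the only place the hypothesis $k \in \C^\HH(\Res_\varphi \X, \Res_\varphi \Y)$ enters. Expanding the left side, one uses that $\varphi$ is a homomorphism (so $\varphi(h)^{-1} = \varphi(h^{-1})$), associativity of $\act_\Y$ to factor out a leading $g$, associativity of $\act_\X$ to rewrite $\varphi(h)g^{-1}\cdot x$ as $\varphi(h)\cdot(g^{-1}\cdot x)$, then routes the $\HH$-wire carrying $h$ into $k$ and invokes the equivariance axiom of $k$ for the restricted action to replace $k(\varphi(h)\cdot(g^{-1}\cdot x))$ by $\varphi(h)\cdot k(g^{-1}\cdot x)$, after which $\varphi(h)^{-1}\varphi(h) = \e$ and unitality collapse everything back to $F(g,x)$. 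The delicate point is that $k$ is stochastic, so one cannot discharge the whole computation in $\Set$ via Remark \ref{rem:translating-classical-theory}: the deterministic ``pre-$k$'' and ``post-$k$'' manipulations are justified in $\Set$ as usual, but the single application of $k$'s equivariance must be made at the string-diagram level, by plugging the equivariance equation into a deterministic context. I expect this step to be the main obstacle; everything afterwards is essentially bookkeeping.

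Given this invariance, $q \otimes \id_\X$ is an orbit map for exactly that $\HH$-action on $\G \otimes \X$ --- by Remark \ref{rem:product-of-orbit-maps-is-orbit-map}, since $\id_\X$ is an orbit map for the trivial action (Example \ref{ex:trivial-orbit-map}) --- so its universal property yields a unique $k^\transpose : \G/\HH \otimes \X \to \Y$ with $k^\transpose \circ (q \otimes \id_\X) = F$, establishing \eqref{eq:transpose-characterisation}. Next I would check that $k^\transpose$ is $\G$-equivariant for the diagonal action $(\mul/\HH) \otimes \act_\X$ on its domain. Being a coequaliser, $q \otimes \id_\X$ is epic; by the preservation clause of Definition \ref{def:orbit-map} together with symmetry of $\otimes$, the morphism $\id_\G \otimes (q \otimes \id_\X)$ is again a coequaliser and hence epic, so it suffices to verify the equivariance equation after precomposing with it. After this precomposition both sides become expressions in $F$: using Proposition \ref{prop:induced-action-on-cosets} (which gives $g'\cdot q(g) = q(g'g)$) and the defining equation of $F$, the left side evaluates to $(g'g)\cdot k(g^{-1}\cdot x)$ and the right side to $g'\cdot\bigl(g\cdot k(g^{-1}\cdot x)\bigr)$, which agree by associativity of $\act_\Y$. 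No further appeal to the equivariance of $k$ is needed here.

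Finally, for bijectivity I would exhibit the inverse explicitly, sending $\tilde k : \G/\HH \otimes \X \to \Y$ in $\C^\G$ to $\tilde k \circ ((q \circ \e) \otimes \id_\X) : \X \to \Y$, informally ``$x \mapsto \tilde k(eH, x)$'' (up to the unitor $\I \otimes \X \cong \X$). This is $\HH$-equivariant by the $\G$-equivariance of $\tilde k$ at the element $\varphi(h)$, using $q(\varphi(h)) = q(\e)$ --- which holds because $\varphi(h) = h^{-1}\cdot\e$ in the coset action and $q$ is invariant. For the round trips: $k^\transpose \circ ((q\circ\e)\otimes\id_\X) = F\circ(\e\otimes\id_\X) = k$ by unitality of the actions and $\inv(\e)=\e$; conversely, given $\tilde k$, one shows $\tilde k \circ (q\otimes\id_\X)$ equals the auxiliary morphism $F$ associated to $\tilde k \circ ((q\circ\e)\otimes\id_\X)$ --- again a direct consequence of the $\G$-equivariance of $\tilde k$ --- so by the uniqueness clause of the coequaliser the transpose of $\tilde k \circ ((q\circ\e)\otimes\id_\X)$ is $\tilde k$ itself. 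This closes the argument.
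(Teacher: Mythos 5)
Your proposal is correct and follows essentially the same route as the paper: the paper likewise defines the auxiliary morphism $(g,x)\mapsto g\cdot k(g^{-1}\cdot x)$, proves its $\HH$-invariance to obtain $k^\transpose$ from the universal property of the orbit map $q\otimes\id_\X$, establishes $\G$-equivariance by factoring through the (epic) orbit map, and exhibits the inverse as precomposition with $x\mapsto(q(\e),x)$. Your remark that the equivariance of the stochastic $k$ must be invoked at the string-diagram level rather than discharged wholesale in $\Set$ is exactly the caveat the paper itself flags when it applies the Yoneda shorthand in this proof.
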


\begin{proof}
	See Section \ref{sec:existence-of-left-adjoint-proof} of the Appendix.
\end{proof}

When it applies, Theorem \ref{thm:partial-left-adjoint-existence} gives rise to symmetrisation procedures via the same two-step process as in \eqref{eq:symmetrisation-high-level-procedure}, making the substitution shown in \eqref{eq:relative-left-adjoint}.
Explicitly, in the first step, we now apply the bijection \eqref{eq:transpose-characterisation-hom-sets}, and in the second step we now precompose by some
\begin{equation} \label{eq:symmetrisation-high-level-procedure-with-reader-comonad}
	\Pre : (\X, \act_\X) \to (\G/\HH, \mul/\HH) \otimes (\X, \act_\X) \qquad \text{in $\C^\G$}.
\end{equation}
For our purposes, this approach is more convenient than constructing a full left adjoint, and we will use it as the basis for the methodology we develop in what follows.

\begin{remark} \label{rem:transpose-via-section}
	A $\varphi$-coset map $q : \G \to \G/\HH$ often admits a \emph{section} (or \emph{right-inverse}), which is namely a morphism $s : \G/\HH \to \G$ in $\C$ such that
	\[
		q \circ s = \id_{\G/\HH}.
	\]
	For example, a section of the coset map $g \mapsto g\HH$ from Example \ref{ex:coset-map-example} selects a representative element of each coset in $\G/\HH$.
	This is useful computationally, since by attaching $s$ to the $\G$-input on both sides of \eqref{eq:transpose-characterisation}, we may write $k^\transpose$ explicitly as follows:
	\begin{equation} \label{eq:transpose-plus-precomposition-result-with-section}
		\tikzfig{transpose-plus-precomposition-result-with-section}
	\end{equation}
	In general there may be many choices of $s$, but each one will produce the same $k^\transpose$.
	Moreover, $s$ need not be equivariant in any sense, so need not be a morphism in $\C^\G$ or $\C^\HH$.
\end{remark}

\begin{remark}
	Should it be required, a sufficient condition for $\Res_\varphi$ to have a full left adjoint is that $\C$ admits all orbit maps.
	We omit the proof of this, but the result follows using very similar ideas as the proof of Theorem \ref{thm:partial-left-adjoint-existence}.
	Compared with \eqref{eq:adjunction-isomorphism}, this yields a natural bijection 
	\[
		\C^\HH(\Z, \Res_\varphi \Y) \cong \C^\G(\Ext_\varphi \Z, \Y)
	\]
	for all $\Z$ in $\C^\HH$, rather than just $\Z = \Res_\varphi \X$.
	This may yield interesting possibilities for symmetrisation: for instance, we could generalise \eqref{eq:symmetrisation-high-level-procedure} to start with a morphism $\Z \to \Res_\varphi \Y$ in $\C^\HH$, where now $\Z$ may be any object in $\C^\HH$, and in the second step precompose instead by a morphism $\Pre : \X \to \Ext_\varphi \Z$ in $\C^\G$.
	We leave this for future work to explore.
\end{remark}

\subsection{Obtaining a precomposition morphism} \label{sec:precomposition-morphism}

Suppose Theorem \ref{thm:partial-left-adjoint-existence} applies.
To obtain an overall symmetrisation procedure, it remains to select a precomposition morphism $\Pre$ of the form \eqref{eq:symmetrisation-high-level-procedure-with-reader-comonad}.
For this, we will set
\begin{equation} \label{eq:precomposition-morphism-2}
	\tikzfig{precomposition-morphism}
\end{equation}
where $\pre$ may be any morphism $(\X, \act_\X) \to (\G/\HH, \mul/\HH)$ in $\C^\G$.
By definition of composition in $\C^\G$, this means $\Pre$ has the required type from \eqref{eq:symmetrisation-high-level-procedure-with-reader-comonad} also.

\begin{remark}
We will denote by $\sym_\pre$ the overall symmetrisation procedure that first applies \eqref{eq:transpose-characterisation-hom-sets} and then precomposes by \eqref{eq:precomposition-morphism-2}.
End-to-end, this has the following type:
\[
	\begin{tikzcd}[column sep=3em]
		\C^\HH(\Res_\varphi(\X, \act_\X), \Res_\varphi(\Y, \act_\Y)) \ar{r}{\sym_\pre} & \C^\G((\X, \act_\X), (\Y, \act_\Y)).
	\end{tikzcd}
\]
However, notice that same choice of $\pre$ may be reused across more than one $(\Y, \act_\Y)$.
We will abuse notation slightly by denoting every such procedure using the same symbol $\sym_\pre$, even though technically these are distinct functions when their codomains differ.
\end{remark}

The choice of \eqref{eq:precomposition-morphism-2} sacrifices some generality, since not every morphism in $\C^\G$ of the form \eqref{eq:symmetrisation-high-level-procedure-with-reader-comonad} can be expressed in this way.
Our reason for this restriction is that it is sufficient, and for positive Markov categories necessary, to ensure that the overall procedure is stable, as the next result shows.
As discussed in Section \ref{sec:symmetrisation-desiderata}, stability in turn means the procedure is surjective and idempotent, both of which are desirable for machine learning applications.

\begin{proposition} \label{prop:sym-gamma-is-stable}
	Assuming Theorem \ref{thm:partial-left-adjoint-existence} applies, the procedure $\sym_\pre$ is stable for every choice of $\pre$.
	Conversely, if $\C$ is positive, then every instance of \eqref{eq:symmetrisation-high-level-procedure} that is stable can be obtained as $\sym_\pre$ for some choice of $\pre$.
\end{proposition}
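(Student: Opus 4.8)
The plan is to reduce both halves of the statement to one structural fact about the transpose of Theorem~\ref{thm:partial-left-adjoint-existence}: for every $\G$-equivariant morphism $k \colon (\X,\act_\X) \to (\Y,\act_\Y)$ — regarded inside $\C^\HH(\Res_\varphi\X,\Res_\varphi\Y)$ via the inclusion of Section~\ref{sec:action-restriction} — the transpose is simply $k^\transpose = k \circ \pr_\X$, where $\pr_\X \colon (\G/\HH,\mul/\HH)\otimes(\X,\act_\X) \to (\X,\act_\X)$ is the projection, i.e.\ $\del_{\G/\HH}\otimes\id_\X$. To see this I would first note that $k \circ \pr_\X$ is a morphism of $\C^\G$ of the correct type, being a composite of equivariant morphisms, and then check it satisfies the characterising equation~\eqref{eq:transpose-characterisation}. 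Precomposing with the orbit map $q \otimes \id_\X$ (which pins down any morphism out of $(\G/\HH)\otimes\X$), the right-hand side of~\eqref{eq:transpose-characterisation} collapses, using the equivariance of $k$ followed by action-associativity, the inverse axiom, and action-unitality — the familiar ``$g \cdot k(g^{-1}\cdot x) = k(x)$'' manipulation — to the morphism $k \circ (\del_\G \otimes \id_\X)$; and this equals $(k\circ\pr_\X)\circ(q\otimes\id_\X)$ since $\del_{\G/\HH}\circ q = \del_\G$ by terminality of $\I$. Uniqueness in Theorem~\ref{thm:partial-left-adjoint-existence} then yields $k^\transpose = k\circ\pr_\X$.

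Granting this, every procedure obtained as in~\eqref{eq:symmetrisation-high-level-procedure} (in the form made precise by Remark~\ref{rem:co-kleisli-category-equivalence}) is the assignment $k \mapsto k^\transpose \circ \omega$ for a fixed precomposition morphism $\omega \colon (\X,\act_\X) \to (\G/\HH,\mul/\HH)\otimes(\X,\act_\X)$ in $\C^\G$, and I claim such a procedure is stable if and only if $\pr_\X \circ \omega = \id_\X$. Indeed, applying stability to $k = \id_\X$ gives $(\id_\X)^\transpose\circ\omega = \pr_\X\circ\omega = \id_\X$, while conversely if $\pr_\X\circ\omega = \id_\X$ then $k^\transpose\circ\omega = k\circ\pr_\X\circ\omega = k$ for every equivariant $k$. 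For the first assertion of the proposition, the precomposition morphism of $\sym_\gamma$ is $\omega = (\gamma\otimes\id_\X)\circ\cop_\X$ by~\eqref{eq:precomposition-morphism-2}, so $\pr_\X\circ\omega = \big((\del_{\G/\HH}\circ\gamma)\otimes\id_\X\big)\circ\cop_\X = (\del_\X\otimes\id_\X)\circ\cop_\X = \id_\X$, using $\del_{\G/\HH}\circ\gamma = \del_\X$ (terminality of $\I$) and the comonoid counit law; hence $\sym_\gamma$ is stable for every $\gamma$.

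For the converse, suppose $\C$ is positive and $k \mapsto k^\transpose\circ\omega$ is stable, so $\pr_\X\circ\omega = \id_\X$. Set $\gamma := (\id_{\G/\HH}\otimes\del_\X)\circ\omega$, the $\G/\HH$-marginal of $\omega$; this is a composite of morphisms of $\C^\G$, hence a morphism $(\X,\act_\X)\to(\G/\HH,\mul/\HH)$ there. It remains only to show $\omega = (\gamma\otimes\id_\X)\circ\cop_\X$, for then the given procedure is literally $\sym_\gamma$. This is where positivity enters: the condition $\pr_\X\circ\omega = \id_\X$ says that the $\X$-marginal of $\omega$ is the deterministic morphism $\id_\X$, and in a positive Markov category a morphism $f\colon A \to B\otimes C$ one of whose marginals is deterministic is automatically the independent coupling of its two marginals, $f = (f_B \otimes f_C)\circ\cop_A$. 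Applied to $\omega$ with deterministic $\X$-marginal $\id_\X$, this forces $\omega = (\gamma\otimes\id_\X)\circ\cop_\X$, completing the argument.

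The main obstacle is precisely this last step: splitting a deterministic marginal off as an independent copy genuinely fails in non-positive Markov categories, so one must invoke positivity (or the coupling lemma it yields) rather than manipulate the comonoid and monoidal structure alone — which is also exactly why the converse carries the positivity hypothesis while the forward direction does not. Everything else is routine string-diagram bookkeeping; the only care needed is to match the precise form of the positivity consequence against~\cite[Definition~11.22]{fritz2020infinite}, essentially the statement that a deterministic morphism has a unique coupling with any other morphism.
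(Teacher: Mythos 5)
Your proposal is correct and follows essentially the same route as the paper: both reduce stability of $k \mapsto k^\transpose \circ \omega$ to the marginal condition $(\del_{\G/\HH} \otimes \id_\X) \circ \omega = \id_\X$ (the paper via naturality and computing $(\id_\X)^\transpose$ as the projection, you via the equivalent identity $k^\transpose = k \circ \pr_\X$ for equivariant $k$), verify it for $\omega = (\gamma \otimes \id_\X) \circ \cop_\X$ by the counit law, and invoke the same positivity consequence (that a joint morphism with a deterministic marginal is the independent coupling of its marginals, i.e.\ \cite[Theorem 2.8]{fritz2023dilations}) for the converse.
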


\begin{proof}
	See Section \ref{sec:proof-of-sym-gamma-is-stable} of the Appendix.
\end{proof}

\begin{remark} \label{rem:recursive-symmetrisation}
As a morphism in $\C^\G$, we require $\pre$ already to be $\G$-equivariant.
This in effect pushes back the problem of symmetrisation to the choice of $\pre$, which mirrors the situation for deterministic symmetrisation approaches also \citep{puny2022frame,kaba2023equivariance,kim2023learning}.
Previous work has assumed $\pre$ is given directly in the form of an intrinsically equivariant neural network.
However, our formalism suggests an alternative, \emph{recursive} procedure for obtaining $\pre$, which adds a new layer of flexibility.
In particular, we can set
\[
	\pre \coloneqq \sym_{\pre_0}(\pre_1)
\]
for some $\pre_1 : \Res_\varphi(\X, \act_\X) \to \Res_\varphi(\G/\HH, \mul/\HH)$ in $\C^\HH$, where now $\pre_0 : (\X, \act_\X) \to (\G/\HH, \mul/\HH)$ in $\C^\G$.
If desired, $\pre_0$ could itself be the result of recursive symmetrisation, and so on.
Of course, this still ultimately requires a base case, for which an intrinsically equivariant neural network could be used.
In Section \ref{sec:examples}, we also provide various examples of very simplistic choices that provide a default option in cases where a more complex choice is not available or not desired.
We expect that in many cases the recursive approach will lead to a more expressive $\pre$ than the base case alone, just as previous work on symmetrisation has shown improved performance compared with intrinsically equivariant baselines.
We provide empirical evidence of this in Section \ref{sec:empirical-results} below.

\end{remark}

\subsection{End-to-end procedure} \label{sec:end-to-end-procedure}

We now summarise the complete steps required to symmetrise a morphism $k : \Res_\varphi(\X, \act_\X) \to \Res_\varphi(\Y, \act_\Y)$ in $\C^\HH$ along a homomorphism $\varphi : \HH \to \G$ as described in this section.
\begin{mdframed}
	\begin{enumerate}[wide, labelindent=0pt]
		\item Obtain a $\varphi$-coset map $q : \G \to \G/\HH$ in $\C$
		\item Obtain a section $s : \G/\HH \to \G$ of $q$ in $\C$
		\item Determine the action $\mul/\HH : \G \otimes \G/\HH \to \G/\HH$ induced by Proposition \ref{prop:induced-action-on-cosets}
		\item Obtain a morphism $\pre : (\X, \act_\X) \to (\G/\HH, \mul/\HH)$ in $\C^\G$, either recursively or from some base case
		\item Return $\sym_\pre(k) : (\X, \act_\X) \to (\Y, \act_\Y)$ in $\C^\G$ computed as follows:
		\begin{equation} \label{eq:fully-symmetrised-morphism}
			\tikzfig{fully-symmetrised-morphism}
		\end{equation}
	\end{enumerate}
\end{mdframed}
Here the last diagram is obtained simply by attaching our precomposition morphism \eqref{eq:precomposition-morphism-2} to $k^\sharp$ as obtained using $s$ in \eqref{eq:transpose-plus-precomposition-result-with-section}.
More generally, $k^\sharp$ is still defined even without $s$, although then only implicitly by the expression \eqref{eq:transpose-characterisation} which makes its computation more difficult.

\begin{remark}
	By Examples \ref{ex:set-has-orbits} and \ref{ex:top-has-orbits} and Theorem \ref{thm:topstoch-has-orbits}, a $\varphi$-coset map exists for every homomorphism $\varphi$ in $\Set$, $\Top$, and $\TopStoch$.
	This procedure is therefore always applicable in all of these cases, provided a suitable $\pre$ can be found.
	For other Markov categories such as $\Stoch$, the procedure still applies whenever we can find a $\varphi$-coset map.
	This may not always be possible, although we suspect that any such cases would be somewhat pathological and of less interest practically.
\end{remark}

\begin{example} \label{ex:symmetrising-from-base-markov-category}
	Suppose $\HH \coloneqq \I$ is the trivial group and $\varphi$ is the unique homomorphism
	\[
		\I \to \G.
	\]
	By identifying $\C^\I$ with $\C$ as in Example \ref{ex:trivial-equivariant-markov-category}, symmetrising along $\varphi$ allows us to convert arbitrary morphisms in $\C$ to ones that are $\G$-equivariant.
	Example \ref{ex:trivial-coset-map} shows that $\id_\G$ is always a $\varphi$-coset map, and this is trivially its own section, which lets us take $s \coloneqq \id_\G$.
	Moreover, by Example \ref{ex:trivial-coset-map-action}, the canonical action $\mul/\I$ is just the action of $\G$ on itself by left multiplication.
	As such, all that is required here is a base case $\pre : (\X, \act_\X) \to (\G, \ast)$ in $\C^\G$.
	We give some default options for specific groups and actions in Section \ref{sec:examples}.
\end{example}

\subsection{Concrete instantiation} \label{sec:concrete-instantiation}

We now show how our framework above may be instantiated concretely in various ways.
In doing so, we recover previous methods that have been proposed for symmetrising deterministic functions.
This accords with our discussion in Section \ref{sec:adjunctive-symmetrisation-motivation} about the generality of our approach.
Our framework also extends to provide a methodology for symmetrising \emph{Markov kernels} (Example \ref{ex:symmetrising-markov-kernels}), a task that has not previously been considered in this literature.

\begin{example}
	Take $\C \coloneqq \Set$, so that all the morphisms involved are just functions, and let $\varphi : \I \to \G$ be the trivial homomorphism.
	By Example \ref{ex:symmetrising-from-base-markov-category} we may let $s$ be the identity, and so \eqref{eq:fully-symmetrised-morphism} becomes
	\begin{equation} \label{eq:canonicalisation}
		\sym_\pre(k)(x) = \pre(x) \cdot k(\pre(x)^{-1} \cdot x).
	\end{equation}
	This recovers the \emph{canonicalisation} approach of \citet{kaba2023equivariance}.
	If $\varphi : \HH \to \G$ is a general homomorphism, then \eqref{eq:fully-symmetrised-morphism} becomes instead
	\[
		\sym_\pre(k)(x) = s(\pre(x)) \cdot k(s(\pre(x))^{-1} \cdot x).
	\]
	By letting $\varphi$ be a subgroup inclusion, this recovers the more general \emph{partial canonicalisation} approach from Theorem 3.1 of \citet{kaba2023equivariance}, where our $s \circ \pre$ plays the role of their $h$.

\end{example}

\begin{example} \label{ex:averaging-approach}
	Let $\G$ be a group acting on $\X$ and $\Y \coloneqq \R^d$ in $\Meas$.
	Moreover, let $\varphi : \I \to \G$ be the trivial homomorphism, and $f : \X \to \Y$ a measurable function.
	Now lift these components to $\C \coloneqq \Stoch$ via Proposition \ref{prop:lifting-via-monad}.
	By computing \eqref{eq:fully-symmetrised-morphism} with $k \coloneqq f$ and then applying the expectation operator from Section \ref{sec:deterministic-symmetrisation}, we obtain a measurable function
	\begin{equation} \label{eq:kim-et-al-approach}
		\ave \circ \sym_\pre(f)(x) = \int y \, \sym_\pre(f)(dy|x) = \int g \cdot f(g^{-1} \cdot x) \, \pre(dg|x)
	\end{equation}
	by the law of the unconscious statistician.
	Provided the action on $\Y$ is affine, Proposition \ref{prop:expectation-preserves-equivariance} then implies that this is equivariant.
	This recovers the approach of \citet{kim2023learning}, who consider the case where $\G$ is compact and $\pre$ admits an equivariant (in the sense of \eqref{eq:stochastic-equivariance-density-definition}) conditional density with respect to the Haar measure on $\G$.
	Our framework immediately extends this to the case of a general homomorphism $\varphi : \HH \to \G$.

	In turn, Proposition 1 of \citet{kim2023learning} shows that \emph{frame averaging} \citep[(3)]{puny2022frame} is an instance of their approach, where $\pre(dg|x)$ is uniform on the set produced by the frame at the value $x$.
	The right-hand side of \eqref{eq:kim-et-al-approach} also recovers the (equivariant) \emph{weighted frames} approach from Remark 3.4 of \citet{dym2024equivariant}, explicating its relationship to \citet{kim2023learning}. 
	In addition, by letting $\G \coloneqq \Sn_n$ be the group of permutations of $\{1, \ldots, n\}$, with $\pre(dg|x)$ uniform on $\Sn_n$, and equipping $\Y$ with the trivial action, \eqref{eq:kim-et-al-approach} becomes
	\begin{equation} \label{eq:janossy-pooling}
		\frac{1}{n!} \sum_{\sigma \in \Sn_n} f(\sigma^{-1} \cdot x),
	\end{equation}
	which recovers \emph{Janossy pooling} \citep[Definition 2.1]{murphy2018janossy}, an early example of a symmetrisation procedure in the machine learning literature.
\end{example}

\begin{example} \label{ex:symmetrising-markov-kernels}
	By taking $\C \coloneqq \Stoch$, our framework gives a novel methodology for symmetrising Markov kernels $k : \X \to \Y$ directly.
	We refer to this as \emph{stochastic symmetrisation} to distinguish it from previous methods discussed above, which apply instead to deterministic functions.
	Stochastic symmetrisation is straightforward to implement: given $x \in \X$, an exact sampling procedure for $\sym_\pre(k)(dy|x)$ may be read off directly from \eqref{eq:fully-symmetrised-morphism} as
	\begin{equation} \label{eq:stochastic-symmetrisation-sampling-procedure}
		\bm{C} \sim \pre(dc|x) \qquad \bm{\G} \sim s(dg|\bm{C}) \qquad \bm{\Y} \sim k(dy|{\bm \G}^{-1} \cdot x) \qquad \text{return $\bm{\G} \cdot \bm{\Y}$.}
	\end{equation}
	It follows from the results of this section that this always describes a stochastically equivariant Markov kernel.
	Note here that, for simplicity, we are technically assuming the group and actions are lifted from $\Meas$ via Proposition \ref{prop:lifting-via-monad}, as will be the case in our examples below.
	This allows us to write e.g.\ $\bm{G} \cdot \bm{Y}$ rather than the more verbose $\act_\Y(dy|\bm{G}, \bm{Y})$.

	Some intuition for stochastic symmetrisation is obtained from the special case where $\act_\Y$ is trivial and the setup of Example \ref{ex:symmetrising-from-base-markov-category} applies.
	The sampling procedure \eqref{eq:stochastic-symmetrisation-sampling-procedure} then becomes
	\[
		\bm{\G} \sim \pre(dg|x) \qquad \bm{\Y} \sim k(dy|{\bm \G}^{-1} \cdot x) \qquad \text{return $\bm{\Y}$},
	\]
	so that $\pre$ behaves essentially as a \emph{data augmentation}.
	However, in our framework, this augmentation is considered a part of the overall model, rather than something that is applied externally.
	In particular, the augmentation may be learned during training (Section \ref{sec:empirical-results}), and continues to be applied when the model is deployed at test time.

	Stochastic symmetrisation has various appealing practical properties.
	For instance, it avoids the need for averaging over $\Y$ as in \eqref{eq:kim-et-al-approach}, which may be costly and approximate, and which may not even be defined at all if $\Y$ is nonconvex.
	Likewise, compared with \eqref{eq:canonicalisation}, it does not require a deterministic $\G$-equivariant $\pre : \X \to \G$, which previous work has shown must be discontinuous for many group actions of interest \citep{dym2024equivariant}.
	In contrast, stochastic symmetrisation takes $\pre$ to be a Markov kernel, permitting natural and flexible choices (including for all compact groups) as the examples in the next section show.
\end{example}

Before proceeding, it is worth considering how \eqref{eq:stochastic-symmetrisation-sampling-procedure} could be obtained using a classical measure-theoretic approach.
To do so rigorously would require writing down a function $\Sigma_\Y \times \X \to [0, 1]$ that satisfies the usual Markov kernel axioms, and then showing that this is equivariant in the sense of Example \ref{ex:stochastic-equivariance}.
Attempted directly, this is quite intractable, especially due to the presence of the coset space $\G/\HH$ here.
Moreover, a successful proof of this kind would offer very little conceptual benefit: it would not explain where this approach ``comes from'' or whether it is the only possibility.
In contrast, Markov categories take care of this tedious bookkeeping automatically, abstracting away these details without sacrificing mathematical precision.
In turn, it becomes much easier to identify the higher-level principles that give rise to this approach, such as the bijection from Theorem \ref{thm:partial-left-adjoint-existence}.

\section{Examples} \label{sec:examples}

We now show how implement the steps described in Section \ref{sec:end-to-end-procedure} for a variety of homomorphisms $\varphi : \HH \to \G$.
In each case, we will provide a suitable $\varphi$-coset map, section, and coset action.
We will also give examples of $\pre$ that could be used as base cases in the recursive procedure described in Remark \ref{rem:recursive-symmetrisation}.
These are by no means the only choice, and for instance an intrinsically equivariant neural network could be used instead for this purpose if desired.

Several of the examples involve an abstract Markov category $\C$.
In some other cases, we take $\C \coloneqq \Set$.
However, we do so mainly to simplify our presentation: $\Meas$ or $\Top$ would work just as well, and in turn the various components we describe (coset maps, sections, etc.)\ lift to $\Stoch$ or $\TopStoch$ by Proposition \ref{prop:lifting-via-monad}, and could therefore be used for stochastic symmetrisation also.

\begin{example} \label{ex:compact-group-symmetrisation}
	Let $\G$ be a compact group in a Markov category $\C$, and consider the unique homomorphism
	\[
		\I \to \G
	\]
	out of the trivial group.
	By Example \ref{ex:symmetrisation-along-trivial-homomorphism}, symmetrising along this gives a procedure
	\[
		\C(\X, \Y) \to \C^\G((\X, \act_\X), (\Y, \act_\Y))
	\]
	that sends arbitrary morphisms in $\C$ to ones that are $\G$-equivariant.
	From Example \ref{ex:symmetrising-from-base-markov-category}, the identity $\id_\G$ is both a coset map and a section, and the coset action is just the multiplication operation $\mul$.
	By substituting these into \eqref{eq:fully-symmetrised-morphism}, the result of symmetrising $k : \X \to \Y$ in $\C$ becomes as follows:
	\begin{equation} \label{eq:trivial-symmetrised-morphism}
		\tikzfig{trivial-symmetrised-morphism}
	\end{equation}
	To implement this, all we require here is a choice of base case in $\C^\G$ of form
	\[
		\pre : (\X, \act_\X) \to (\G, \mul)
	\]
	A default choice that works for any $(\X, \act_\X)$ is
	\begin{equation} \label{eq:haar-measure-as-base-case}
		\tikzfig{haar-measure-as-base-case}
	\end{equation}
	where $\haar : \I \to \G$ is the Haar measure of $\G$.
	It follows straightforwardly from Definition \ref{def:compact-group} that this is equivariant with respect to $\act_\X$ and $\mul$.
	In this way, we obtain a strategy that works for all actions of compact groups.

	Since $\haar$ behaves as a uniform distribution on $\G$, the choice of \eqref{eq:haar-measure-as-base-case} may not be very performant when used in \eqref{eq:trivial-symmetrised-morphism} directly, especially in high dimensional settings.
	However, it can yield much stronger results when used as the base case of the recursive procedure from Remark \ref{rem:recursive-symmetrisation}.
	We demonstrate this empirically in Section \ref{sec:empirical-results} below.
\end{example}

\begin{example} \label{ex:translation-group-example}
	In $\Set$, let $\T_d$ denote the translation group, namely $\R^d$ equipped with vector addition.
	We consider how to symmetrise along the unique homomorphism
	\[
		\I \to \T_d.
	\]
	Here we can reuse the same approach as Example \ref{ex:compact-group-symmetrisation}, although we now need a base case in $\Set^{\T_d}$ of the form
	\[
		\pre : (\X, \act_\X) \to (\T_d, +).
	\]
	A suitable choice will depend on $(\X, \act_\X)$.
	A common situation takes $\X \coloneqq \R^{d \times n}$, and obtains $\act_\X$ by columnwise addition, so that
	\[
		t \cdot (x_1, \ldots, x_n) \coloneqq (x_1 + t, \ldots, x_n + t)
	\]
 	where $t \in \T_d$ and $x_i \in \R^d$.
	An obvious $\pre$ is then the columnwise mean
	\begin{equation} \label{eq:columnwise-mean}
		\pre(x_1, \ldots, x_n) \coloneqq \frac{1}{n} \sum_{i=1}^n x_i,
	\end{equation}
	which is easily verified to be equivariant as required.
\end{example}

\begin{example} \label{ex:semidirect-product-symmetrisation-example-iN}
	Let $\N \rtimes_\actr \HH$ be a semidirect product in a Markov category $\C$.
	From Example \ref{ex:semidirect-product-coset-maps}, this always comes equipped with an inclusion homomorphism:
	\[
		i_\N : \N \to \N \rtimes_\actr \HH.
	\]
	By Example \ref{ex:semidirect-product-symmetrisation}, symmetrising along $i_\N$ gives a procedure
	\[
		\C^\N((\X, \act_{\X,\N}), (\Y, \act_{\Y,\N})) \to \C^{\N \rtimes_\actr \HH}((\X, \act_\X), (\Y, \act_\Y))
	\]
	that converts $\N$-equivariant morphisms to ($\N \rtimes_\actr \HH$)-equivariant ones, where here we have used Proposition \ref{prop:semidirect-product-actions} to decompose $\act_\X$ into an $\HH$-action $\act_{\X,\HH}$ followed by an $\N$-action $\act_{\X,\N}$, and similarly for $\act_\Y$.
	In this context:
	\begin{itemize}
		\item The projection $p_\HH : \N \rtimes_\actr \HH \to \HH$ is an $i_\N$-coset map by Example \ref{ex:semidirect-product-coset-maps} 
		\item The other inclusion $i_\HH : \HH \to \N \rtimes_\actr \HH$ is a section of $p_\HH$ (as may be checked)
		\item The coset action is $\mul/\N$ from Example \ref{ex:semidirect-product-coset-maps-induced-actions}
	\end{itemize}
	Substituting these into \eqref{eq:fully-symmetrised-morphism} gives the result of symmetrising $k : (\X, \act_{\X,\N}) \to (\Y, \act_{\Y,\N})$ in $\C^\N$ as follows:
	\begin{equation} \label{eq:semidirect-N-symmetrised-morphism}
		\tikzfig{semidirect-N-symmetrised-morphism}
	\end{equation}
	where the equality follows here from the definitions of $i_\HH$ and the inversion operation for the semidirect product (Definition \ref{def:semidirect-product-definition}), together with some basic group theoretic manipulations.

	To implement this approach, all that remains is to find a base case in $\C^{\N \rtimes_\actr \HH}$ of the form
	\begin{equation} \label{eq:semidirect-product-sym-via-N-gamma}
		\pre : (\X, \act_\X) \to (\HH, \mul/\N).
	\end{equation}
	Inspecting Example \ref{ex:semidirect-product-coset-maps-induced-actions}, we see that $\mul/\N$ decomposes as the $\HH$-action $\mul_\HH$ followed by the trivial $\N$-action.
	By Proposition \ref{prop:semidirect-product-equivariance}, we therefore want $\pre$ to be equivariant with respect to $\act_{\X,\HH}$ and $\mul_\HH$, and invariant with respect to $\act_{\X,\N}$.
	When $\C = \Set$, this recovers Theorem 3.2 of \citet{kaba2023equivariance}.
	
	Whenever $\HH$ is compact, a default choice of $\pre$ here is given by the Haar measure $\haar$ in a similar way to Example \ref{ex:compact-group-symmetrisation},\footnote{It holds that $\G \cong \I \rtimes_{\triv} \G$ as groups in $\C$, and so Example \ref{ex:compact-group-symmetrisation} is really a special case of this example.} namely
	\[
		\tikzfig{haar-measure-as-base-case-1}
	\]
	which is easily checked to satisfy the required equivariance conditions.
	This works without assumptions on $\N$, which may be noncompact.
	As a result, symmetrisation can be performed in a fully compositional way in this case: if we know how to obtain $\N$-equivariant morphisms and $\HH$ is compact, then we know how to obtain ($\N \rtimes_\actr \HH$)-equivariant morphisms also.
	Note however that this approach is not possible when $\C = \Set$ as in \citet{kaba2023equivariance}, since then a Haar measure does not exist unless $\G$ is trivial.
	By allowing a more general $\C$, we obtain greater flexibility here.

\end{example}

\begin{example} \label{ex:semidirect-product-symmetrisation-example-iH}
	Let $\N \rtimes_\actr \HH$ be a semidirect product in a Markov category $\C$.
	In addition to the inclusion $i_\N$ from the previous example, we can also symmetrise along the other inclusion homomorphism from Example \ref{ex:semidirect-product-coset-maps}
	\[
		i_\HH : \HH \to \N \rtimes_\actr \HH.
	\]
	By Example \ref{ex:semidirect-product-symmetrisation}, this gives a procedure
	\[
		\C^\HH((\X, \act_{\X,\HH}), (\Y, \act_{\Y,\HH})) \to \C^{\N \rtimes_\actr \HH}((\X, \act_\X), (\Y, \act_\Y))
	\]
	that sends $\HH$-equivariant morphisms to ($\N \rtimes_\actr \HH$)-equivariant ones, where here we have again decomposed $\act_\X$ into an $\HH$-action $\act_{\X,\HH}$ and an $\N$-action $\act_{\X,\N}$ by Proposition \ref{prop:semidirect-product-actions}, and similarly for $\act_\Y$.
	The situation is now dual to Example \ref{ex:semidirect-product-symmetrisation-example-iN}:
	\begin{itemize}
		\item The projection $p_\N : \N \rtimes_\actr \HH \to \N$ is an $i_\HH$-coset map by Example \ref{ex:semidirect-product-coset-maps}
		\item The other inclusion $i_\N : \N \to \N \rtimes_\actr \HH$ is a section of $p_\N$ (as may be checked)
		\item The coset action is $\mul/\HH$ from Example \ref{ex:semidirect-product-coset-maps-induced-actions}
	\end{itemize}
	By substituting these into \eqref{eq:fully-symmetrised-morphism} and applying some basic manipulations, the result of symmetrising $k : (\X, \act_{\X,\HH}) \to (\Y, \act_{\Y,\HH})$ in $\C^\HH$ has the same form as \eqref{eq:semidirect-N-symmetrised-morphism}, but with each occurrence of the symbol $\HH$ replaced by $\N$ instead.
	To implement this procedure, we now only need a base case morphism in $\C^{\N \rtimes_\actr \HH}$ of the form
	\[
		\pre : (\X, \act_\X) \to (\N, \mul/\HH).
	\]
	Unlike when symmetrising along the other inclusion $i_\N$, a general choice of $\pre$ (that leverages, say, compactness) seems less forthcoming here, but case-by-case choices are still possible.
	For example, in $\Set$, consider the Euclidean group $\Euc(d) = \T_d \rtimes_\actr \Orth(d)$ from Example \ref{ex:euclidean-groups} (the case of $\SE(d)$ is similar).
	From its definition in Example \ref{ex:semidirect-product-coset-maps-induced-actions}, we see that $\mul/\HH$ can be written here as 
	\[
		(t, Q) \cdot t' = t + Qt'
	\]
	for $t, t' \in \T_d$ and $Q \in \Orth(d)$.
	A common situation takes $\X \coloneqq \R^{d \times n}$ and obtains $\act_\X$ in a columnwise fashion, so that
	\[
		(t, Q) \cdot (x_1, \ldots, x_n) \coloneqq (Qx_1 + t, \ldots, Qx_n + t).
	\]
	Intuitively, $(x_1, \ldots, x_n)$ is thought of as a cloud of $n$ points $x_i \in \R^d$, and $\Euc(d)$ acts by rotating and then translating this cloud rigidly.
	It is straightforward to check that the columnwise mean \eqref{eq:columnwise-mean} from Example \ref{ex:translation-group-example} also provides a suitable $\pre$ here.\footnote{It holds that $\T_d \cong \T_d \rtimes_{\triv} \I$ as groups in $\C$, and so Example \ref{ex:translation-group-example} is really a special case of this example.}
	The resulting model $\sym_\gamma(k)$ first subtracts the centroid from its input point cloud, then maps this through $k$, and finally adds the centroid back on to the result.
	This recovers the trick described in Section 2.2 of \citet{kim2023learning} to obtain $\SE(d)$-equivariance, showing it arises from the same underlying framework as the other examples we consider.
\end{example}

\begin{example} \label{ex:semidirect-product-full-symmetrisation}
	Given a semidirect product $\N \rtimes_\actr \HH$ in a Markov category $\C$, it is also possible to symmetrise directly along the unique homomorphism
	\[
		I \to \N \rtimes_\actr \HH,
	\]
	as opposed to the $\N$- or $\HH$-inclusions from the previous examples.
	From Example \ref{ex:symmetrisation-along-trivial-homomorphism}, a procedure of this kind allows us to obtain ($\N \rtimes_\actr \HH$)-equivariant morphisms from arbitrary morphisms in $\C$.
	Here we can reuse the same overall approach as in Example \ref{ex:compact-group-symmetrisation}, and only need to select a base case in $\C^{\N \rtimes_\actr \HH}$ of the form
	\begin{equation} \label{eq:semidirect-product-full-symmetrisation-gamma-type}
		\pre : (\X, \act_\X) \to (\N \rtimes_\actr \HH, \mul).
	\end{equation}
	By inspection of the semidirect product multiplication in Definition \ref{def:semidirect-product-definition}, it follows straightforwardly that
	\[
		(\N \rtimes_\actr \HH, \mul) = (\N, \mul/\HH) \otimes (\HH, \mul/\N),
	\]
	where $\mul/\HH$ and $\mul/\N$ are the coset actions of $\N \rtimes_\actr \HH$ from Example \ref{ex:semidirect-product-coset-maps-induced-actions}.
	A general strategy is therefore to take
	\[
		\tikzfig{semidirect-gamma-from-marginals}
	\]
	where $\pre_\N$ and $\pre_\HH$ are morphisms in $\C^{\N \rtimes_\actr \HH}$ of the types shown, which in turn ensures $\pre$ has the desired type \eqref{eq:semidirect-product-full-symmetrisation-gamma-type} simply because $\C^{\N \rtimes_\actr \HH}$ is a Markov category.
	In $\Set$, this recovers the approach used by \citet{kaba2023equivariance} for the Euclidean group, who obtain $\pre_\N$ and $\pre_\K$ using intrinsically equivariant neural networks (see their equations (9) and (10)).

	In a sense, this approach is the easiest possible here, since any other $\pre$ of the form \eqref{eq:semidirect-product-full-symmetrisation-gamma-type} immediately gives rise to suitable choices of $\pre_\N$ and $\pre_\HH$ by projecting onto $\N$ and $\HH$ respectively.
	Notice that this also is a stronger requirement than in Examples \ref{ex:semidirect-product-symmetrisation-example-iN} and \ref{ex:semidirect-product-symmetrisation-example-iH}, which required $\pre_\N$ \emph{or} $\pre_\HH$ of this form, but not both.
	To some extent, this additional complexity is to be expected given we are now symmetrising arbitrary morphisms in $\C$, whereas previously we were symmetrising morphisms that were already $\N$- or $\HH$-equivariant.
\end{example}

\begin{example}
	Interestingly, we can even obtain equivariance with respect to the full general linear group $\GL(d, \R)$.
	In $\Set$, consider the inclusion homomorphism
	\[
		\Orth(d) \hookrightarrow \GL(d, \R).
	\]
	By Example \ref{ex:subgroup-inclusion-symmetrisation-procedure}, symmetrising along this converts $\Orth(d)$-equivariant morphisms to $\GL(d, \R)$-equivariant ones.
	Here a coset map $\GL(d, \R) \to \PD(d)$ is given by $A \mapsto AA^T$ from Example \ref{ex:glnr-coset-map}, and its induced coset action $\mul/\Orth(d)$ is computed as $A \cdot P = A P A^T$ by Example \ref{ex:glnr-coset-map-action}.
	A section of the coset map may also be obtained straightforwardly by the Cholesky decomposition.
	We therefore only need a base case
	\[
		\pre : (\X, \act_\X) \to (\PD(d), \mul/\Orth(d))
	\]
	in $\Set^{\GL(d, \R)}$. 
	Consider the specific example of $\X \coloneqq \R^{d \times n}$, where $\act_\X$ is obtained by left-multiplication, namely $A \cdot B \coloneqq A B$.
	For this we may take $\pre(B) \coloneqq B B^T$, which is equivariant since
	\[
		\pre(A \cdot B) = A B B^T A^T = A \pre(B) A^T = A \cdot \pre(B).
	\]
	As for other symmetrisation procedures, this could be used compositionally.
	For example, we could symmetrise along the homomorphisms
	\[
		\I \to \Orth(d) \hookrightarrow \GL(d, \R)
	\]
	in sequence to obtain an $\GL(d, \R)$-equivariant morphism starting from an arbitrary morphism in $\Set$ (as opposed to one that is already $\Orth(d)$-equivariant).
	This approach may be of interest in its own right for applications such as affine-invariant image classification \citep{macdonald2022enabling,mironenco2024lie}.
	More generally, it shows that our framework can encompass the actions of even very complex groups in a natural way.
\end{example}

\section{Application and numerical results}  \label{sec:empirical-results}

We now describe a concrete application of our stochastic symmetrisation approach.
In particular, we apply it to the method of \citet{kim2023learning}, which has obtained state-of-the-art results for deterministic symmetrisation across a variety of tasks.
Although their overall model is deterministic, \citet{kim2023learning} require a stochastically equivariant neural network as a crucial subcomponent, and use an intrinsically equivariant neural network for this purpose.
We show how this component can instead be obtained using our methodology, which allows us apply more flexible off-the-shelf architectures that are not subject to equivariant constraints.
Empirically, this leads to improved performance over the intrinsic approach on several synthetic examples.
More generally, we believe this case study also demonstrates the conceptual and notational precision that Markov categories provide for describing complex machine learning systems, which may be useful in other applications beyond ours.

Other use-cases for stochastic symmetrisation appear possible beyond the one we consider here.
In particular, our methodology could also be used more directly to obtain an equivariant model that is overall stochastic, rather than as a component of a deterministic symmetrisation procedure.
This may be of interest in applications such as deep generative modelling, or where uncertainty quantification is required.
We leave this for future work.

\subsection{Architecture}

We describe the approach of \citet{kim2023learning} within the context of our framework.
For concreteness, we formalise this entirely in $\Stoch$, which means we think of all model components as Markov kernels, including the deterministic neural networks that we use.
Given i.i.d.\ samples from some distribution $p(dx, dy)$ on $\X \otimes \Y$, the overall goal is to learn a deterministic $f : \X \to \Y$ that we will use as a predictor.
We assume that $\Y$ is real-valued (and possibly multidimensional), and that both $\X$ and $\Y$ are equipped with the actions of some group $\G$ in $\Stoch$, and we would like $f$ to be equivariant with respect to these actions.
We think of $f$ as depending on some additional parameters that correspond to neural network weights, although to streamline notation we keep these implicit in what follows.

\paragraph{Baseline}

The approach of \citet{kim2023learning} in effect applies the strategy described in Section \ref{sec:deterministic-symmetrisation} above: they first symmetrise an unconstrained neural network in $\Stoch$, and then average over the output to obtain a deterministic predictor.
More succinctly, they obtain $f$ via:
\begin{align}
	k &\coloneqq \sym_\pre(\nn_k) \notag \\
	f &\coloneqq \ave(k), \label{eq:kim-et-al-model-experiments}
\end{align}
where $\ave$ is the expectation operator from Definition \ref{def:expectation-operator}, and $\nn_k : \X \to \Y$ is some unconstrained neural network.
In both \citet{kim2023learning} and in our own experiments, this component is deterministic, although this is not strictly necessary.
Likewise, $\pre : (\X, \act_\X) \to (\G, \mul)$ is a morphism in $\Stoch^\G$.
For this component, \citet{kim2023learning} use the following architecture:
\begin{equation} \label{eq:kim-gamma-architecture}
	\tikzfig{kim-gamma-architecture}
\end{equation}
Here, at a high level:
\begin{itemize}
	\item $\nn_\pre$ is some deterministic neural network that forms the ``backbone'' of $\pre$
	\item $\eta$ is some noise distribution that allows $\pre$ overall to be stochastic
	\item $\proj$ projects its input onto $\G$, which is necessary because many groups live on a manifold, whereas neural networks typically output values in some Euclidean space.
\end{itemize}
If these components are all suitably equivariant, it follows immediately that their composition $\pre$ is too. %
To ensure this, \citet{kim2023learning} obtain $\nn_\pre$ by using some intrinsically equivariant architecture off-the-shelf.
They also provide choices of $\eta$ and $\proj$ suitable for several specific groups of interest \citep[Section 2.2]{kim2023learning}.

\paragraph{Our approach}

Rather than relying on an intrinsically equivariant neural network, we obtain $\pre$ itself through stochastic symmetrisation.
All up, our architecture is as follows:
\begin{align}
	\pre &\coloneqq \sym_{\pre_0}(\pre_1) \notag \\
	k &\coloneqq \sym_\pre(\nn_k) \notag \\
	f &\coloneqq \ave(k) \label{eq:our-approach-in-experiments}
\end{align}
This corresponds to symmetrising $\nn_k$ using the recursive approach described in Remark \ref{rem:recursive-symmetrisation}, and then again averaging the result as in Section \ref{sec:deterministic-symmetrisation}.
Here $\nn_k : \X \to \Y$ is again some unconstrained neural network, and $\pre_0 : (\X, \act_\X) \to (\G, \mul)$ is again some morphism in $\Stoch^\G$, which means that $\pre$ is a morphism in this category also.
However, now $\pre_1 : \X \to \G$ is an unconstrained morphism in $\Stoch$, and so can be chosen more freely.
For example, $\pre_1$ can have the same form as \eqref{eq:kim-gamma-architecture}, but where its ``backbone'' neural network, $\eta$, and $\proj$ are now completely unconstrained rather than $\G$-equivariant.

\subsection{Training objective}

We discuss how to train models of this kind.
Overall, we would like to learn the parameters of our predictor $f$ by stochastic gradient descent.
Recall from Definition \ref{def:expectation-operator} that to sample from $f(d\hat{y}|x) = \ave(k)(d\hat{y}|x)$ requires computing the integral $\int \hat{y} \, k(d\hat{y}|x)$.
This is usually intractable, which poses a challenge for obtaining unbiased gradient estimates of the expected loss.
However, for a loss function $\ell : \Y \otimes \Y \to \R$ that is convex in its second argument, Jensen's inequality yields the following upper bound for any $m \in \Nat$:
\begin{align}
	\int \ell(y, \hat{y}) \, f(d\hat{y}|x) \, p(dx, dy) \, 
		&= \int \ell\left(y, \int \hat{y} \, k(d\hat{y}|x) \right) \, p(dx, dy) \notag \\
		&\leq \int \int \ell\left(y, \frac{1}{m} \sum_{i=1}^m \hat{y}_i\right) \, k(d\hat{y}_1|x) \cdots k(d\hat{y}_m|x) \, p(dx, dy). \label{eq:expected-loss-upper-bound}
\end{align}
Provided $k$ is reparameterisable \citep[Section 2.4]{kingma2022autoencoding}, we may use Monte Carlo to estimate gradients of the right-hand side in the parameters of the model, allowing this bound to be used as a surrogate training objective in place of the left-hand side.
For both the baseline and our approach, reparameterisability holds if $\eta$ (which is the only source of randomness here) is some fixed noise distribution that does not depend on the parameters of the model, as we assume in our experiments.
This approach with $m = 1$ has previously appeared in equation (55) of \citet{kim2023learning}, as well as equations (10) of \citet{murphy2018janossy} and (9) of \citet{murphy2019relational}.

\begin{remark}
	We briefly explain why \eqref{eq:expected-loss-upper-bound} is reasonable to use as a surrogate objective.
	First, by a similar argument to Theorem 1 of \citet{burda2016importance}, it is straightforward to show that this bound becomes tighter as $m$ increases, and exact in the limit $m \to \infty$ under mild regularity conditions.
	In addition, for any fixed $m$, the bound is also exact if $k(d\hat{y}|x)$ is always Dirac, or equivalently if $k$ is deterministic \citep[Example 10.5]{fritz2020synthetic}.
	It follows that if \eqref{eq:expected-loss-upper-bound} is globally optimised over the parameters of the model, the resulting $k$ (which may be stochastic) will perform at least as well as the best-performing \emph{deterministic} $k$ the model can express.
	But now $k$ has at least two opportunities to become deterministic, which occurs if either:
	\begin{itemize}
		\item $\nn_k$ is deterministic and $\G$-equivariant.
		In this case $k = \sym_\pre(\nn_k) = \nn_k$ is also deterministic, where the second equality holds because $\sym_\pre$ is stable (Proposition \ref{prop:sym-gamma-is-stable}). This generalises an observation made in Section 2.3 of \citet{murphy2018janossy}.

		\item $\nn_k$ and $\pre$ are both deterministic.
		In this case, $k = \sym_\pre(\nn_k)$ is deterministic because $\sym_\pre$ restricts to a symmetrisation procedure in the determinsitic subcategory $\Stoch_\Det$.
		This in effect reduces to the canonicalisation approach of \citet{kaba2023equivariance}. %
	\end{itemize}
	This ``double robustness'' suggests that, under typical circumstances, it will be fairly easy for a model to approximate a rich family of deterministic $k$, which gives reason to anticipate good performance even when optimising the bound \eqref{eq:expected-loss-upper-bound} rather than the true expected loss.
\end{remark}

\subsection{Numerical examples}

We applied this setup to several synthetic problems, described now.
We chose examples that involve difficult computations and interesting equivariance constraints that can be easily considered across a range of dimensions.
However, our results are mainly intended to serve as a proof of concept that demonstrates the flexibility of our approach across a variety of complex group actions.
Further empirical work is needed to establish that the improvements reported here also carry over to other machine learning tasks at larger scales.

In each case, we obtained $p(dx, dy)$ as the distribution of some random variables $(\bm{X}, \bm{Y})$, whose definitions we give below along with the choice of $\G$ and its actions on $\X$ and $\Y$.
In all cases, it is straightforward to check that the following holds for all $g \in \G$:
\[
	(g \cdot \bm{X}, g \cdot \bm{Y}) \eqd (\bm{X}, \bm{Y}).
\]
This ensures the conditional of $\bm{Y}$ given $\bm{X}$ is also equivariant \citep[Proposition 1]{bloem2020probabilistic}, which motivates the use of an equivariant predictor $f$ in each case.
Each problem was also parameterised by a problem dimension $d$, which we varied to assess scalability.

\paragraph{Covariance estimation}

Our goal here was to learn to estimate the covariance matrix of a sample of i.i.d.\ Gaussian vectors.
In particular, we set $\X \coloneqq \R^{d \times n}$ and $\Y \coloneqq \R^d$, where $n \coloneqq 25$.
We then sampled $\bm{Y} \sim \mathrm{Wishart}_d(I_d, d)$, and obtained $\bm{X}$ consisting of $n$ i.i.d.\ columns, each a $d$-dimensional Gaussian with mean zero and covariance $\bm{Y}$.
We took $\G \coloneqq \Orth(d)$ to be the orthogonal group, and let this act on $\X$ and $\Y$ respectively via
\[
	Q \cdot A \coloneqq QA \qquad\qquad Q \cdot A \coloneqq QAQ^T.
\]

\paragraph{Linear regression}

Here the goal was to learn to estimate the coefficients of a linear regression model from some input data.
For this, we set $\X \coloneqq \R^{d \times n}$ and $\Y \coloneqq \R^d$, where again we took $n \coloneqq 25$.
We sampled regression coefficients $\bm{Y}$ from a standard $d$-dimensional Gaussian.
We then sampled a random $(d \times n)$-dimensional design matrix and $\bm{U}$ a $d$-dimensional noise vector with i.i.d.\ standard Gaussian entries, and set $\bm{X} \coloneqq (\bm{Z}, \bm{Y}^T \bm{Z}  + \bm{U})$ to be the pair of the design matrix and the observed values.
We took $\G \coloneqq \Orth(d)$, and let this act on $\X$ and $\Y$ respectively via
\[
	Q \cdot (z, y) \coloneqq (Qz, y) \qquad Q \cdot y \coloneqq Qy,
\]
where $z \in \R^{d \times n}$ and $y \in \R^d$.

\paragraph{Matrix exponentiation}

Our goal here was to learn the matrix exponential.
We took both $\X$ and $\Y$ to be $\R^{d \times d}$, with $\bm{X}$ having i.i.d.\ standard Gaussian entries, and $\bm{Y} \coloneqq e^{\bm{X}}$.
We set $\G \coloneqq \Orth(d)$ to be the orthogonal group, and let this act on both $\X$ and $\Y$ by
\[
	Q \cdot A \coloneqq Q A Q^T.
\]

\paragraph{Matrix inversion}

Our goal here was to learn the matrix inversion map.
For this, we took both $\X$ and $\Y$ to be the general linear group $\R^{d \times d}$, with $\bm{X}$ having i.i.d.\ standard Gaussian entries, and $\bm{Y} \coloneqq \bm{X}^{-1}$.
We set $\G \coloneqq \Orth(d) \times \Orth(d)$ to be the product of the orthogonal group with itself, and let this act on $\X$ and $\Y$ respectively by
\[
	(Q, R) \cdot A \coloneqq Q A R^T \qquad\qquad (Q, R) \cdot A \coloneqq R A Q^T.
\]
For the models involving EMLP architectures (see below), we could not use this action directly, since the current official implementation of EMLP\footnote{\url{https://github.com/mfinzi/equivariant-MLP}} does not support product groups where both components have the same type.
Accordingly, we took $\G \coloneqq \Orth(d)$, and let this act on $\X$ and $\Y$ via $Q \cdot A \coloneqq QA$ and $Q \cdot A \coloneqq AQ^T$ respectively instead.

\paragraph{Baseline}

We compared against the method of \citet{kim2023learning} in \eqref{eq:kim-et-al-model-experiments}.
We took $\nn_k : \X \to \Y$ to be an MLP with two hidden layers of 250 hidden units and $\tanh$ activations. %
Likewise, $\eta$ was a $d$-dimensional standard Gaussian, and $\proj$ was the Gram-Schmidt procedure, matching the choices in Section 2.2 of \citet{kim2023learning} for the orthogonal group.
For $\nn_\pre$ in \eqref{eq:kim-gamma-architecture}, we used an intrinsically equivariant EMLP with the same default architecture involving bilinear and gated nonlinear layers as proposed by \citet{finzi2021practical} (see their Figure 4).
This component had one hidden layer with 250 hidden units, following \citet{kim2023learning} in making this smaller than $\nn_k$.

\paragraph{Our method}

For our model \eqref{eq:our-approach-in-experiments}, we took $\nn_k$ to be the same as for the baselines.
We obtained the equivariant base case $\pre_0 : \X \to \G$ from the Haar measure as in \eqref{eq:haar-measure-as-base-case}, and for the ``backbone'' $\pre_1 : \X \to \G$ used the architecture \eqref{eq:kim-gamma-architecture}, with $\eta$ and $\proj$ the same as for the baselines, but with $\nn_{\pre_0}$ now an unconstrained MLP with one hidden layer of 250 hidden units.
To sample from the Haar measure on $\Orth(d)$ during training and testing, we used the method from Section 5 of \citet{mezzadri2007generate}.
We used $\tanh$ activations in all cases.

\paragraph{Additional baselines}

As a further baseline, we trained $\nn_k$ directly, without applying any symmetrisation.
We also trained a model in which $\nn_k$ was an intrinsically equivariant EMLP, as well as a version of \eqref{eq:kim-et-al-model-experiments} where $\pre$ was directly obtained using the Haar measure as in \eqref{eq:haar-measure-as-base-case}.
Since these models did not require an additional neural network for $\pre$, we made $\nn_k$ deeper than before, using three hidden layers of 250 hidden units instead of two.
We used $\tanh$ activations other than for the ELMP model, which has its own custom nonlinearities.

\paragraph{Training and testing details}

We did not use a finite dataset, but instead sampled new training and testing examples on the fly.
In this way, we sought to determine the overall expressiveness of the models without concerns about overfitting.
We used the mean squared error loss for the covariance estimation and linear regression problems, and the relative sum of squared errors loss for the matrix exponentiation and inversion problems, i.e.
\[
	\ell(y, \hat{y}) \coloneqq \frac{\norm{y - \hat{y}}^2}{\norm{y}^2},
\]
where $\norm{\cdot}$ denotes the $L^2$ norm, which is easily checked to be convex in $\hat{y}$.
We trained our symmetrised models using the upper bound \eqref{eq:expected-loss-upper-bound} as described earlier, taking $m \coloneqq 10$.
For the baseline unsymmetrised MLP, we optimised the expected loss directly.
In all cases we used the Adam optimiser \citep{kingma2014adam} with default hyperparameters and a learning rate of $10^{-4}$, and performed $10^5$ gradient steps with a batch size of 100.
At test time, given each test input $x \in \X$, we estimated the prediction of each symmetrised model $f(d\hat{y}|x) = \ave(k)(d \hat{y}|x)$ using Monte Carlo, averaging over $100$ i.i.d.\ samples from $k(dy|x)$.

\paragraph{Results}

Figure \ref{fig:matrix-inversion-results} shows the average test loss after training each model across the range of dimension $d$ that we considered.
Our method consistently performed on par or better than all other models, including those that used an intrinsically equivariant neural network for $\pre$.
This indicates that the greater flexibility obtained by obtaining $\pre$ via symmetrisation, rather than an intrinsically equivariant neural network, can lead to improved performance.
The EMLP models performed strongly in low dimensions, but did not scale well to higher dimensions.
Finally, the remaining two additional baselines performed the worst, which is somewhat to be expected given their more simplistic architectures.

\begin{figure}
	\centering
	\begin{subfigure}[b]{0.49\textwidth}
		\includegraphics[width=\textwidth]{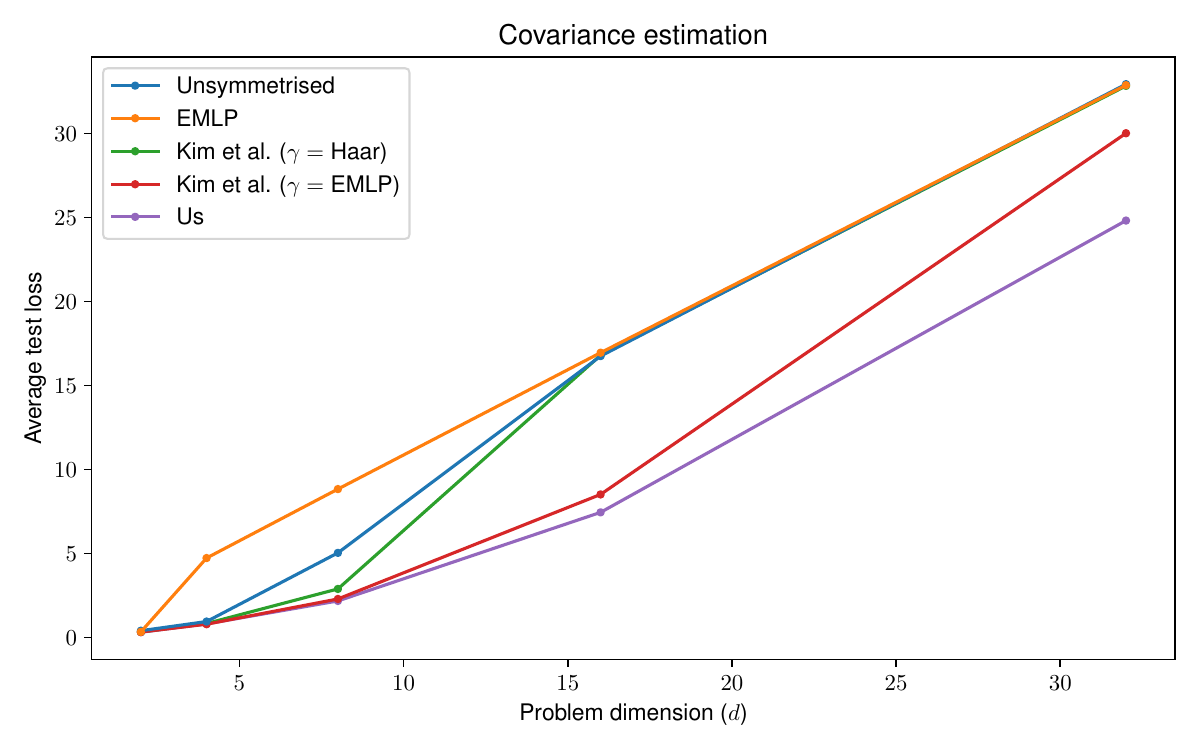}
	\end{subfigure}
	\hfill
	\begin{subfigure}[b]{0.49\textwidth}
		\includegraphics[width=\textwidth]{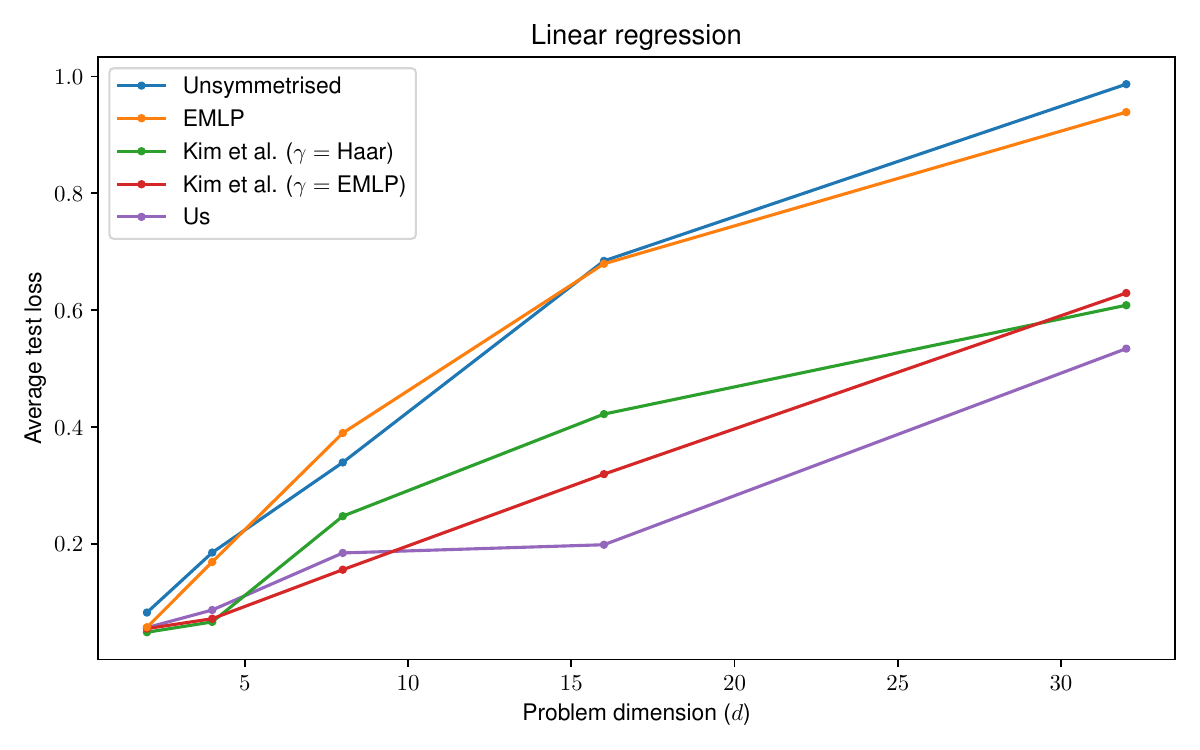}
	\end{subfigure}

	\begin{subfigure}[b]{0.49\textwidth}
		\includegraphics[width=\textwidth]{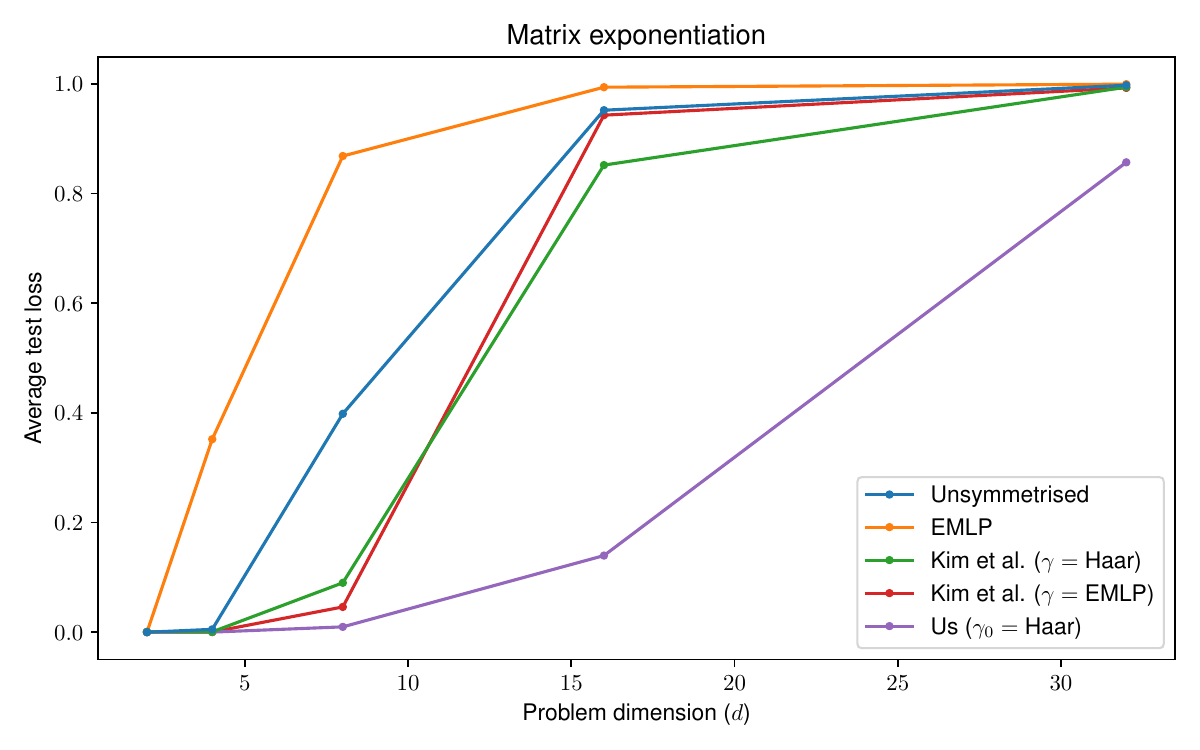}
	\end{subfigure}
	\hfill
	\begin{subfigure}[b]{0.49\textwidth}
		\includegraphics[width=\textwidth]{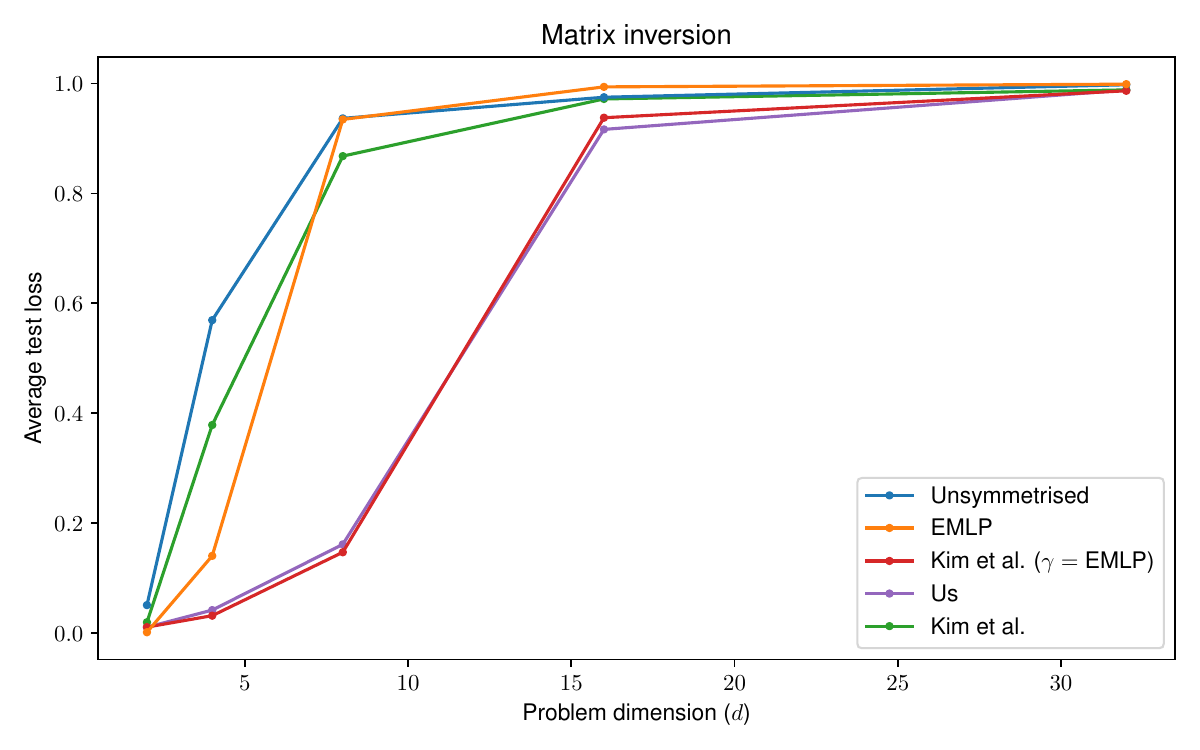}
	\end{subfigure}

	\caption{Average test loss obtained after training each model across the range of problem dimensions $d$ considered.} \label{fig:matrix-inversion-results}
\end{figure}

\section*{Acknowledgements}

This work would not have been possible without my parents John and Cathy and my partner Rachael, who supported me throughout and kept me focussed and on track.
I am also very grateful to Paolo Perrone for his encouragement, enthusiasm, and helpful conversations, and for the useful feedback he, Pedro Amorim, and Dario Stein gave on the first version of this paper.
My many thanks also to Sam Staton and his research group more generally for creating a welcoming and friendly community that has only made my enthusiasm for this topic grow.

\begin{small}
\bibliography{main}
\end{small}

\newpage

\appendix

{%
\addtocontents{toc}{\protect\setcounter{tocdepth}{0}} %

\section{Proofs: Group theory in Markov categories}

\subsection{Proof of Proposition \ref{prop:stochastic-equivariance-density}} \label{sec:proof-stochastic-equivariance-density}

\begin{proof}
	For all measurable $B \subseteq \Y$, $g \in \G$, and $x \in \X$, it follows that
	\begin{align*}
		k(B|g \cdot x) &= \int_{B} p(y|g \cdot x) \, \mu(dy) \\
		&= \int_B p(y|g \cdot x) \, (g \cdot \mu)(dy) \\
		&= \int_{g^{-1} \cdot B} p(g \cdot y|g \cdot x) \, \mu(dy) \\
		&= \int_{g^{-1} \cdot B} p(y|x) \, \mu(dy) \\
		&= k(g^{-1} \cdot B | x) \\
		&= g \cdot k(B|x),
	\end{align*}
	where the third step uses the law of the unconscious statistician.
	(Here $g^{-1} \cdot B \coloneqq \{g^{-1} \cdot y : y \in B\}$.)
	By Example \ref{ex:stochastic-equivariance}, this shows that $k$ is equivariant.
\end{proof}

\subsection{Results on orbit maps}

\begin{proposition} \label{prop:coequalisers-in-deterministic-category}
	Let $\C$ be a Markov category, and suppose the following is a coequaliser diagram in $\C$:
	\[
		\begin{tikzcd}
			X \ar[shift left=1.5]{r}{f} \ar[shift right=1.5,swap]{r}{g} & Y \ar{r}{h} & Z
		\end{tikzcd}
	\]
	If $f$, $g$, and $h$, are deterministic, then this is also a coequaliser diagram in $\Cdet$.
	Additionally, if $\C$ is positive and $f$ and $g$ are deterministic, then $h$ is deterministic.
\end{proposition}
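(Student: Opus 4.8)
The plan is to treat the two assertions separately, using only that a coequaliser is an epimorphism, the structural equations of a Markov category, and --- for the second assertion --- the positivity axiom.

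\emph{First assertion.} Let $k : Y \to W$ be a deterministic morphism with $k \circ f = k \circ g$. Since the given diagram is a coequaliser in $\C$, there is a unique $u : Z \to W$ in $\C$ with $u \circ h = k$, and I would check that $u$ is automatically deterministic: using that $k$ and $h$ are deterministic,
\[
	\cop_W \circ u \circ h \;=\; \cop_W \circ k \;=\; (k \otimes k) \circ \cop_Y \;=\; (u \otimes u) \circ (h \otimes h) \circ \cop_Y \;=\; (u \otimes u) \circ \cop_Z \circ h ,
\]
and since $h$ is a coequaliser, hence epic, we may cancel $h$ on the right to get $\cop_W \circ u = (u \otimes u) \circ \cop_Z$, i.e.\ $u \in \Cdet$. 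Uniqueness of such a $u$ within $\Cdet$ is inherited from uniqueness in $\C$, and $h \circ f = h \circ g$ already lives in $\Cdet$; so the diagram is a coequaliser diagram in $\Cdet$.

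\emph{Second assertion.} The idea is to feed a suitable cocone into the universal property and then invoke positivity. Because $f$ and $g$ are deterministic we have $\cop_Y \circ f = (f \otimes f) \circ \cop_X$ (and likewise for $g$), so $(h \otimes h) \circ \cop_Y$ also coequalises $f$ and $g$:
\[
	(h \otimes h) \circ \cop_Y \circ f = (h f \otimes h f) \circ \cop_X = (h g \otimes h g) \circ \cop_X = (h \otimes h) \circ \cop_Y \circ g .
\]
Hence there is a unique $u : Z \to Z \otimes Z$ with $u \circ h = (h \otimes h) \circ \cop_Y$, and one sees that $h$ is deterministic exactly when $u = \cop_Z$. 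Post-composing the identity $u \circ h = (h \otimes h) \circ \cop_Y$ with $\del_Z \otimes \id_Z$ and with $\id_Z \otimes \del_Z$ and cancelling the epimorphism $h$ shows that both marginals of $u$ equal $\id_Z$. It then remains to prove --- and this is the crux --- that in a \emph{positive} Markov category the only such ``self-coupling'' of $\id_Z$ is $\cop_Z$. For this I would apply positivity to $u : Z \to Z \otimes Z$ and $\id_Z \otimes \del_Z : Z \otimes Z \to Z$, whose composite is the deterministic morphism $\id_Z$; positivity then yields
\[
	\bigl(\id_{Z \otimes Z} \otimes (\id_Z \otimes \del_Z)\bigr) \circ \cop_{Z \otimes Z} \circ u \;=\; (u \otimes \id_Z) \circ \cop_Z .
\]
Discarding the leftmost $Z$-factor on both sides (using the two marginal identities for $u$) collapses the left-hand side to $\swap_{Z,Z} \circ u$ and the right-hand side to $\cop_Z$; since $\swap_{Z,Z} \circ \cop_Z = \cop_Z$, this gives $u = \cop_Z$, whence $\cop_Z \circ h = (h \otimes h) \circ \cop_Y$, i.e.\ $h$ is deterministic. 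Applying the first assertion then also identifies the diagram as a coequaliser in $\Cdet$.

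I expect the only real obstacle to be the bookkeeping in the positivity step: one must pick the correct auxiliary morphism ($\id_Z \otimes \del_Z$, say, rather than $\del_Z \otimes \id_Z$ or $\cop_Z$, each of which produces a tautology) and then take precisely the marginal that recovers $\swap_{Z,Z} \circ u = \cop_Z$ rather than a vacuous identity. Once the coequaliser is recognised as an epimorphism, everything else is routine string-diagram manipulation.
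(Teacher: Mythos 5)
Your proof is correct, and its skeleton coincides with the paper's: for the first assertion you factor a deterministic cocone through $h$ in $\C$ and check the induced morphism is deterministic; for the second you build the unique $u$ with $u \circ h = (h \otimes h)\circ\cop_Y$, show both marginals are $\id_Z$ by cancelling the epimorphism $h$, and conclude $u = \cop_Z$ from positivity. The only real difference is that where you work from first principles, the paper outsources the two crux steps to the literature: it cites \cite[Lemma 10.9]{fritz2020infinite} for the fact that a morphism is deterministic whenever its composite with a deterministic epimorphism is (your inline computation $\cop_W \circ u \circ h = (u\otimes u)\circ\cop_Z\circ h$ is exactly a proof of that lemma in the case at hand), and it cites \cite[Theorem 2.8]{fritz2023dilations} for the statement that in a positive Markov category the only morphism $Z \to Z \otimes Z$ with both marginals $\id_Z$ is $\cop_Z$. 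Your derivation of the latter directly from the positivity axiom --- applying it to $u$ and $\id_Z\otimes\del_Z$, then discarding the leftmost factor to get $\swap_{Z,Z}\circ u = \cop_Z$ --- checks out (the left-hand side collapses by counitality of $\cop$, the right-hand side by the second marginal identity), and it makes the proof self-contained at the cost of some string-diagram bookkeeping. Either route is fine; yours buys independence from the cited results, the paper's buys brevity.
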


\begin{proof}
	For the first statement, suppose $f$, $g$, and $h$ are deterministic, and let $k : Y \to W$ be a morphism in $\Cdet$ such that $k \circ f = k \circ g$.
	Then there is a unique morphism $k' : Z \to W$ in $\C$ such that $k = k' \circ h$.
	Since $h$ is a coequaliser, it is an epimorphism.
	By Lemma 10.9 of \citet{fritz2020synthetic}, it follows that $k'$ is deterministic, and hence a morphism in $\Cdet$ also.

	For the second statement, suppose $\C$ is positive and $f$ and $g$ are deterministic.
	Then we have
	\[
		\tikzfig{orbit-projections-are-deterministic-proof-4}
	\]
	where we use the fact that $f$ and $g$ are deterministic in the first and third steps.
	Since $h$ is a coequaliser, it follows that there exists a unique $k : \Z \to \Z \otimes \Z$ in $\C$ such that
	\begin{equation} \label{eq:orbit-projection-deterministic-proof-1}
		\tikzfig{orbit-projections-are-deterministic-proof-1}
	\end{equation}
	Marginalising out each output in turn and using the fact that $h$ is epi (since it is a coequaliser), we obtain
	\[
		\tikzfig{orbit-projections-are-deterministic-proof-2}
	\]
	Since $\C$ is positive, Theorem 2.8 of \citet{fritz2023dilations} now implies that $k = \cop_\Z$.
	Substituting this into \eqref{eq:orbit-projection-deterministic-proof-1}, it follows that $h$ is deterministic.
\end{proof}

\begin{proposition} \label{prop:orbit-map-preserved-action}
	Let $\C$ be a Markov category, and $\act : \G \otimes \X \to \X$ an action and $\Y$ an object in $\C$.
	If $q : \X \to \X/\G$ is an orbit map with respect to $\act$, then $q \otimes \id_\Y : \X \otimes \Y \to \X/\G \otimes \Y$ is an orbit map with respect to the action
	\begin{equation} \label{eq:orbit-map-preserved-action}
		\tikzfig{orbit-map-preserved-action}
	\end{equation}
\end{proposition}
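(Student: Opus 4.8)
The plan is to recognize $q \otimes \id_\Y$ — together with its accompanying pair of parallel arrows — as literally the image, under the functor $(-) \otimes \Y$, of the orbit-map coequaliser \eqref{eq:orbit-map-coequaliser} for $q$, and then to read off everything we need from the preservation clause built into Definition~\ref{def:orbit-map}. So there is very little genuine work to do: the content of the proposition is essentially that ``being an orbit map'' is stable under whiskering by an identity morphism.

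First I would unwind the action appearing in \eqref{eq:orbit-map-preserved-action}: it is the diagonal action of $\G$ on $\X \otimes \Y$ assembled from $\act$ on $\X$ and the trivial action on $\Y$. Copying $\G$, feeding one copy to $\act$ and discarding the other, then applying the copy--discard axiom, this string diagram collapses to $\act \otimes \id_\Y$, modulo the associator $\G \otimes (\X \otimes \Y) \cong (\G \otimes \X) \otimes \Y$. Similarly the trivial action $\triv$ on $\X \otimes \Y$ is $\del_\G \otimes \id_{\X \otimes \Y} = \triv \otimes \id_\Y$. Hence the diagram whose coequaliser property we must establish is exactly
\[
	\begin{tikzcd}
		(\G \otimes \X) \otimes \Y \ar[shift left=1.5]{r}{\act \otimes \id_\Y} \ar[shift right=1.5,swap]{r}{\triv \otimes \id_\Y} & \X \otimes \Y \ar{r}{q \otimes \id_\Y} & (\X/\G) \otimes \Y,
	\end{tikzcd}
\]
which is precisely the image of \eqref{eq:orbit-map-coequaliser} under $(-) \otimes \Y$.

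Next, since $q$ is an orbit map, Definition~\ref{def:orbit-map} tells us \eqref{eq:orbit-map-coequaliser} is a coequaliser that is preserved by $(-) \otimes \W$ for \emph{every} object $\W$. Taking $\W = \Y$ immediately yields that the displayed diagram is a coequaliser. Determinism of $q \otimes \id_\Y$ is automatic: $q$ and $\id_\Y$ are deterministic and $\Cdet$ is closed under $\otimes$. It then remains only to verify the preservation clause for $q \otimes \id_\Y$, i.e.\ that applying $(-) \otimes \Z$ to the displayed diagram again gives a coequaliser for arbitrary $\Z$. But up to the associator isomorphism $((-) \otimes \Y) \otimes \Z \cong (-) \otimes (\Y \otimes \Z)$, the result of this is the image of \eqref{eq:orbit-map-coequaliser} under $(-) \otimes (\Y \otimes \Z)$, which is a coequaliser by the preservation clause for $q$ with $\W = \Y \otimes \Z$; and being a coequaliser diagram is invariant under replacing a diagram by an isomorphic one.

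The only step requiring care — and the one I would be most careful to get right — is the monoidal coherence bookkeeping: confirming that \eqref{eq:orbit-map-preserved-action} really does reduce to $\act \otimes \id_\Y$, and that the associator natural isomorphisms relating $(-) \otimes (\Y \otimes \Z)$ to $((-) \otimes \Y) \otimes \Z$ do not disturb coequaliser-ness. Both are routine consequences of coherence in a symmetric monoidal category, so I do not expect a real obstacle here; the proof is short once the diagram in the statement is identified with a whiskering of \eqref{eq:orbit-map-coequaliser}.
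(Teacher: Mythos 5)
Your proof is correct and follows essentially the same route as the paper's: identify the diagram for $q \otimes \id_\Y$ as the image of the coequaliser \eqref{eq:orbit-map-coequaliser} under $(-) \otimes \Y$, invoke the preservation clause of Definition~\ref{def:orbit-map}, and handle preservation under a further $(-) \otimes \Z$ via the identification with $(-) \otimes (\Y \otimes \Z)$ (the paper simply assumes strict monoidality where you track the associator explicitly). Your explicit remark on determinism of $q \otimes \id_\Y$ is a small point the paper leaves implicit, but the argument is the same.
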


\begin{proof}
	To see that \eqref{eq:orbit-map-preserved-action} is indeed an action, observe that it is just the diagonal action (see Example \ref{ex:diagonal-action}) obtained from $\act$ and the trivial action $\triv$.
	We now show the result in the case that $\C$ is strictly monoidal, with the general case being similar but notationally more complex.
	Since $q$ is an orbit map, it is a coequaliser.
	Moreover, it is preserved by the functor $(-) \otimes \Y$, and so the following is also a coequaliser diagram:
	\begin{equation} \label{eq:orbit-coequaliser-preserved-by-product}
		\begin{tikzcd}[column sep=4em]
			\G \otimes \X \otimes \Y \ar[shift left=1.5]{r}{\act \otimes \id_\Y} \ar[shift right=1.5, swap]{r}{\triv \otimes \id_\Y} & \X \otimes \Y \ar{r}{q \otimes \id_\Y} & \X/\G \otimes \Y.
		\end{tikzcd}
	\end{equation}
	Observe that $\act \otimes \id_\Y$ is just \eqref{eq:orbit-map-preserved-action}, and $\triv \otimes \id_\Y$ is just the trivial action on $\X \otimes \Y$.
	This shows that $q \otimes \id_\Y$ is a coequaliser of the form required to be an orbit map with respect to \eqref{eq:orbit-map-preserved-action}.
	It remains to show that this is preserved by every functor $(-) \otimes \Z$.
	But this holds because we have $\id_\Y \otimes \id_\Z = \id_{\Y \otimes \Z}$, and so the image of \eqref{eq:orbit-coequaliser-preserved-by-product} under $(-) \otimes \Z$ is just
	\[
		\begin{tikzcd}[column sep=5em]
			\G \otimes \X \otimes \Y \otimes \Z \ar[shift left=1.5]{r}{\act \otimes \id_{\Y \otimes \Z}} \ar[shift right=1.5, swap]{r}{\triv \otimes \id_{\Y \otimes \Z}} & \X \otimes \Y \ar{r}{q \otimes \id_{\Y \otimes \Z}} & \X/\G \otimes \Y \otimes \Z,
		\end{tikzcd}
	\]
	which is again a coequaliser diagram since $q$ is an orbit map.
\end{proof}

\subsection{$\Top$ has orbits}

\begin{proposition} \label{prop:top-has-orbits}
	Every action in $\Top$ admits an orbit map.
\end{proposition}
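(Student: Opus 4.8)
The plan is to take $\X/\G$ to be the usual orbit space of the action $\act$, equipped with the quotient topology, and $q : \X \to \X/\G$ the canonical projection, and then to verify the two requirements of Definition \ref{def:orbit-map}: that \eqref{eq:orbit-map-coequaliser} is a coequaliser diagram in $\Top$, and that it is preserved by the functor $(-) \otimes \Y$ for every $\Y$. First I would set up the orbit space carefully. The action $\act$ induces a relation on $\X$ with $x \sim x'$ iff $x' = g \cdot x$ for some $g \in \G$, and this is an equivalence relation by the unit, inverse, and multiplication axioms of an action (checked in $\Set$, hence valid here). Give $\X/\G$ the quotient topology. Then $q$ is continuous by construction and automatically deterministic, since every morphism of $\Top$ is. Invariance $q \circ \act = q \circ \triv$ holds because $q(g \cdot x) = q(x)$, and universality is the defining property of the quotient topology: any continuous $f : \X \to \Z$ with $f \circ \act = f \circ \triv$ is constant on orbits, so it factors uniquely through $q$ as a set map, and the induced map is continuous precisely because $q$ is a topological quotient map. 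Hence \eqref{eq:orbit-map-coequaliser} is a coequaliser in $\Top$.

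The real work is the preservation condition, and the key observation is that $q$ is an \emph{open} map. For each $g \in \G$ the translation $\act(g, -) : \X \to \X$ is a homeomorphism: it is continuous as the composite $\X \cong \{g\} \times \X \hookrightarrow \G \times \X \xrightarrow{\act} \X$, and its inverse is $\act(g^{-1}, -)$ by the action axioms, which is continuous by the same reasoning. Consequently, for open $U \subseteq \X$ the saturation $q^{-1}(q(U)) = \bigcup_{g \in \G} \act(g,-)(U)$ is a union of open sets, so $q(U)$ is open in the quotient topology. I would then invoke the standard point-set fact that an open continuous surjection stays an open continuous surjection after taking the product with $\id_\Y$, and that a continuous open surjection is a quotient map. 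Since $\otimes$ in $\Top$ is the topological product, this makes $q \otimes \id_\Y : \X \otimes \Y \to (\X/\G) \otimes \Y$ a quotient map whose fibres are exactly the orbits of the action $\act \otimes \id_\Y$ (equivalently: $\act$ on the $\X$-factor, trivial on the $\Y$-factor). Repeating the invariance-and-universality argument with $q \otimes \id_\Y$ in place of $q$ then shows that the image of \eqref{eq:orbit-map-coequaliser} under $(-) \otimes \Y$ is again a coequaliser diagram, which is what is required. This matches the mechanism already isolated in Proposition \ref{prop:orbit-map-preserved-action}.

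The step I expect to be the main obstacle is this preservation condition — concretely, verifying that $q$ is open and that open surjections are stable under products with identities and remain quotient maps. These are routine but genuinely point-set-topological facts, and they are exactly why the slick abstract argument of Example \ref{ex:set-has-orbits} (via monoidal closedness) is unavailable here: $\Top$ is not cartesian closed. Everything else in the proof — the orbit equivalence relation, continuity of $q$, and the universal property of the quotient topology — is standard and quick.
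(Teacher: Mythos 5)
Your proposal is correct and follows essentially the same route as the paper: quotient topology on the orbit space, the coequaliser property from the universal property of quotients, and the preservation condition via the fact that $q$ is an open surjection (because orbit saturations of open sets are open) so that $q \otimes \id_\Y$ remains an open surjection and hence a quotient map. The paper merely cites standard references for the point-set facts you spell out, so your write-up is, if anything, slightly more self-contained.
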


\begin{proof}
	We sketch the argument here, which uses standard ideas.
	Suppose $\act : \G \otimes \X \to \X$ is any action in $\Top$.
	Let $q : \X \to \X/\G$ be as defined in Example \ref{ex:orbit-map-example}, where now $\X/\G$ is equipped with the final topology with respect to $q$, which is namely the finest topology that makes $q$ continuous.
	Then $q$ becomes a coequaliser of $\act$ and $\triv$ by a similar argument as was given in Example \ref{ex:orbit-map-example}.
	(Slightly more care is needed in this case, because the final topology is in general not the only possible topology on $\X/\G$ that makes $q$ continuous.)

	For the preservation condition, given another topological space $\Y$, we must show that the following is also a coequaliser diagram:
	\[
		\begin{tikzcd}[column sep=3em]
			(\G \otimes \X) \otimes \Y \ar[shift left=1.5]{r}{\act \otimes \id_\Y} \ar[shift right=1.5,swap]{r}{\triv} & \X \otimes \Y \ar{r}{q \otimes \id_\Y} & \X/\G \otimes \Y
		\end{tikzcd}
	\]
	Now, certainly $q$ is a surjection, and it is also an open map \citep[11.1.2]{brown2006topology}.
	Likewise, $\id_\Y$ is an open surjection, and so it holds that $q \otimes \id_\Y$ is an open surjection too \citep[4.2, Exercise 7]{brown2006topology}.
	Consequently $\X/\G \otimes \Y$ must be equipped with the final topology with respect to $q \otimes \id_\Y$ \citep[4.2.4]{brown2006topology}.
	From here, it may be shown that $q \otimes \id_\Y$ is a coequaliser using essentially the same argument as for $q$ itself.
\end{proof}

\subsection{Proof of Theorem \ref{thm:topstoch-has-orbits}} \label{sec:top-has-orbits}

Our goal in this section is to prove that $\TopStoch$ admits all orbit maps.
Since $\TopStoch$ is the Kleisli category of a certain monad on $\Top$ \citep{fritz2021probability}, this \emph{almost} follows from Proposition \ref{prop:top-has-orbits}.
In particular, some straightforward diagram chasing shows that an orbit map exists for every group action in $\TopStoch$ that is obtained by lifting some group action in $\Top$ via Proposition \ref{prop:lifting-via-monad}.
However, we are not sure whether all actions in $\TopStoch$ arise in this way.
This is because a deterministic Markov kernel is only required to be zero-one \citep[Example 10.4]{fritz2020synthetic}, and in general not all zero-one Markov kernels can be obtained by lifting some measurable function.
(See Example 3.9 of \citet{moss2022probability} for an example in the case of $\Meas$ and $\Stoch$; a similar idea appears to hold for $\Top$ and $\TopStoch$ also.)
We therefore provide a direct proof of the result here.
Our approach takes its inspiration from Proposition 3.7 of \citet{moss2023category}, which shows the analogous statement in $\Stoch$.
We translate their construction to the topological setting and show it plays nicely with products, which in turn yields the preservation condition required by the definition of orbit maps.

Throughout this section, we use the usual notation $\ind_A$ to denote the indicator function of a measurable set $A$.
We also recall that, for $\X$ a topological space, a function $f : \X \to [0, 1]$ is \emph{lower semicontinuous} if for every $t \in [0, 1]$ the following set is open:
\[
	f^{-1}((t, 1]) = \{x \in \X \mid f(x) > t\}.
\]

\subsubsection{The invariant topology} \label{sec:invariant-topology}

Let $\act : \G \otimes \X \to \X$ be an action in $\TopStoch$.
We will keep $\act$ fixed throughout this subsection.
Given an open $U \subseteq \X$, we will say that $U$ is \emph{invariant} if it holds that
\[
	\act(U|g, x) = 1 \qquad \text{for all $x \in \U$ and $g \in \G$.}
\]
We then have the following.

\begin{proposition}
	The invariant open subsets of $\X$ form a topology.
\end{proposition}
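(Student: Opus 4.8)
The plan is to verify the three axioms for a topology directly: that $\emptyset$ and $\X$ are invariant open sets, that a finite intersection of invariant open sets is invariant and open, and that an arbitrary union of invariant open sets is invariant and open. Openness of finite intersections and arbitrary unions is inherited from the ambient topology on $\X$, so the real content is showing that invariance is preserved by these operations. For $\X$ itself, invariance says $\act(\X|g,x) = 1$ for all $g,x$, which holds because $\act(-|g,x)$ is a probability measure on $\X$; and $\emptyset$ is vacuously invariant since the condition quantifies over $x \in \emptyset$.

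For unions, suppose $\{U_i\}_{i \in I}$ is a family of invariant open sets and set $U \coloneqq \bigcup_i U_i$. Given $x \in U$, we have $x \in U_j$ for some $j \in I$, and then for every $g \in \G$, monotonicity of the measure $\act(-|g,x)$ gives $1 = \act(U_j|g,x) \le \act(U|g,x) \le 1$, so $\act(U|g,x) = 1$; hence $U$ is invariant. For finite intersections it suffices by induction to treat $U \cap V$ with $U, V$ invariant and open. Given $x \in U \cap V$ and $g \in \G$, we have $\act(U|g,x) = \act(V|g,x) = 1$, and since $\act(-|g,x)$ is a probability measure, $\act(U \cap V|g,x) = \act(U|g,x) + \act(V|g,x) - \act(U \cup V|g,x) \ge 1 + 1 - 1 = 1$, so $\act(U\cap V|g,x) = 1$; thus $U \cap V$ is invariant.

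I expect no serious obstacle here — the argument is a routine application of the monotonicity and finite-additivity (equivalently, inclusion–exclusion) properties of the probability measures $\act(-|g,x)$, combined with the fact that the underlying sets are already a topology on $\X$. The only point worth a moment's care is that one does \emph{not} need continuity or lower semicontinuity of $\act$ anywhere in this particular statement: invariance is a pointwise condition on the values $\act(U|g,x)$, and the measure-theoretic inequalities above hold for each fixed $(g,x)$ separately. (Continuity of $\act$ and lower semicontinuity will matter later, when relating this topology to the orbit map, but not for establishing that it is a topology.) So I would present the three verifications in the order above, keeping each to a line or two.
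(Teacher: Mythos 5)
Your proposal is correct and follows essentially the same route as the paper's proof: $\emptyset$ and $\X$ are handled trivially, unions via monotonicity of the measures $\act(-\,|\,g,x)$, and finite intersections via the fact that two sets of full measure have intersection of full measure (which you make explicit with inclusion--exclusion where the paper simply asserts it). No gaps.
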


\begin{proof}
	Certainly $X$ and $\emptyset$ are invariant ($\emptyset$ vacuously so).
	Given a collection of invariant open $U_i \subseteq \X$, letting $U \coloneqq \bigcup_i U_i$, we have that $U$ is open, and moreover if $g \in \G$ and $x \in U_i$ for some $i$, then
	\[
	    \alpha(U | g, x) \geq \alpha(U_i | g, x) = 1,
	\]
	which means $U$ is invariant.
	Similarly, given invariant open sets $U_1$ and $U_2$, their intersection $U_1 \cap U_2$ is open.
	Moreover, it satisfies $\alpha(U_1 \cap U_2 | g, x) = 1$ since $\alpha(U_1| g, x) = \alpha(U_2| g, x) = 1$ and is hence invariant.
\end{proof}

We may therefore define $\X/\G$ to be the topological space whose underlying set of points is the same as $\X$, but whose topology consists of the invariant open subsets of $\X$.
The following result shows that the Borel $\sigma$-algebra generated by $\X/\G$ is then a sub-$\sigma$-algebra of the invariant $\sigma$-algebra considered in Definition 3.6 of \citet{moss2023category}.

\begin{proposition} \label{prop:invariant-indicator-function}
	For all invariant open $U \subseteq \X$, it holds that $\act(U|g, x) = \ind_U(x)$ for all $g \in \G$ and $x \in \X$.
\end{proposition}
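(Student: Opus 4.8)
The plan is to use the group inverse to transport the invariance of $U$ and thereby pin down the value of $\act(U\mid g,x)$ when $x\notin U$. Recall first that in $\TopStoch$, exactly as in $\Stoch$, the composite of two Markov kernels is computed by the usual integral formula; I will use this freely to pass between string-diagrammatic statements and explicit integrals.

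First I would record the one identity coming from the action axioms. By the reasoning of Remark~\ref{rem:translating-classical-theory} --- the same reasoning that produced \eqref{eq:action-by-inverse-is-identity-in-C} --- the set-theoretic identities $g\cdot(g^{-1}\cdot x) = x = g^{-1}\cdot(g\cdot x)$, valid for every action, lift to identities of deterministic morphisms in $\TopStoch$. Translating the second of these into Markov-kernel notation, it reads
\[
	\int_\X \act(B \mid g^{-1}, y)\, \act(dy \mid g, x) \;=\; \ind_B(x)
\]
for all $g \in \G$, $x \in \X$, and Borel $B \subseteq \X$, where $g^{-1}$ denotes the inverse of $g$ in $\G$.

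Next I would exploit the invariance of $U$. By definition, $\act(U \mid h, y) = 1$ whenever $h \in \G$ and $y \in U$; hence $\act(U \mid g^{-1}, y) \ge \ind_U(y)$ for every $y \in \X$ (with equality on $U$, and trivially $\ge 0$ off $U$). Taking $B \coloneqq U$ in the identity above and using monotonicity of the integral,
\[
	\ind_U(x) \;=\; \int_\X \act(U \mid g^{-1}, y)\, \act(dy \mid g, x) \;\ge\; \int_\X \ind_U(y)\, \act(dy \mid g, x) \;=\; \act(U \mid g, x).
\]
Combining the resulting inequality $\act(U \mid g, x) \le \ind_U(x)$ with the defining property $\act(U \mid g, x) = 1 = \ind_U(x)$ for $x \in U$, and with $\act(U \mid g, x) \ge 0 = \ind_U(x)$ for $x \notin U$, yields $\act(U \mid g, x) = \ind_U(x)$ for all $g$ and $x$, which is the claim.

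The argument is short, and the only point requiring care is the first step: one must translate the abstract action axiom into the correct concrete integral identity --- in particular getting the order of composition right --- and note that nothing beyond the action axioms is actually used (not even determinism of $\act$, which for an action is automatic in any case). Conceptually, the single idea at work is that invariance of $U$ by itself only constrains $\act(U \mid g, \cdot)$ on $U$; it is the presence of the inverse $g^{-1}$, together with the action axioms, that forces the value to vanish on the complement of $U$.
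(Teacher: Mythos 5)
Your proof is correct and follows essentially the same route as the paper's: both establish the case $x\notin U$ by composing $\act$ with the "act by the inverse" kernel to recover the identity, then use invariance of $U$ together with monotonicity of the integral to squeeze $\act(U\mid g,x)$ down to $0$. The only (repairable) imprecision is that you write $\act(U\mid g^{-1},y)$ as if inversion were a function, whereas in $\TopStoch$ the inversion of a group is a priori only a deterministic kernel $i(dg'\mid g)$ that need not arise from a function; the paper therefore integrates against $i(dg'\mid g)$, and your bound survives this change because invariance gives $\act(U\mid g',y)=1$ uniformly over all $g'\in\G$ for $y\in U$.
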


\begin{proof}
	Let $U$ be open and invariant.
	By definition, $\act(U|g, x) = 1 = \ind_U(x)$ if $x \in U$.
	On the other hand, if $x \not \in U$, it holds for all $g \in \G$ that (denoting the Markov kernel $\inv$ by $i$)
	\begin{align*}
		0 = \id_\X(U|x) &= \int \act(U|g', x') \, i(dg'|g) \, \act(dx'|g, x) \\
			&\geq \int \ind_U(x') \, \act(U|g', x') \, i(dg'|g) \, \act(dx'|g, x) \\
			&= \int \ind_U(x') \, i(dg'|g) \, \act(dx'|g, x) \\
			&= \act(U|g, x).
	\end{align*}
	Here the second step applies the group axioms, and the fourth step uses the fact that $U$ is invariant, so that $\act(U|g', x') = 1$ when $x' \in U$.
	It follows that $\act(U|g, x) = 0 = \ind_U(x)$ in this case as well.

\end{proof}

Suppose $U \subseteq \X$ is open, but not necessarily invariant.
The following construction provides a canonical way to obtain an invariant open set containing $U$.
In particular, we will define
\[
	\G \cdot U \coloneqq \{x \in \X \mid \text{$\act(U|g, x) = 1$ for some $g \in \G$}\}.
\]
We then have the following.

\begin{proposition} \label{prop:GU-properties}
	For all open $U \subseteq \X$, it holds that $G \cdot U$ is open and invariant, and $U \subseteq G \cdot U$.
\end{proposition}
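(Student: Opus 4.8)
The plan rests on a single observation: since an action is by definition a \emph{deterministic} morphism, $\act$ is a zero-one Markov kernel (as in $\Stoch$; cf.\ \cite[Example 10.4]{fritz2020synthetic}), so $\act(A | g, x) \in \{0,1\}$ for every Borel $A \subseteq \X$, $g \in \G$, and $x \in \X$. With this, all three assertions follow quickly, and I would establish them in the order: openness, then $U \subseteq \G \cdot U$, then invariance (the last step needs the first).

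\emph{Openness.} Because $\act$ is continuous in $\TopStoch$, the map $(g,x) \mapsto \act(U | g, x)$ is lower semicontinuous on $\G \otimes \X$, so $V \coloneqq \{(g,x) : \act(U | g, x) > 0\}$ is open. Since $\act$ is zero-one, $V = \{(g,x) : \act(U | g,x) = 1\}$, and by definition $\G \cdot U = \pr_\X(V)$ for the coordinate projection $\pr_\X : \G \otimes \X \to \X$. As $\G \otimes \X$ carries the product topology and coordinate projections of topological products are open maps, $\G \cdot U$ is open, hence Borel --- a fact used in the invariance step. \emph{$U \subseteq \G \cdot U$.} The unitality axiom of $\act$ gives $\act(\,\cdot | e, x) = \id_\X(\,\cdot | x) = \delta_x$, so $\act(U | e,x) = \ind_U(x) = 1$ whenever $x \in U$; thus $x \in \G \cdot U$ with witness $e$.

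\emph{Invariance.} Fix $x \in \G \cdot U$ with a witness $g$ (so $\act(U | g, x) = 1$) and any $h \in \G$. Applying the associativity axiom of $\act$ with outer element $gh^{-1}$ and inner element $h$ yields
\[
\int_\X \act(U | gh^{-1}, z)\, \act(dz | h, x) = \act\bigl(U \,\big|\, (gh^{-1})h,\, x\bigr) = \act(U | g, x) = 1 .
\]
Now $\act(U | gh^{-1}, z) \in \{0,1\}$, and whenever it equals $1$ we have $z \in \G \cdot U$ by definition; hence $\act(U | gh^{-1}, z) \le \ind_{\G\cdot U}(z)$ pointwise in $z$. Integrating this bound against $\act(dz | h, x)$ gives $1 \le \act(\G\cdot U | h, x) \le 1$, i.e.\ $\act(\G\cdot U | h,x) = 1$. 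Since $x \in \G\cdot U$ and $h \in \G$ were arbitrary, $\G\cdot U$ is invariant.

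The argument is short, and I do not anticipate a genuine obstacle; the only points needing care are (i) ordering the proof so that $\G\cdot U$ is known to be Borel before $\ind_{\G\cdot U}$ is integrated in the invariance step, and (ii) getting the side of the multiplication right in the instance of the associativity axiom used. The one substantive idea is combining determinism (zero-one values) with associativity via the pointwise domination $\act(U | gh^{-1},z) \le \ind_{\G\cdot U}(z)$, which upgrades "there exists a good $g$" to "almost every successor state is again good".
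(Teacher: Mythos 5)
Your openness argument is fine (a valid minor variant of the paper's: you project an open set, the paper takes a union of open slices), and your key idea for invariance --- dominating $\act(U\,|\,\cdot\,,z)$ pointwise by $\ind_{\G\cdot U}(z)$ and integrating, using that $\act$ is zero-one --- is exactly the trick the paper uses. But there is a genuine gap in how you invoke the group axioms. You treat $e$, $h^{-1}$, and $gh^{-1}$ as honest \emph{elements} of $\G$ and write identities like $\act(U \mid (gh^{-1})h, x) = \act(U \mid g, x)$. In $\TopStoch$ the unit, inversion, and multiplication are only required to be \emph{deterministic Markov kernels}, i.e.\ zero-one kernels $\e(\cdot\mid\singleton)$, $i(\cdot\mid h)$, $m(\cdot\mid h,g)$, and a zero-one measure need not be a Dirac measure at a point. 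This is precisely the subtlety the paper flags just before this proof: not every deterministic kernel in $\TopStoch$ is induced by a continuous function, which is why the result does not simply follow by lifting from $\Top$. So your unitality step (``witness $e$'') and your associativity step both assume the pointwise case, which is the easy case the paper is deliberately not restricting to.

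The repair is mechanical but is the actual content of the paper's proof: for $U \subseteq \G\cdot U$, integrate $\act(U\mid g,x)$ against $\e(dg\mid\singleton)$, observe the integral equals $1$ by unitality, and conclude a witness $g$ exists because a $\{0,1\}$-valued function integrating to $1$ against a probability measure must equal $1$ somewhere; for invariance, replace your fixed element $gh^{-1}$ by integration against $i(dg'\mid g)$ and $m(dh'\mid h, g')$, so that the chain reads
\[
1 = \act(U\mid h, x) = \int \act(U\mid h', x')\, m(dh'\mid h, g')\, i(dg'\mid g)\, \act(dx'\mid g, x)
\le \int \ind_{\G\cdot U}(x')\, \act(dx'\mid g, x) = \act(\G\cdot U\mid g, x),
\]
where the domination step is the one you already identified. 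With that rewriting your argument coincides with the paper's.
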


\begin{proof}
	Let $U \subseteq \X$ be open.
	Given $g \in \G$, denote $\act_g(x) \coloneqq \act(U|g, x)$.
	Since $\act$ is deterministic, it is zero-one \citep[Example 10.4]{fritz2020synthetic}, and so
	\[
		\G \cdot U = \bigcup_{g \in \G} \act_g^{-1}(\{1\}) = \bigcup_{g \in \G} \act_g^{-1}((0, 1]).
	\]
	A standard argument shows $x \mapsto \act_g(x)$ is lower semicontinuous (since $(x, g) \mapsto \act_g(x)$ is).
	It follows that each $\act_g^{-1}((0, 1])$ is open, and hence so is $\G \cdot U$, being the union of open sets.

	To see that $\G \cdot U$ is invariant, let $x \in \G \cdot U$ and $g \in \G$.
	By definition, there exists $h \in \G$ such that (denoting the Markov kernels $\inv$ and $\mul$ by $i$ and $m$)
	\begin{align*}
		1 = \act(U|h, x) %
			&= \int \act(U|h, x'') \, \act(x''|g', x') \, i(dg'|g) \, \act(dx'|g, x) \\
			&= \int \act(U|h', x') \, m(dh'|h, g') \, i(dg'|g) \, \act(dx'|g, x) \\
			&= \int \ind_{\G \cdot U}(x') \, \act(U|h', x') \, m(dh'|h, g') \, i(dg'|g) \, \act(dx'|g, x) \\
			&\leq \act(\G \cdot U|g, x).
	\end{align*}
	Here the second and third steps apply the group axioms, while the fourth uses the fact that  $\act(U|h', x') = 0$ whenever $x' \not \in \G \cdot U$, by the definition of $\G \cdot U$.

	Finally, to show that $U \subseteq \G \cdot U$, let $x \in U$.
	Then the group axioms imply
	\begin{align*}
		\int \act(U|g, x) \, e(dg|\singleton)
			&= 1,
	\end{align*}
	where $\bullet$ denotes the unique element of the one-point space $\I$.
	Hence there exists $g \in \G$ with $\act(U|g, x) = 1$, so that $x \in \G \cdot U$.
\end{proof}

Our key reason for developing this construction in $\TopStoch$ is that the invariant topology plays well with products.
To make this precise, let $\Y$ be any topological space.
Then we always obtain an action as follows:
\begin{equation} \label{eq:orbit-map-preserved-action-1}
	\tikzfig{orbit-map-preserved-action}
\end{equation}
Analogously to $\X/\G$, we will denote by $(\X \otimes \Y)/\G$ the topological space consisting of $\X \otimes \Y$ equipped with the invariant topology induced by \eqref{eq:orbit-map-preserved-action-1}.
We then obtain the following result.

\begin{proposition} \label{prop:product-of-invariant-topologies}
	It holds that $(\X \otimes \Y)/\G = \X/\G \otimes \Y$.
\end{proposition}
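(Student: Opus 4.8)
The plan is to show that the two topologies on the underlying set $\X \otimes \Y$ coincide by double inclusion. Both topologies are defined as subtopologies of the product topology on $\X \otimes \Y$: the topology of $(\X \otimes \Y)/\G$ consists of the open sets $W \subseteq \X \otimes \Y$ that are invariant under the action \eqref{eq:orbit-map-preserved-action-1}, while the topology of $\X/\G \otimes \Y$ is generated by products $U \times V$ where $U$ is open and invariant in $\X$ (i.e.\ open in $\X/\G$) and $V$ is open in $\Y$. So it suffices to compare these two collections of subsets of $\X \otimes \Y$.

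First I would unpack what invariance means for \eqref{eq:orbit-map-preserved-action-1}. Since this action is the diagonal action obtained from $\act$ on $\X$ and the trivial action on $\Y$, a straightforward computation (analogous to Proposition \ref{prop:invariant-indicator-function}) shows that an open $W \subseteq \X \otimes \Y$ is invariant if and only if, for every $(x, y) \in W$ and every $g \in \G$, we have $\act(\{x' : (x', y) \in W\} \mid g, x) = 1$. Writing $W_y \coloneqq \{x \in \X : (x, y) \in W\}$ for the slice at fixed $y$, this says: $W$ is invariant iff $W_y$ is an invariant open subset of $\X$ for every $y \in \Y$. (Here $W_y$ is automatically open in $\X$ because $W$ is open in the product.)

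For the inclusion $\X/\G \otimes \Y \subseteq (\X \otimes \Y)/\G$: a basic open set $U \times V$ with $U$ invariant open in $\X$ and $V$ open in $\Y$ has slices $(U\times V)_y$ equal to $U$ when $y \in V$ and $\emptyset$ otherwise, both of which are invariant; hence $U \times V$ is invariant, and since invariant open sets form a topology (shown earlier), the generated topology $\X/\G \otimes \Y$ is contained in it. For the reverse inclusion $(\X \otimes \Y)/\G \subseteq \X/\G \otimes \Y$: let $W$ be open and invariant in $\X \otimes \Y$, and let $(x_0, y_0) \in W$. By openness in the product topology there are open $U_0 \ni x_0$ in $\X$ and $V_0 \ni y_0$ in $\Y$ with $U_0 \times V_0 \subseteq W$. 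Now apply the construction $\G \cdot U_0$ from Proposition \ref{prop:GU-properties}: it is open, invariant, and contains $U_0$. The point to verify is that $(\G \cdot U_0) \times V_0 \subseteq W$ — equivalently, that for every $y \in V_0$ one has $\G \cdot U_0 \subseteq W_y$. This follows because $W_y$ is invariant open with $U_0 \subseteq W_y$, and $\G \cdot U_0$ is the smallest invariant open set containing $U_0$ in the relevant sense: if $x \in \G \cdot U_0$ then $\act(U_0 \mid g, x) = 1$ for some $g$, hence $\act(W_y \mid g, x) = 1$, and invariance of $W_y$ forces $x \in W_y$ (arguing as in Proposition \ref{prop:GU-properties}, using that $W_y$ invariant means $\act(W_y \mid g, x) = \ind_{W_y}(x)$). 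Thus every point of $W$ has an open neighbourhood of the form $(\text{invariant open}) \times (\text{open})$ contained in $W$, so $W$ is a union of basic open sets of $\X/\G \otimes \Y$ and hence open there.

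The main obstacle I anticipate is the slice characterisation of invariance for the product action \eqref{eq:orbit-map-preserved-action-1} and, relatedly, pinning down the precise sense in which $\G \cdot U_0$ is "minimal" so that the containment $(\G \cdot U_0)\times V_0 \subseteq W$ goes through cleanly — this is where the zero-one property of deterministic kernels and the group-axiom manipulations from Propositions \ref{prop:invariant-indicator-function} and \ref{prop:GU-properties} have to be deployed carefully for the trivially-acting $\Y$ factor. Once that lemma is isolated, both inclusions are short.
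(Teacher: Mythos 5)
Your proof is correct and follows essentially the same route as the paper's: both directions hinge on the rectangle base, Proposition \ref{prop:GU-properties}'s saturation $\G \cdot U_0$, and the indicator characterisation of invariance from Proposition \ref{prop:invariant-indicator-function}. Your slice-wise reformulation of invariance and the pointwise neighbourhood argument are just a local repackaging of the paper's global identity $U = \bigcup_i \G \cdot V_i \times W_i$.
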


\begin{proof}
	Suppose $V \subseteq \X$ is open and invariant with respect to $\act$, and $W \subseteq \Y$ is open.
	Then by definition of the product topology, $V \times W$ is open in $\X \otimes \Y$.
	It is moreover easily verified that $V \times W$ is invariant with respect to \eqref{eq:orbit-map-preserved-action-1}.
	Since the rectangle sets form a base for the product topology, this shows that every open subset of $\X/\G \otimes \Y$ is open in $(\X \otimes \Y)/\G$.

	Conversely, let $U \subseteq \X \otimes \Y$ be open and invariant with respect to $\act \otimes \id_\Y$.
	By definition of the product topology, we can write
	\[
		U = \bigcup_i V_i \times W_i
	\]
	for some open $V_i \subseteq \X$ and $W_i \subseteq \Y$.
	We claim that
	\begin{equation} \label{eq:invariant-topology-product-proof-1}
		U = \bigcup_i \G \cdot V_i \times W_i,
	\end{equation}
	from which it follows that $U$ is open in $\X/\G \otimes \Y$.
	Indeed, the $\subseteq$ inclusion follows immediately by Proposition \ref{prop:GU-properties}.
	For the $\supseteq$ inclusion, choose $x \in \G \cdot V_i$ and $y \in W_i$ arbitrarily.
	Since $U$ is invariant to \eqref{eq:orbit-map-preserved-action-1}, Proposition \ref{prop:invariant-indicator-function} implies the following for all $g \in \G$:
	\begin{align*}
		\ind_U(x, y) &= (\act \otimes \id_\Y)(U|(g, x), y) \\
			&\geq (\act \otimes \id_\Y)(V_i \times W_i|(g, x), y) \\
			&= \act(V_i |g, x) \, \ind_{W_i}(y) \\
			&= \act(V_i |g, x).
	\end{align*}
	By definition of $\G \cdot V_i$, there exists $g \in \G$ such that $\act(V_i|g, x) = 1$.
	Hence we must have $\ind_{U}(x, y) = 1$, and so $(x, y) \in U$.
\end{proof}

\subsubsection{Existence of orbit map coequalisers}

\begin{lemma} \label{lem:orbit-map-coequaliser-existence}
	Suppose $\act : \G \otimes \X \to \X$ be an action in $\TopStoch$. Let $\X/\G$ denote $\X$ equipped with the invariant topology induced by $\act$, and define $q : \X \to \X/\G$ by
	\[
		q(A|x) \coloneqq \delta_x(A) \qquad \text{for $x \in \X$ and Borel $A \subseteq \X/\G$,}
	\]
	where $\delta_x$ denotes the Dirac measure at $x$.
	Then $q$ is a coequaliser of the parallel arrows $\act, \triv : \G \otimes \X \rightrightarrows \X$.
\end{lemma}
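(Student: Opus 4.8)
The plan is to verify the three conditions that make $q$ a coequaliser in $\TopStoch$: that $q$ is a morphism, that $q \circ \act = q \circ \triv$, and that $q$ enjoys the universal property. The first is immediate: an open subset $U$ of $\X/\G$ is by construction an invariant open subset of $\X$, hence open in $\X$, so $x \mapsto q(U \mid x) = \ind_U(x)$ is lower semicontinuous on $\X$ and $q$ is a (deterministic) continuous Markov kernel; note also that, since the topology of $\X/\G$ is coarser than that of $\X$, its Borel $\sigma$-algebra is a sub-$\sigma$-algebra of $\mathcal{B}(\X)$, so the formula $q(A \mid x) = \delta_x(A)$ makes sense for all $A \in \mathcal{B}(\X/\G)$.

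Second, for $A \in \mathcal{B}(\X/\G)$ and $(g,x)$ I would compute $(q \circ \triv)(A \mid g, x) = \delta_x(A) = \ind_A(x)$ and $(q \circ \act)(A \mid g, x) = \int \ind_A(x')\,\act(dx' \mid g, x) = \act(A \mid g, x)$, so the claim $q \circ \act = q \circ \triv$ reduces to $\act(A \mid g, x) = \ind_A(x)$ for all such $A, g, x$. Proposition \ref{prop:invariant-indicator-function} supplies this for invariant open $A$, and the set of Borel $A \subseteq \X/\G$ for which it holds is a $\sigma$-algebra containing the open sets of $\X/\G$, hence all of $\mathcal{B}(\X/\G)$.

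The substantive step is the universal property. Given any morphism $k : \X \to \Z$ of $\TopStoch$ with $k \circ \act = k \circ \triv$, the fact that $q$ is the identity on points forces any cofactoring $\bar k : \X/\G \to \Z$ to satisfy $\bar k(B \mid x) = k(B \mid x)$, which settles uniqueness; the work is to show that $k$, now regarded as having domain $\X/\G$, remains a continuous Markov kernel, i.e.\ that for every open $W \subseteq \Z$ and every $t \in [0,1)$ the superlevel set $U_t \coloneqq \{x : k(W \mid x) > t\}$ is an invariant open subset of $\X$. It is open because $k$ is continuous with domain $\X$, and lower semicontinuity with respect to the invariant topology will in particular give the measurability needed for $\bar k \coloneqq k$ to be a Markov kernel at all, so invariance of $U_t$ is all that is at issue. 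To prove it, fix $x_0 \in U_t$ and $g \in \G$, apply $k \circ \act = k \circ \triv$ at $(g, x_0)$ and $W$, and split the resulting integral over $U_t$ and its complement:
\[
	t < k(W \mid x_0) = \int k(W \mid x')\,\act(dx' \mid g, x_0) \le \act(U_t \mid g, x_0) + t\bigl(1 - \act(U_t \mid g, x_0)\bigr),
\]
so $(1-t)\,\act(U_t \mid g, x_0) > 0$, whence $\act(U_t \mid g, x_0) > 0$, and since $\act$ is deterministic and therefore zero-one \cite[Example 10.4]{fritz2020synthetic}, in fact $\act(U_t \mid g, x_0) = 1$ — exactly invariance (for $t = 1$ the set $U_t$ is empty and there is nothing to prove). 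Then $\bar k \coloneqq k$ is a morphism $\X/\G \to \Z$ in $\TopStoch$ and $\bar k \circ q = k$ follows from the Dirac computation.

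I expect the crux to be precisely this last argument — identifying the superlevel sets of the cofactored kernel as invariant, which is where the zero-one property of deterministic kernels is indispensable. Everything else is bookkeeping: correctly interpreting Kleisli composition in $\TopStoch$, keeping the two topologies on the shared underlying set distinct, and remembering that lower semicontinuity for the coarser topology already entails Borel measurability for it.
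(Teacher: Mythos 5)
Your proof is correct and follows the same overall strategy as the paper's: establish that $q$ is a well-defined morphism, reduce $q \circ \act = q \circ \triv$ to Proposition \ref{prop:invariant-indicator-function} plus a generating-class argument, and prove the universal property by showing that the superlevel sets $\{x \in \X \mid k(W|x) > t\}$ are invariant open, so that $k$ itself cofactors. The only local difference is in that last invariance step, where the paper routes through the saturation $\G \cdot U$ and Proposition \ref{prop:GU-properties}, while you give a direct splitting estimate forcing $\act(U_t|g,x) > 0$ and then invoke the zero-one property of the deterministic $\act$ --- both arguments rest on the same fact and are equally valid.
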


\begin{proof}
	We first show that $q$ is indeed a well-defined morphism in $\TopStoch$.
	Certainly $A \mapsto q(A|x)$ is a probability measure for all $x \in \X$.
	Additionally, since $\X/\G$ is equipped with a coarser topology than $\X$, its Borel $\sigma$-algebra is coarser than that of $\X$ also.
	Hence for Borel $A \subseteq \X/\G$, the function $\X \to [0, 1]$ defined as
	\begin{align}
		x \mapsto q(A|x) = \ind_{A}(x) \label{eq:orbit-map-well-defined}
	\end{align}
	is always measurable with respect to the Borel $\sigma$-algebra on $\X$.
	It follows that $q$ is a Markov kernel.
	Moreover, for invariant open $A \subseteq \X$, the function \eqref{eq:orbit-map-well-defined} is lower semicontinuous with respect to the topology on $\X$, since it is the indicator function of an open subset of $\X$.
	This shows that $q$ is a morphism $\X \to \X /\G$ in $\TopStoch$.

	We now show that $q$ is a coequaliser.  %
	For this, we must first show that $q \circ \act = q \circ \triv$.
	This holds because for any invariant open $U \subseteq \X$, as well as $x \in \X$ and $g \in \G$, we have
	\begin{align*}
		\int q(U | x') \, \act(dx' | g, x)
			&= \int \ind_{U}(x') \, \act(dx'|g, x) \\
			&= \act(U | g, x) \\
			&= \ind_{U}(x) \\
			&= q(U | x) \\
			&= \int q(U | x') \, \triv(dx' | g, x),
	\end{align*}
	where the third step applies Lemma \ref{prop:invariant-indicator-function}.
	Since the invariant open sets generate the Borel $\sigma$-algebra on $\X/\G$, this shows $q \circ \act = q \circ \triv$.

	Now suppose $k : \X \to \Y$ in $\TopStoch$ also satisfies $k \circ \act = k \circ \triv$.
	Then a morphism $k' : \X/\G \to \Y$ satisfies $k' \circ q = k$ if and only if
	\begin{align*}
		k(V|x) &= \int k'(V|x') \, q(dx'|x) \\
		&= \int k'(V|x') \, \delta_x(dx') \\
		&= k'(V|x)
	\end{align*}
 	for all $x \in \X$ and open $V \subseteq \Y$.
	We are therefore done if we can show that $x \mapsto k(V|x)$ is lower semicontinuous with respect to the topology on $\X/\G$, so that we may take $k' \coloneqq k$.
	For this, choose any open $V \subseteq \Y$ and $t \in [0, 1]$.
	We would like to show that
	\[
		U \coloneqq \{x \in \X \mid k(V|x) > t\}
	\]
	is an invariant open subset of $\X$.
	Since $k$ is a morphism $\X \to \Y$ in $\TopStoch$ and hence lower semicontinuous, certainly $U$ is an open subset of $\X$.
	To prove that $U$ is invariant, we will show that $\G \cdot U \subseteq U$, which is sufficient by Proposition \ref{prop:GU-properties}.
	Indeed, if $x \in \G \cdot U$, then by definition there exists some $g \in \G$ such that $\act(U|g, x) = 1$.
	This yields
	\begin{align*}
		k(V|x) &= \int k(V|x') \, \act(dx'|g, x) \\
		&= \int_U k(V|x') \, \act(dx'|g, x) \\
		&> t,
	\end{align*}
	where the first step uses the fact that $k \circ \act = k \circ \triv$, and the third uses the definition of $U$.
	This gives $x \in U$, which is what we wanted to show.
\end{proof}

\subsubsection{Proof of Theorem \ref{thm:topstoch-has-orbits}}

\begin{proof}
	Suppose $\act : \G \otimes \X \to \X$ is an action in $\TopStoch$.
	By Lemma \ref{lem:orbit-map-coequaliser-existence}, it holds that $\act$ and $\triv$ have a coequaliser $q : \X \to \X/\G$ defined as $q(A|x) \coloneqq \delta_x(A)$, where $\X/\G$ denotes $\X$ equipped with the invariant topology induced by $\act$.
	Moreover, since $q$ is zero-one by definition, it is deterministic \citep[Example 10.4]{fritz2020synthetic}.
	Now let $\Y$ be an arbitrary topological space.
	We wish to show that this coequaliser is preserved by the functor $(-) \otimes \Y$.
	Applying Lemma \ref{lem:orbit-map-coequaliser-existence} again, now with respect to the action \eqref{eq:orbit-map-preserved-action-1}, we also obtain the following coequaliser diagram:
	\[
		\begin{tikzcd}[column sep=3em]
			(\G \otimes \X) \otimes \Y \ar[shift left=1.5]{r}{\act \otimes \id_\Y} \ar[shift right=1.5,swap]{r}{\triv} & \X \otimes \Y \ar{r}{r} & (\X \otimes \Y)/\G
		\end{tikzcd}
	\]
	where $(\X \otimes \Y)/\G$ denotes $\X \otimes \Y$ equipped with the invariant topology induced by \eqref{eq:orbit-map-preserved-action-1}, and $r(A| x, y) \coloneqq \delta_{(x, y)}(A)$.
	From Proposition \ref{prop:product-of-invariant-topologies}, we know that
	\[
		(\X \otimes \Y)/\G = \X/\G \otimes \Y.
	\]
	Additionally, given Borel $A \subseteq \X$ and $B \subseteq \Y$, we have
	\begin{align*}
		r(A \times B| x, y) &= \delta_{(x, y)}(A \times B) \\
			&= \delta_x(A) \, \delta_y(B) \\
			&= q(A|x) \, \id_{\Y}(B|y).
	\end{align*}
	This shows $r = q \otimes \id_{\Y}$, and the result now follows.
\end{proof}

\subsection{Orbit maps from coequalisers}

\begin{proposition} \label{prop:orbit-map-from-coequaliser}
	Let $\act : \G \otimes \X \to \X$ be an action in a Markov category $\C$.
	If $\act$ admits some orbit map, then every coequaliser $q$ of the form
	\begin{equation} \label{eq:orbit-map-coequaliser-proof}
		\begin{tikzcd}%
			\G \otimes \X \ar[shift left=1.5]{r}{\act} \ar[shift right=1.5,swap]{r}{\triv} & \X \ar{r}{q} & \X/\G
		\end{tikzcd}
	\end{equation}
	is an orbit map.
\end{proposition}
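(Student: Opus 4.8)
The plan is to use that coequalisers of a fixed parallel pair are unique up to unique isomorphism, and then transport the two extra properties of an orbit map — determinism and preservation under every functor $(-) \otimes \Y$ — across that isomorphism.

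Concretely, I would fix an orbit map $q_0 : \X \to \X_0$ for $\act$ (one exists by hypothesis) and let $q : \X \to \X/\G$ be an arbitrary coequaliser of $\act$ and $\triv$ as in \eqref{eq:orbit-map-coequaliser-proof}. Both $q_0$ and $q$ coequalise the same pair, so their universal properties supply mutually inverse morphisms between $\X_0$ and $\X/\G$; in particular there is a (unique) isomorphism $\phi : \X_0 \to \X/\G$ with $q = \phi \circ q_0$. It then remains to check the two clauses of Definition \ref{def:orbit-map} for $q$ beyond it being a coequaliser, which is the hypothesis.

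For determinism: $q_0$ is deterministic because it is an orbit map, and an isomorphism in a Markov category is always deterministic — this follows from \cite[Lemma 10.9]{fritz2020infinite} applied to the identity factored as $\id_{\X/\G} = \phi \circ \phi^{-1}$ with $\phi^{-1}$ epic (every isomorphism being epic) — so $q = \phi \circ q_0$ is deterministic, being a composite of deterministic morphisms. For the preservation clause, fix an object $\Y$. Applying $(-) \otimes \Y$ to the orbit-map coequaliser diagram for $q_0$ shows that $q_0 \otimes \id_\Y$ is a coequaliser of $\act \otimes \id_\Y$ and $\triv \otimes \id_\Y$. By bifunctoriality of $\otimes$ we have $q \otimes \id_\Y = (\phi \otimes \id_\Y) \circ (q_0 \otimes \id_\Y)$, and $\phi \otimes \id_\Y$ is an isomorphism; since postcomposing a coequaliser with an isomorphism yields a coequaliser, $q \otimes \id_\Y$ is a coequaliser of $\act \otimes \id_\Y$ and $\triv \otimes \id_\Y$. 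As this is exactly the image of \eqref{eq:orbit-map-coequaliser-proof} under $(-) \otimes \Y$, the preservation condition holds, and hence $q$ is an orbit map.

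The argument is largely formal, and the one step needing genuine care is the determinism of $q$: a coequaliser of deterministic morphisms need not itself be deterministic in an arbitrary (non-positive) Markov category, so this has to be obtained indirectly, which is where the determinism of isomorphisms is used. The other ingredients — uniqueness of coequalisers up to isomorphism, stability of coequalisers under postcomposition with isomorphisms, and bifunctoriality of the monoidal product — are all standard.
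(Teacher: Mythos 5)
Your argument for the preservation clause is correct and is essentially the paper's own proof: both transport the coequaliser property across the canonical isomorphism $\phi$ between the two coequaliser objects, the only cosmetic difference being that the paper identifies the induced map on the tensored diagram with $\phi \otimes \id_\Y$ via the uniqueness clause of the universal property, whereas you invoke the equivalent fact that postcomposing a coequaliser with an isomorphism yields a coequaliser.

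The gap is in the determinism step. The lemma you cite is used in the paper (proof of Proposition \ref{prop:coequalisers-in-deterministic-category}) in the form: if $h$ is a \emph{deterministic} epimorphism and $k' \circ h$ is deterministic, then $k'$ is deterministic; the hypothesis that $h$ itself is deterministic is essential, since the proof rewrites $\cop \circ k' \circ h = (k' \otimes k') \circ (h \otimes h) \circ \cop = (k' \otimes k') \circ \cop \circ h$ before cancelling $h$. Applying it to $\id_{\X/\G} = \phi \circ \phi^{-1}$ therefore presupposes that $\phi^{-1}$ is deterministic, which is exactly as much in question as the determinism of $\phi$ itself, so the argument is circular. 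Moreover, ``every isomorphism in a Markov category is deterministic'' is not true in this generality: in the Markov category of real matrices with columns summing to one, there are invertible such matrices whose inverses also have columns summing to one but whose entries are not all $0$ or $1$, and these are non-deterministic isomorphisms. What is true, and covers every example appearing in the paper, is that isomorphisms are deterministic in \emph{positive} Markov categories: the morphism $(\phi^{-1} \otimes \phi^{-1}) \circ \cop \circ \phi$ has both marginals equal to the identity, hence equals $\cop$ by \cite[Theorem 2.8]{fritz2023dilations}, and determinism of $\phi$ follows. To your credit, you are the only one checking this clause at all --- the paper's own proof establishes only the preservation condition and is silent on the determinism of $q$ required by Definition \ref{def:orbit-map} --- but as written your justification does not go through, and without positivity (or some other hypothesis forcing $\phi$ to be deterministic) an arbitrary coequaliser of $(\act, \triv)$ genuinely can fail to be deterministic even when a deterministic one exists.
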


\begin{proof}
	Let $r : \X \to \Z$ be an orbit map for $\act$, and suppose $q : \X \to \X/\G$ a coequaliser of the form \eqref{eq:orbit-map-coequaliser-proof}
	Since $r$ is also a coequaliser, there exists a unique morphism $q'$ such that the triangle in the following diagram commutes:
	\[
		\begin{tikzcd}
			\G \otimes \X \ar[shift left=1.5]{r}{\act} \ar[shift right=1.5,swap]{r}{\triv} & \X \ar{r}{r} \ar[swap]{rd}{q} & \Z \ar[dashed, "q'"{right}, "\cong"{left}]{d}  \\
			& & \X/\G
		\end{tikzcd}
	\]
	Here $q'$ is moreover an isomorphism by the Yoneda lemma since $q$ is a coequaliser.
	Likewise, since $r$ is an orbit map, $r \otimes \id_\Y$ is a coequaliser, and so there exists a unique morphism $q''$ such that the triangle below commutes:
	\[
		\begin{tikzcd}[column sep=3em, row sep=3em]
			(\G \otimes \X) \otimes \Y \ar[shift left=1.5]{r}{\act \otimes \id_\Y} \ar[shift right=1.5,swap]{r}{\triv \otimes \id_\Y} & \X \otimes \Y \ar{r}{r \otimes \id_\Y} \ar[swap]{rd}{q \otimes \id_\Y}  & \Z \otimes \Y \ar[dashed]{d}{q''} \\
			&& \X/\G \otimes \Y
		\end{tikzcd}
	\]
	By uniqueness, we must have $q'' = q' \otimes \id_\Y$, which is then an isomorphism since $q'$ is.
	It follows that $q \otimes \id_\Y$ is a coequaliser of the parallel arrows in this last diagram, which shows that \eqref{eq:orbit-map-coequaliser-proof} is preserved by the functor $(-) \otimes \Y$.
\end{proof}

\subsection{Coset maps for semidirect products}

\begin{proposition} \label{prop:semidirect-product-coset-maps}
	Let $\C$ be a Markov category, and $\N \rtimes_\actr \HH$ a semidirect product in $\C$.
	Adopting the notation of Example \ref{ex:semidirect-product-coset-maps}, it holds that $i_\N$ and $i_\HH$ are both homomorphisms.
	Moreover, $p_\HH$ is an $i_\N$-coset map, and $p_\N$ is an $i_\HH$-coset map.
\end{proposition}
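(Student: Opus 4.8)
The plan is to exhibit each projection as a \emph{split epimorphism} whose associated idempotent $s\circ p$ is ``acting by a computable group element'', and then to read the coequaliser property straight off the copy-delete axioms; the preservation clause of Definition \ref{def:orbit-map} is then obtained by re-running the same argument on the tensored data.

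That $i_\N$ and $i_\HH$ are homomorphisms is checked in $\Set$ using Remark \ref{rem:translating-classical-theory} (everything in sight is deterministic): $i_\N(n)i_\N(n')=(n,e_\HH)(n',e_\HH)=(n\,\actr(e_\HH,n'),e_\HH)=(nn',e_\HH)$ by unitality of $\actr$, and $i_\HH(h)i_\HH(h')=(e_\N,h)(e_\N,h')=(e_\N\,\actr(h,e_\N),hh')=(e_\N,hh')$ using $\actr(h,e_\N)=e_\N$, which holds because $\mul_\N$ is $\HH$-equivariant (condition \eqref{eq:semidirect-product-conditions}). Next, unwinding Definition \ref{def:coset-map} together with the inversion formula in \eqref{eq:semidirect-product-group-operations} (so $i_\N(n)^{-1}=(n^{-1},e_\HH)$ and $i_\HH(h)^{-1}=(e_\N,h^{-1})$), the action $\act_\N$ of $\N$ on $\N\rtimes_\actr\HH$ defining the $i_\N$-coset condition sends $(n,(n',h))$ to $(n',h)\,i_\N(n)^{-1}=(n'\,\actr(h,n^{-1}),h)$, while the action $\act_\HH$ of $\HH$ on $\N\rtimes_\actr\HH$ defining the $i_\HH$-coset condition sends $(h,(n',h'))$ to $(n',h'h^{-1})$; both are genuine actions, being restrictions of $\rmul$ (Example \ref{ex:group-action-on-itself}).

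Now isolate the following formal sufficient condition: \emph{if $\act:\N\otimes\X\to\X$ is an action and $p:\X\to Q$, $s:Q\to\X$, $c:\X\to\N$ are deterministic morphisms with $p\circ s=\id_Q$, with $s\circ p=\act\circ\langle c,\id_\X\rangle$, and with $p\circ\act=p\circ\triv$, then $p$ is an orbit map for $\act$.} For the proof, the copy-delete axioms give $\triv\circ\langle c,\id_\X\rangle=\id_\X$, so for any $k:\X\to Z$ with $k\circ\act=k\circ\triv$ we get $k=k\circ\triv\circ\langle c,\id_\X\rangle=k\circ\act\circ\langle c,\id_\X\rangle=k\circ s\circ p$; hence $k$ factors through $p$ as $(k\circ s)\circ p$, uniquely because $p$ is a split epimorphism and therefore epic. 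Together with $p\circ\act=p\circ\triv$ this shows \eqref{eq:orbit-map-coequaliser} (suitably instantiated) is a coequaliser. For preservation, observe that for each object $\Y$ the triple $p\otimes\id_\Y$, $s\otimes\id_\Y$, $c\circ\pr_\X$ satisfies the same three hypotheses, now with respect to the action \eqref{eq:orbit-map-preserved-action} (whose trivial analogue is the trivial action on $\X\otimes\Y$), so the coequaliser argument applies again and $p\otimes\id_\Y$ is a coequaliser of the tensored pair, which is exactly the preservation condition. Applying the lemma with $(p,s,c)=(p_\HH,\,i_\HH,\,(n',h)\mapsto\actr(h^{-1},n'))$ gives that $p_\HH$ is an $i_\N$-coset map, and with $(p,s,c)=(p_\N,\,i_\N,\,p_\HH)$ that $p_\N$ is an $i_\HH$-coset map; in each case the three hypotheses reduce to identities in $\Set$, verified via Remark \ref{rem:translating-classical-theory} from the semidirect-product formulas above, and $p_\HH,p_\N$ are deterministic being projections.

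The main obstacle is the preservation clause: because we do not assume $\C$ admits all orbit maps, we cannot invoke Proposition \ref{prop:orbit-map-from-coequaliser} to upgrade a bare coequaliser to an orbit map, so preservation must be checked directly --- which is precisely why routing everything through the split-epi lemma pays off, as the tensored-up hypotheses are then immediate. A secondary, purely bookkeeping nuisance is tracking the copy, swap, and unitor morphisms implicit in $\langle c,\id_\X\rangle$ and in \eqref{eq:orbit-map-preserved-action}; these are routine and I would suppress them, or work in a strict monoidal presentation as elsewhere in the appendix.
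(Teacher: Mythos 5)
Your proof is correct and takes essentially the same approach as the paper: the paper likewise verifies invariance of the projection in $\Set$ and then factors an arbitrary invariant morphism through the projection by right-multiplying the input with a computable group element so that the resulting idempotent is the section followed by the projection (exactly the content of your hypothesis $s\circ p=\act\circ\langle c,\id_\X\rangle$), with uniqueness from the split-epi property and preservation handled ``by a similar argument'' on the tensored diagram. The only differences are organisational --- you isolate this as a reusable lemma and spell out the preservation clause explicitly via the triple $(p\otimes\id_\Y,\,s\otimes\id_\Y,\,c\circ\pr_\X)$, which is if anything slightly more careful than the paper on that point.
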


\begin{proof}
	A standard argument shows that $i_\N$ and $i_\HH$ are homomorphisms when $\C = \Set$, which translates to the general case by Remark \ref{rem:translating-classical-theory}.
	We prove that $p_\HH$ is an $i_\N$-coset map, with $p_\N$ being similar.
	For this, we must first prove that $p_\HH$ is a coequaliser of the form
	\begin{equation} \label{eq:semidirect-product-coset-proof-1}
		\begin{tikzcd}[column sep=3em]
			\N \otimes (\N \rtimes_\actr \HH) \ar[shift left=1.5]{r}{\act} \ar[shift right=1.5,swap]{r}{\triv} & \N \rtimes_\actr \HH \ar{r}{p_\HH} & \HH
		\end{tikzcd}
	\end{equation}
	where $\act$ denotes the action
	\[
		\tikzfig{semidirect-product-coset-proof-1}
	\]
	This firstly requires showing that $p_\HH$ is invariant with respect to $\act$, so that $p_\HH \circ \act = p_\HH \circ \triv$.
	By Remark \ref{rem:translating-classical-theory}, it suffices to do so when $\C = \Set$, where for $n, n' \in \N$ and $h \in \HH$ we have
	\begin{align*}
		p_\HH(n \cdot (n', h))
			&= p_\HH((n', h) \, (n^{-1}, e_\HH)) \\
			&= p_\HH(n' \, \rho(h, n^{-1}), h) \\
			&= h \\
			&= p_\HH(n', k).
	\end{align*}
	Next, suppose we have $m : \N \rtimes_\actr \HH \to \Y$ in $\C$ that is also invariant to $\act$.
	Then
	\[
		\tikzfig{semidirect-product-coset-proof-2}
	\]
	Here the first step holds because $m$ is invariant.
	The second step follows from some basic manipulations: in $\Set$, letting $\psi$ denote the dashed box, we have for $n \in \N$ and $h \in \HH$
	\begin{align*}
		\psi(n, h) &= (n, h) \, i_\HH(h)^{-1} \\
			&= (n, h) \, (e_\N, h^{-1}) \\
			&= (n, h h^{-1}) \\
			&= (n, e_\HH).
	\end{align*}
	This shows that $m = (m \circ i_\N) \circ p_\HH$.
	Since $p_\HH$ is easily seen to be an epimorphism, it follows that $m \circ i_\N$ is unique with this property.
	All up, this means \eqref{eq:semidirect-product-coset-proof-1} is a coequaliser diagram.
	A similar argument shows that this is preserved by every functor $(-) \otimes \Y$, from which it follows that $p_\HH$ is an $i_\N$-coset map.
\end{proof}

\subsection{Induced actions on orbits and cosets}

\begin{proposition} \label{prop:induced-action-equivariance}
	Let $\C$ be a Markov category, and $\act : \G \otimes \X \to \X$ and $\actb : \HH \otimes \X \to \X$ actions in $\C$ that commute in the sense of Remark \ref{rem:direct-product-actions} from the main text.
	If $q : \X \to \X/\HH$ is an orbit map for $\actb$, then there exists a unique action $\act/\HH : \G \otimes \X/\HH \to \X/\HH$ that makes $q$ equivariant with respect to $\G$ as follows:
	\begin{equation} \label{eq:canonical-action-equivariance}
		\tikzfig{canonical-action-equivariance}
	\end{equation}
\end{proposition}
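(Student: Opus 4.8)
The plan is to build $\act/\HH$ by lifting $q \circ \act$ along a coequaliser, in the usual way, and then to verify it is a deterministic action by precomposing the relevant equations with suitable epimorphisms. Throughout I will reason as if $\C$ is strictly monoidal, the general case being the same up to coherence isomorphisms, exactly as in Proposition \ref{prop:orbit-map-preserved-action}.

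First I would note that $\id_\G \otimes q : \G \otimes \X \to \G \otimes \X/\HH$ is again an orbit map --- for the $\HH$-action on $\G \otimes \X$ that acts via $\actb$ on $\X$ and trivially on $\G$ --- by Proposition \ref{prop:orbit-map-preserved-action} together with the symmetry of $\otimes$; in particular it is a coequaliser of that action against $\triv$, hence an epimorphism. The same argument (applied with $\Y \coloneqq \G \otimes \G$) shows $\id_{\G \otimes \G} \otimes q$ is likewise a coequaliser and epi. Next I claim $q \circ \act : \G \otimes \X \to \X/\HH$ coequalises the parallel pair $\HH \otimes \G \otimes \X \rightrightarrows \G \otimes \X$ consisting of this $\HH$-action and $\triv$. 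Indeed, since $\act$ and $\actb$ commute (Remark \ref{rem:direct-product-actions}) we may slide the $\actb$-action through $\act$ so that it becomes an $\actb$-action on the output of $\act$, which $q$ then absorbs because $q$ is $\actb$-invariant (being a coequaliser of $\actb$ and $\triv$); this is precisely the identity $q \circ \act \circ (\text{$\actb$ on $\X$}) = q \circ \act \circ \triv$. By the universal property of the coequaliser $\id_\G \otimes q$, there is therefore a unique morphism $\act/\HH : \G \otimes \X/\HH \to \X/\HH$ in $\C$ with $(\act/\HH) \circ (\id_\G \otimes q) = q \circ \act$, which is exactly the equivariance square \eqref{eq:canonical-action-equivariance}; and a morphism making that square commute is unique because $\id_\G \otimes q$ is epi.

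It remains to check that $\act/\HH$ is deterministic and satisfies the action axioms. For determinism I would precompose the determinism equation for $\act/\HH$ with the epimorphism $\id_\G \otimes q$; using that $\id_\G \otimes q$ is deterministic (so copying commutes past it) and that $q \circ \act$ is deterministic (a composite of deterministic morphisms), both sides reduce to $\cop_{\X/\HH} \circ q \circ \act$, so the equation holds --- and this argument needs no positivity hypothesis. For associativity and unitality I would similarly precompose the two candidate identities (morphisms out of $\G \otimes \G \otimes \X/\HH$ and out of $\X/\HH$) with the epimorphisms $\id_{\G \otimes \G} \otimes q$ and $q$; repeatedly applying the defining relation $(\act/\HH) \circ (\id_\G \otimes q) = q \circ \act$ along with $(\mul_\G \otimes \id_{\X/\HH}) \circ (\id_{\G \otimes \G} \otimes q) = (\id_\G \otimes q) \circ (\mul_\G \otimes \id_\X)$ and the analogue for $\e_\G$, both sides collapse to $q \circ \act \circ (\mul_\G \otimes \id_\X)$ (respectively to $q$) by associativity and unitality of $\act$. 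Cancelling the epis gives the result.

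The only step needing genuine care is the first one: ensuring all the coequalisers we use are available, i.e.\ that the preservation clause of Definition \ref{def:orbit-map} --- phrased for right-tensoring by a fixed object --- actually yields $\id_\G \otimes q$ and $\id_{\G \otimes \G} \otimes q$ as coequalisers, and that we correctly identify the $\HH$-action on $\G \otimes \X$ they coequalise. This is delivered by Proposition \ref{prop:orbit-map-preserved-action} and symmetry, but it is what the whole argument rests on. After that, everything is the routine ``precompose with the epi and reduce to the corresponding fact for $\act$'' manipulation, the one mild subtlety being that the precomposition trick makes the positivity assumption unnecessary for determinism.
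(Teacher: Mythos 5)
Your proposal is correct and follows essentially the same route as the paper's proof: define $\act/\HH$ via the universal property of the coequaliser $\id_\G \otimes q$ (available by the preservation clause of Definition \ref{def:orbit-map}), verify the coequalising condition from commutativity of the actions and $\actb$-invariance of $q$, and establish determinism and the action axioms by precomposing with the relevant epimorphisms and cancelling. The only cosmetic difference is that you re-derive the determinism step by hand, whereas the paper delegates it to Proposition \ref{prop:coequalisers-in-deterministic-category}; both versions of that step avoid any positivity hypothesis.
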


\begin{proof}
	The idea is that we obtain $\act/\HH$ as the unique morphism in $\Cdet$ that makes the square below commute:
	\begin{equation} \label{eq:induced-action-equivariance-diagram}
		\begin{tikzcd}[column sep=5em]
			\G \otimes (\HH \otimes \X) \ar[shift left=1.5]{r}{\id_\G \otimes \actb} \ar[shift right=1.5,swap]{r}{\id_\G \otimes \triv} & \G \otimes \X \ar{r}{\id_\G \otimes q} \ar{d}{\act} & \G \otimes \X/\HH \ar[dashed]{d}{\act/\HH} \\
			& \X \ar{r}{q} & \X/\HH
		\end{tikzcd}
	\end{equation}
	Notice that this says that \eqref{eq:canonical-action-equivariance} holds.
	Now, by the preservation condition of orbit maps, it holds that $\id_\G \otimes q$ is a coequaliser in $\C$ of the parallel arrows shown.
	As such, $\act/\HH$ exists uniquely in $\C$ whenever
	\begin{equation*} %
		q \circ \act \circ (\id_\G \otimes \actb) = q \circ \act \circ (\id_\G \otimes \triv).
	\end{equation*}
	By Remark \ref{rem:translating-classical-theory}, it suffices to show this in $\Set$, where for $g \in \G$, $h \in \HH$, and $x \in \X$ we have simply
	\[
		q(g \cdot (h \cdot x)) = q(h \cdot (g \cdot x)) = q(g \cdot x).
	\]
	where the first step uses the assumption that $\act$ and $\actb$ commute, and the second uses the fact that $q$ is invariant to $\actb$.

	We are therefore done if we can show that $\act/\HH$ is an action.
	By Proposition \ref{prop:coequalisers-in-deterministic-category}, we know that $\act/\HH$ is deterministic, since all the other morphisms appearing in \eqref{eq:induced-action-equivariance-diagram} are.
	To see that $\act/\HH$ is associative, observe that
	\[
		\tikzfig{action-on-quotient-associative-proof-1}
	\]
	Here the first step uses \eqref{eq:canonical-action-equivariance} twice, the second uses associativity of $\act$, and the third uses \eqref{eq:canonical-action-equivariance} again.
	Now, since $q$ is an orbit map, $\id_{\K \otimes \K} \otimes q$ is a coequaliser.
	Since all coequalisers are epimorphisms, it follows that both sides here are equal even when $q$ is removed, which shows associativity.
	A similar argument shows that $\act/\HH$ is unital and completes the proof.
\end{proof}

\begin{proposition} \label{prop:induced-morphism-equivariance}
	Under the same setup of Proposition \ref{prop:induced-action-equivariance}, let $k/\HH : \X/\HH \to \Y$ be a morphism in $\C$ such that $k/\HH \circ q : \X \to \Y$ is equivariant with respect to $\act$ and some additional $\G$-action $\act_\Y : \G \otimes \Y \to \Y$.
	Then $k/\HH$ is equivariant with respect to $\act/\HH$ and $\act_\Y$ also.
\end{proposition}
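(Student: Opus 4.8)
The plan is to verify equivariance of $k/\HH$ after precomposing with the orbit map $q$, exploiting the fact that $\id_\G \otimes q$ inherits from $q$ the property of being a coequaliser, hence an epimorphism. Unwinding Definition \ref{def:equivariance}, what must be shown is
\[
	(k/\HH) \circ (\act/\HH) = \act_\Y \circ (\id_\G \otimes (k/\HH))
\]
as morphisms $\G \otimes \X/\HH \to \Y$ in $\C$. Since $q$ is an orbit map, the preservation clause of Definition \ref{def:orbit-map} guarantees that $\id_\G \otimes q : \G \otimes \X \to \G \otimes \X/\HH$ is again a coequaliser (the same observation is used in the proof of Proposition \ref{prop:induced-action-equivariance}), and in particular an epimorphism. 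It therefore suffices to prove the displayed identity after precomposition by $\id_\G \otimes q$.

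Next I would compute both precomposed sides. Using the defining property \eqref{eq:canonical-action-equivariance} of $\act/\HH$, namely $(\act/\HH) \circ (\id_\G \otimes q) = q \circ \act$, the right-hand side (in the sense of the target equation) becomes
\[
	(k/\HH) \circ (\act/\HH) \circ (\id_\G \otimes q) = (k/\HH) \circ q \circ \act .
\]
For the other side, functoriality of the monoidal product together with the hypothesis that $(k/\HH) \circ q$ is equivariant with respect to $\act$ and $\act_\Y$ gives
\[
	\act_\Y \circ (\id_\G \otimes (k/\HH)) \circ (\id_\G \otimes q) = \act_\Y \circ (\id_\G \otimes ((k/\HH) \circ q)) = ((k/\HH) \circ q) \circ \act .
\]
The two resulting expressions agree, so the two precomposed morphisms are equal; cancelling the epimorphism $\id_\G \otimes q$ then yields the claim.

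The only step that needs a moment of care is the passage from ``$q$ is an orbit map'' to ``$\id_\G \otimes q$ is an epimorphism'', which is precisely what the preservation condition in Definition \ref{def:orbit-map} is for (invoked via Proposition \ref{prop:orbit-map-preserved-action} / Remark \ref{rem:product-of-orbit-maps-is-orbit-map}, up to the symmetry isomorphisms of the monoidal structure). Everything else is a short diagram chase, and because $q$, $\act$, and $\act/\HH$ are all deterministic, no appeal to positivity or to measure theory is required; the argument is purely formal and so also goes through verbatim in string-diagram form.
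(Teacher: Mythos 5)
Your proof is correct and follows essentially the same route as the paper: both arguments precompose the desired equivariance equation with the epimorphism $\id_\G \otimes q$, rewrite one side via the defining property \eqref{eq:canonical-action-equivariance} of $\act/\HH$ and the other via the assumed equivariance of $k/\HH \circ q$, and then cancel the epimorphism. (Only a cosmetic slip: you label $(k/\HH)\circ(\act/\HH)\circ(\id_\G\otimes q)$ as the ``right-hand side'' when it is the left-hand side of your target equation; the mathematics is unaffected.)
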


\begin{proof}
	We have
	\[
		\tikzfig{induced-morphism-is-equivariant-to-quotient-action-2}
	\]
	where the first step uses the fact that $q$ is equivariant by Proposition \ref{prop:induced-action-equivariance}, and the second step uses the assumption that $k/\HH \circ q$ is equivariant.
	Since $q$ is an orbit map, $\id_\G \otimes q$ is a coequaliser, and hence an epimorphism.
	It follows that both sides are equal when $q$ is removed, which gives the result.
\end{proof}

\subsubsection{Proof of Proposition \ref{prop:induced-action-on-cosets}} \label{sec:proof-of-induced-action-on-cosets}

\begin{proof}
	We first show that the action $\mul$ of $\G$ on itself by left multiplication commutes with the action 
	\[
		\tikzfig{coset-map-definition-action}
	\]
	used in the definition of a $\varphi$-coset map.
	By Remark \ref{rem:translating-classical-theory}, it suffices to show this in $\Set$, where for all $g, g' \in \G$ and $h \in \HH$ we have
	\[
		g \cdot (h \cdot g') = g \cdot (g' \varphi(h)^{-1}) = g(g' \varphi(h)^{-1}) = (gg') \varphi(h)^{-1} = h \cdot (gg') = h \cdot (g \cdot g').
	\]
	The result now follows directly from Proposition \ref{prop:induced-action-equivariance}.
\end{proof}

\subsection{Proof of Proposition \ref{prop:lifting-via-monad}} \label{sec:proof-of-lifting-via-monad}

It is convenient to approach this result in more generality.
Let $\D$ be a Markov category, and suppose $\C$ is the Kleisli category of an affine symmetric monoidal monad on $\D$, whose unit we will denote by $\eta$.
Also denote by $L : \D \to \C$ the standard inclusion functor, which is defined as the identity on objects, and as $Lk \coloneqq \eta_\Y \circ k$ for morphisms $k : \X \to \Y$ in $\D$.
Recall from Corollary 3.2 of \citet{fritz2020synthetic} that $\C$ is canonically a Markov category, where the copy map for each object $\X$ in $\C$ is obtained as $L(\cop_\X)$.
We then have the following.

\begin{proposition} \label{prop:lifting-via-monad-general-result}
	Suppose $\D$ is a Markov category, and $\C$ is the Markov category obtained from an affine symmetric monoidal monad on $\D$ as just described.
	Then the standard inclusion functor lifts groups, homomorphisms, group actions, equivariant morphisms, orbit maps, and sections from $\D$ to $\C$.
\end{proposition}

\begin{proof}
	Let $\G$ be any group in $\D$ with operations $\mul$, $\e$, and $\inv$.
	Since the inclusion functor $L$ is the identity on objects, it lifts these operations to the following morphisms in $\C$:
	\[
		L(\mul) : \G \otimes \G \to \G \qquad L\e : \I \to \G \qquad L\inv : \G \to \G.
	\]
	Since $L$ is functorial, strictly monoidal \citep[Proposition 3.1]{fritz2020synthetic}, and preserves the copy maps in $\D$ by construction, these lifted morphisms are deterministic and satisfy the group axioms.
	In this way $\G$ becomes a group in $\C$.
	A similar argument shows that $L$ lifts homomorphisms, group actions, equivariant morphisms, and sections.
	Finally, $L$ lifts orbit maps because it is a left adjoint and so preserves colimits (see e.g.\ Section 5.1.2 of \citet{perrone2021notes} and Corollary 4.3.2 of \citet{perrone2021notes}), and therefore preserves coequalisers in particular.
\end{proof}

We now obtain the following proof of Proposition \ref{prop:lifting-via-monad}.

\begin{proof}
	As is well known, $\Stoch$ corresponds to the Kleisli category of the Giry monad on $\Meas$ \citep{giry1982categorical}, which is affine symmetric monoidal \citep[Lemma 4.1]{fritz2020synthetic}.
	The unit of this monad is given by $\delta$, where $\delta_\X(x)$ is the Dirac measure at $x \in \X$.
	As such, for $f : \X \to \Y$ in $\Meas$, we have
	\[
		Lf = \delta_\Y \circ f,
	\]
	which corresponds to the morphism $k_f$ obtained by lifting $f$ as in Remark \ref{rem:measurable-function-as-markov-kernel}.
	As such, by Proposition \ref{prop:lifting-via-monad-general-result}, this lifting operation preserves groups, homomorphisms, and so on.
	A similar story holds for $\TopStoch$, where now the relevant monad is defined on $\Top$ (see Corollary 4.17 of \citet{fritz2021probability}).
\end{proof}

\section{Proofs: Symmetrisation procedures}

\begin{proposition}  \label{prop:markov-category-of-equivariant-morphisms}
	Let $\G$ be a group in a Markov category $\C$.
	Then $\C^\G$ as described in Definition \ref{def:markov-category-of-equivariant-morphisms} is always a Markov category.
\end{proposition}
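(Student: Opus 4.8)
The plan is to exploit the fact that all of the candidate structure on $\C^\G$ is literally that of $\C$, merely decorated with actions. Write $U : \C^\G \to \C$ for the forgetful assignment $(\X, \act_\X) \mapsto \X$, which is the identity on hom-sets and hence faithful; the composition, monoidal product, unit, associator, unitors, braiding, copy, and discard of $\C^\G$ are by definition those of $\C$. Consequently every \emph{equational} axiom in the definition of a Markov category --- the interchange/bifunctoriality law, the pentagon and triangle, the two hexagons, coassociativity, counitality and cocommutativity of $\cop$, the compatibility of the comonoid structure with $\otimes$ (i.e.\ $\del_{\X\otimes\Y} = \del_\X \otimes \del_\Y$ and $\cop_{\X \otimes \Y} = (\id \otimes \swap \otimes \id)(\cop_\X \otimes \cop_\Y)$), and $\del_\I = \id_\I$ --- holds in $\C^\G$ the moment it holds in $\C$, because both sides are the same composite, and equality of morphisms in $\C^\G$ just means equality in $\C$. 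So the whole proposition reduces to two things: (i) that the proposed structure actually lands in $\C^\G$, and (ii) that $(\I, \triv)$ is terminal in $\C^\G$ (which supplies the naturality of $\del$, the one non-equational axiom).

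For (i) at the level of objects, the diagonal action $\act_{\X \otimes \Y}$ is an action by Example \ref{ex:diagonal-action} and $\triv$ is an action by Example \ref{ex:trivial-action}, so $\C^\G$ has the stated objects and monoidal unit. That composition and identities of $\C$ preserve equivariance is already noted before Definition \ref{def:markov-category-of-equivariant-morphisms}, so $\C^\G$ is a category. The remaining point is functoriality of $\otimes$ on morphisms: if $k : (\X, \act_\X) \to (\Y, \act_\Y)$ and $m : (\U, \act_\U) \to (\V, \act_\V)$ are equivariant, then so is $k \otimes m$ with respect to the diagonal actions. I would prove this by a short string-diagram computation: unfolding the definition \eqref{eq:diagonal-action-definition-1} of $\act_{\X \otimes \U}$, the incoming $\G$-element is copied by $\cop_\G$ and routed through $\act_\X$ and $\act_\U$; one then slides each copy across $k$ and $m$ using their individual equivariance, and recombines using (co)associativity and cocommutativity of $\cop_\G$ to recognise $\act_{\Y \otimes \V}$ on the other side.

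For the coherence isomorphisms (and their inverses), the braiding, and $\cop$ and $\del$, observe that these are all \emph{deterministic} morphisms of $\C$, and the actions involved on their (co)domains (diagonal actions assembled from $\act_\X, \act_\Y, \dots$ via $\cop_\G$, and trivial actions) are likewise built from deterministic data in $\Cdet$. Hence the required equivariance identities --- for instance $\cop_\X \circ \act_\X = \act_{\X \otimes \X} \circ (\id_\G \otimes \cop_\X)$, and the analogous equations for $\alpha$, $\lambda$, $\rho$, and $\swap$ --- are equations between morphisms of $\Cdet$ and can be checked, via the Yoneda argument of Remark \ref{rem:translating-classical-theory}, in $\Set$, where they become transparent identities such as $g \cdot (x,x) = (g\cdot x, g\cdot x)$ and $g \cdot ((x,y),z) = (g\cdot x, (g\cdot y, g\cdot z))$. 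Finally, for (ii): any morphism $\X \to \I$ in $\C$ is trivially equivariant with respect to $\act_\X$ and $\triv$ (both legs of the equivariance square are the unique morphism into $\I$, since $\I$ is terminal in $\C$), so the hom-set $\C^\G((\X,\act_\X),(\I,\triv))$ equals $\C(\X,\I)$, which is a singleton; thus $(\I,\triv)$ is terminal in $\C^\G$ and $\del$ is natural.

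The main obstacle --- really the only step that is not either inherited from $\C$ by faithfulness of $U$ or a set-theoretic check on deterministic morphisms --- is the verification in (i) that the monoidal product of two equivariant morphisms is equivariant, because there the morphisms $k$ and $m$ are arbitrary (possibly stochastic), so the $\Set$-reduction is unavailable and one must carry out the string-diagram manipulation honestly. It is nevertheless routine, amounting to one application of equivariance in each factor together with the comonoid laws for $\cop_\G$.
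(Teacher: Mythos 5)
Your proof is correct, but it takes a genuinely different route from the paper's. The paper's proof is abstract: after noting (as you do) that composition and identities preserve equivariance and asserting that $\otimes$ is a bifunctor on $\C^\G$, it observes that all the structure morphisms are deterministic, so the axioms can equivalently be verified in $(\Cdet)^\G$; it then identifies $(\Cdet)^\G$ as the Eilenberg--Moore category of the action monad $\G \otimes (-)$ on the cartesian monoidal category $\Cdet$, and invokes monadicity of the forgetful functor (creation of limits) to lift the cartesian monoidal structure wholesale, which delivers the comonoid structure and all coherence in one stroke. Your argument is instead a direct verification: equational axioms are inherited because the forgetful functor is the identity on hom-sets, equivariance of the (deterministic) structure maps is checked by the Yoneda/$\Set$ reduction of Remark \ref{rem:translating-classical-theory}, and terminality of $(\I,\triv)$ gives naturality of $\del$. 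What your approach buys is self-containedness and, notably, an honest treatment of the one step the paper waves through as ``clear'': that $k \otimes m$ is equivariant for the diagonal actions when $k$ and $m$ are arbitrary (possibly stochastic) equivariant morphisms. You correctly flag that the $\Set$-reduction is unavailable there and that a genuine string-diagram computation (two applications of equivariance plus interchange past $\cop_\G$) is needed; this is exactly right and is in fact the only content of bifunctoriality not covered by the paper's reduction to $\Cdet$. What the paper's approach buys is brevity and the automatic handling of all coherence data via standard machinery, at the cost of leaving that bifunctoriality check implicit.
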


\begin{proof}
	It is straightforward to check that the composition of equivariant morphisms is equivariant, and that identity morphisms are always equivariant when their domain and codomain are equipped with the same action of $\G$.
	In this way, $\C^\G$ is a category.
	It is also clear that $\otimes$ is a bifunctor on $\C^\G$.
	We therefore only need to show that the structure maps as defined for $\C^\G$ satisfy the axioms of a Markov category (including those of a symmetric monoidal category).
	By definition, these structure maps are inherited from $\C$, and so by Lemma 10.12 of \citet{fritz2020synthetic} are all deterministic, which means we can equivalently show this for $(\Cdet)^\G$.
	But now the latter is just the Eilenberg-Moore category of the action monad $\G \otimes (-)$ on $\Cdet$ (see e.g.\ Section 5.2 of \citet{perrone2021notes}), which is cartesian monoidal \citep[Remark 10.13]{fritz2020synthetic}.
	Since the forgetful functor $(\Cdet)^\G \to \Cdet$ is monadic, it creates limits \citep[Theorem 5.6.5]{riehl2017category}, and so this cartesian monoidal structure lifts to $(\Cdet)^\G$ in the way described in Definition \ref{def:markov-category-of-equivariant-morphisms}.
\end{proof}

\section{Proofs: A general methodology for symmetrisation}

\subsection{Proof of Theorem \ref{thm:partial-left-adjoint-existence}} \label{sec:existence-of-left-adjoint-proof}

\begin{remark} \label{rem:abusing-yoneda}
	Recall from Remark \ref{rem:translating-classical-theory} that the Yoneda Lemma allows us to lift equations that hold for all groups, actions, etc.\ in $\Set$ to equations that hold more generally in the deterministic subcategory of an arbitrary Markov category.
	At several points in this subsection, we will abuse this technique by applying it even when some morphisms involved are not deterministic.
	This will streamline our proofs considerably, which become long-winded when expressed in terms of string diagrams.
	It will also demonstrate how our arguments correspond to the classical set-theoretic ones, which are standard \citep[Chapter I.1]{may1997equivariant}.
	In the few cases where we do this (which we will flag), it will be clear how to translate our set-theoretic manipulations into a general string-diagrammatic argument.
	Our approach here may therefore be regarded essentially as a convenient shorthand for the ``real'' proof.
	A more formal justification may be possible: the key idea seems to be that, in cases where this approach is valid, we do not reuse the output of any nondeterministic morphism more than once, which seems to be where potential issues could arise (see Section 7.1 of \citet{stein2021structural}).
\end{remark}

\begin{lemma} \label{lem:transpose-characterisation}
	Let $\C$ be a Markov category, and $\varphi : \HH \to \G$ a homomorphism and $q : \G \to \G/\HH$ a $\varphi$-coset map in $\C$.
	For all $k : \Res_\varphi(\X, \act_\X) \to \Res_\varphi(\Y, \act_\Y)$ in $\C^\HH$, there exists a unique $k^\transpose : \G/\HH \otimes \X \to \Y$ in $\C$ such that
	\begin{equation} \label{eq:transpose-characterisation-1}
		\tikzfig{transpose-characterisation}
	\end{equation}
\end{lemma}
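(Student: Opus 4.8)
The plan is to obtain $k^\transpose$ from the universal property of the coequaliser underlying the coset map $q$. By Definition \ref{def:coset-map}, $q : \G \to \G/\HH$ is an orbit map for the action $\actr : \HH \otimes \G \to \G$ of \eqref{eq:rmul-definition}, informally $h \cdot g = g\,\varphi(h)^{-1}$; note that $\actr$ is deterministic, being built from $\mul$ and $\inv$. Applying the preservation condition of Definition \ref{def:orbit-map} with the functor $(-) \otimes \X$, the diagram
\[
\begin{tikzcd}[column sep=3em]
\HH \otimes \G \otimes \X \ar[shift left=1.5]{r}{\actr \otimes \id_\X} \ar[shift right=1.5,swap]{r}{\triv \otimes \id_\X} & \G \otimes \X \ar{r}{q \otimes \id_\X} & \G/\HH \otimes \X
\end{tikzcd}
\]
is again a coequaliser (I suppress associators), where $\triv \otimes \id_\X$ simply discards the $\HH$-input. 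In particular $q \otimes \id_\X$ is an epimorphism, so any morphism satisfying \eqref{eq:transpose-characterisation-1} is automatically unique, and it suffices to construct one.

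Let $\Phi : \G \otimes \X \to \Y$ denote the right-hand side of \eqref{eq:transpose-characterisation-1}: the composite that copies its $\G$-input, inverts one copy and acts with it on $\X$ via $\act_\X$, applies $k$, then acts on the result with the other copy via $\act_\Y$ (so that informally $\Phi(g,x) = g \cdot k(g^{-1} \cdot x)$). By the universal property of the coequaliser above, a morphism $k^\transpose$ satisfying \eqref{eq:transpose-characterisation-1}, i.e.\ $k^\transpose \circ (q \otimes \id_\X) = \Phi$, exists exactly when $\Phi$ coequalises the parallel pair, i.e.\ when
\[
\Phi \circ (\actr \otimes \id_\X) = \Phi \circ (\triv \otimes \id_\X).
\]
Following Remark \ref{rem:abusing-yoneda}, I would verify this set-theoretically: since $\actr$ is deterministic one may push it through the copy, after which the claim reduces to checking that, for $h \in \HH$, $g \in \G$, $x \in \X$,
\[
(g\varphi(h)^{-1}) \cdot k\big((g\varphi(h)^{-1})^{-1} \cdot x\big) = g \cdot k(g^{-1} \cdot x).
\]
Here $(g\varphi(h)^{-1})^{-1} = \varphi(h) g^{-1}$ by the group axioms, associativity of $\act_\X$ rewrites the inner term as $\varphi(h) \cdot (g^{-1} \cdot x)$, equivariance of $k$ for the restricted actions gives $k(\varphi(h) \cdot z) = \varphi(h) \cdot k(z)$, and associativity of $\act_\Y$ together with $g\varphi(h)^{-1}\varphi(h) = g$ finishes the computation. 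Every step is a legal string-diagram rewrite (associativity and unitality of the two actions, the inverse axiom of $\G$, equivariance of $k$), and since the output of $k$ is consumed exactly once the argument transcribes faithfully into $\C$ despite $k$ being possibly non-deterministic.

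The main obstacle is precisely this verification — not because any individual identity is deep, but because a genuinely diagrammatic proof requires care in copying $h$ (and $g$) early enough that $\varphi(h)$ is available both inside and outside $k$, and requires one to be explicit about why the Yoneda-style set-theoretic shorthand remains valid in the presence of the stochastic morphism $k$. Once $\Phi \circ (\actr \otimes \id_\X) = \Phi \circ (\triv \otimes \id_\X)$ is established, the coequaliser's universal property returns the unique $k^\transpose : \G/\HH \otimes \X \to \Y$ with $k^\transpose \circ (q \otimes \id_\X) = \Phi$, which is exactly \eqref{eq:transpose-characterisation-1}, completing the proof.
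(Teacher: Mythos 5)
Your proposal is correct and follows essentially the same route as the paper's proof: both use the preservation condition to see that $q \otimes \id_\X$ coequalises the diagonal action (restricted right multiplication on $\G$, trivial on $\X$) against the trivial action, verify set-theoretically that the right-hand side of \eqref{eq:transpose-characterisation-1} is invariant for that pair using equivariance of $k$ and the group axioms, and then invoke the coequaliser's universal property to produce the unique $k^\transpose$. Your explicit remark that the argument transcribes to $\C$ because the output of the stochastic $k$ is used only once matches the paper's own caveat in Remark \ref{rem:abusing-yoneda}.
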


\begin{proof}
	For brevity, let $\actr : \HH \otimes (\G \otimes \X) \to \G \otimes \X$ denote the following:
	\begin{equation} \label{eq:right-multiplication-times-identity-action}
		\tikzfig{right-multiplication-times-identity-action}
	\end{equation}
	This is seen to be the diagonal action (Example \ref{ex:diagonal-action}) obtained from the action \eqref{eq:rmul-definition} and the trivial action on $\X$.
	Also denote by $m : \G \otimes \X \to \Y$ denote the right-hand side of \eqref{eq:transpose-characterisation-1}.
	We claim that $m \circ \actr = m \circ \triv$, or in other words that $m$ is invariant with respect to $\actr$.
	Noting the caveat of Remark \ref{rem:abusing-yoneda}, we show this in $\Set$, where for $g \in \G$, $h \in \HH$, and $x \in \X$ we have
	\begin{align*}
		m(h \cdot (g, x)) &= m(g \, \varphi(h)^{-1}, x) \\
			&= (g \, \varphi(h)^{-1}) \cdot k((g \, \varphi(h)^{-1})^{-1} \cdot x) \\
			&= g \cdot (h^{-1} \cdot k(h \cdot (g \cdot x))) \\
			&= g \cdot (h^{-1} \cdot h \cdot k(g \cdot x)) \\
			&= g \cdot k(g \cdot x) \\
			&= m(g, x).
	\end{align*}
	Here the first two steps apply the definitions of $\actr$ and $m$, and the third uses the definition of the $\HH$-actions that equip $\Res_\varphi(\X, \act_\X)$ and $\Res_\varphi(\Y, \act_\Y)$.
	The fourth step then uses the fact that $k$ is $\HH$-equivariant.
	From invariance of $m$ and the universal property of orbit maps, we obtain a unique morphism $k^\transpose$ in $\C$ such that the triangle in the following diagram commutes:
	\[
		\begin{tikzcd}[column sep=3em, row sep=3em]
			\HH \otimes (\G \otimes \X) \ar[shift left=1.5]{r}{\actr} \ar[shift right=1.5,swap]{r}{\triv} & \G \otimes \X \ar{r}{q \otimes \id_\X} \ar[swap]{dr}{m} & \G/\HH \otimes \X \ar[dashed]{d}{k^\transpose} \\
			& & \Y
		\end{tikzcd}
	\]
	The commuting triangle here says exactly that \eqref{eq:transpose-characterisation-1} holds, which gives the result.
\end{proof}

\begin{lemma} \label{lem:transpose-characterisation-2}
	The morphism $k^\transpose$ described in Lemma \ref{lem:transpose-characterisation} is always a morphism in $\C^\G$ of the form
	\[
		k^\transpose : (\G/\HH, \mul/\HH) \otimes (\X, \act_\X) \to (\Y, \act_\Y)
	\]
	where $\mul/\HH : \G \otimes \G/\HH \to \G/\HH$ denotes the unique action induced by Proposition \ref{prop:induced-action-on-cosets}.
\end{lemma}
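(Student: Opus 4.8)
The plan is to deduce the claim from Proposition~\ref{prop:induced-morphism-equivariance} in the Appendix, applied not to the bare orbit map but to $q\otimes\id_\X$. Concretely, I must show that $k^\transpose:\G/\HH\otimes\X\to\Y$ is equivariant with respect to the diagonal $\G$-action on $\G/\HH\otimes\X$ built from $\mul/\HH$ and $\act_\X$ --- which is exactly the action carried by the monoidal product $(\G/\HH,\mul/\HH)\otimes(\X,\act_\X)$ in $\C^\G$ by Definition~\ref{def:markov-category-of-equivariant-morphisms} --- and $\act_\Y$ on $\Y$. First I would fix two actions on $\G\otimes\X$: let $\actr:\HH\otimes(\G\otimes\X)\to\G\otimes\X$ be the action \eqref{eq:right-multiplication-times-identity-action} from the proof of Lemma~\ref{lem:transpose-characterisation} (the diagonal of the coset action \eqref{eq:rmul-definition} and the trivial action), for which $q\otimes\id_\X$ is already known to be an orbit map (via Proposition~\ref{prop:orbit-map-preserved-action}); and let $\actb$ denote the diagonal $\G$-action combining left multiplication $\mul$ on the first factor with $\act_\X$ on the second. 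A one-line check in $\Set$, namely $g'\cdot(h\cdot(g,x))=(g'g\,\varphi(h)^{-1},\,g'\cdot x)=h\cdot(g'\cdot(g,x))$, lifted through Remark~\ref{rem:translating-classical-theory} since all morphisms here are deterministic, shows $\actb$ and $\actr$ commute in the sense of Remark~\ref{rem:direct-product-actions}. Hence Proposition~\ref{prop:induced-action-equivariance} applies and produces an action $\actb/\HH$ on $\G/\HH\otimes\X$ making $q\otimes\id_\X$ equivariant. I would then identify $\actb/\HH$ with the desired diagonal action of $\mul/\HH$ and $\act_\X$: by the uniqueness clause of Proposition~\ref{prop:induced-action-equivariance} it suffices to observe that that diagonal action makes $q\otimes\id_\X$ equivariant with respect to $\actb$, which is immediate from the equivariance of $q$ itself (Proposition~\ref{prop:induced-action-on-cosets}) tensored with $\id_\X$.

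Next I would verify that the composite $m:=k^\transpose\circ(q\otimes\id_\X)$, i.e.\ the right-hand side of \eqref{eq:transpose-characterisation-1}, is equivariant with respect to $\actb$ and $\act_\Y$. Following Remark~\ref{rem:abusing-yoneda} I would do this in $\Set$: for $g,g'\in\G$ and $x\in\X$,
\[
	m(g'\cdot(g,x)) = m(g'g,\,g'\cdot x) = (g'g)\cdot k\!\big((g'g)^{-1}\cdot(g'\cdot x)\big) = (g'g)\cdot k(g^{-1}\cdot x),
\]
while $g'\cdot m(g,x) = g'\cdot\big(g\cdot k(g^{-1}\cdot x)\big) = (g'g)\cdot k(g^{-1}\cdot x)$, and the two agree. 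Crucially $k$ is invoked exactly once on each side and its (possibly nondeterministic) output is never reused, so the $\Set$-shorthand is legitimate here. With this in hand, Proposition~\ref{prop:induced-morphism-equivariance} --- applied with the roles of $\X$, $\actb$, $\act$, $q$ and $k/\HH$ there played respectively by $\G\otimes\X$, $\actr$, $\actb$, $q\otimes\id_\X$ and $k^\transpose$ --- yields that $k^\transpose$ is equivariant with respect to $\actb/\HH$ and $\act_\Y$. Since $\actb/\HH$ is the monoidal-product action of $\C^\G$ by the identification above, this is precisely the statement that $k^\transpose:(\G/\HH,\mul/\HH)\otimes(\X,\act_\X)\to(\Y,\act_\Y)$ is a morphism in $\C^\G$.

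I expect the only real work to be bookkeeping: making sure the diagonal actions $\actr$, $\actb$, $\actb/\HH$ are literally the ones named in the cited results, and checking via uniqueness that $\actb/\HH$ coincides with the $\C^\G$ monoidal-product action; the set-theoretic computations themselves are routine. A more self-contained route, should one prefer to avoid Proposition~\ref{prop:induced-morphism-equivariance}, is to precompose the desired equivariance equation for $k^\transpose$ with $\id_\G\otimes(q\otimes\id_\X)$, which is an epimorphism (the preservation condition for the orbit map $q$, combined with the symmetry isomorphisms, exhibits $q\otimes\id_{\X\otimes\G}$ and hence $\id_\G\otimes(q\otimes\id_\X)$ as a coequaliser), and then check the resulting identity directly in $\Set$ using \eqref{eq:transpose-characterisation-1}, Proposition~\ref{prop:induced-action-on-cosets} and the action axioms.
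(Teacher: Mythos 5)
Your proposal is correct and follows essentially the same route as the paper's proof: check that the diagonal $\G$-action on $\G\otimes\X$ commutes with the $\HH$-action \eqref{eq:right-multiplication-times-identity-action}, verify equivariance of $m=k^\transpose\circ(q\otimes\id_\X)$ in $\Set$, and conclude via Proposition \ref{prop:induced-morphism-equivariance}. The only difference is that you explicitly identify the induced action $\actb/\HH$ with the monoidal-product action of $\C^\G$ via the uniqueness clause of Proposition \ref{prop:induced-action-equivariance}, a step the paper leaves implicit; this is a harmless (indeed slightly more careful) addition.
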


\begin{proof}
	We adopt the same notation as in the proof of Lemma \ref{lem:transpose-characterisation}.
	By definition of $\C^\G$, the action that equips $(\G, \mul/\HH) \otimes (\X, \act_\X)$ is as follows:
	\begin{equation} \label{eq:diagonal-action-on-k-sharp}
		\tikzfig{diagonal-action-on-k-sharp}
	\end{equation}
	We will first show that this commutes with the $\HH$-action \eqref{eq:right-multiplication-times-identity-action}.
	By Remark \ref{rem:translating-classical-theory}, it suffices to do so in $\Set$: given $g, g' \in \G$, $h \in \HH$ and $x \in \X$, we have
	\begin{align*}
		g \cdot (h \cdot (g', x)) &= g \cdot (g' \varphi(h)^{-1}, g \cdot x) \\
			&= (g g' \varphi(h)^{-1}, g \cdot x) \\
			&= h \cdot (g g', g \cdot x) \\
			&= h \cdot (g \cdot (g', x)).
	\end{align*}
	Next, we show that $m$ is equivariant with respect to \eqref{eq:diagonal-action-on-k-sharp} and $\act_\Y$.
	Noting the caveat of Remark \ref{rem:abusing-yoneda}, we demonstrate this in $\Set$, where we have
	\begin{align*}
		m(g \cdot (g', x)) &= m(gg', g \cdot x) \\
			&= (gg') \cdot k((gg')^{-1} \cdot g \cdot x) \\
			&= g \cdot (g' \cdot k((g')^{-1} \cdot x)) \\
			&= g \cdot m(g', x).
	\end{align*}
	Now recall that $q \otimes \id_\X$ is an orbit map with respect to the $\HH$-action \eqref{eq:right-multiplication-times-identity-action}.
	Since \eqref{eq:transpose-characterisation-1} says
	\[
		m = k^\transpose \circ (q \otimes \id_\X),
	\]
	Proposition \ref{prop:induced-morphism-equivariance} implies that $k^\transpose$ is equivariant with respect to \eqref{eq:diagonal-action-on-k-sharp} and $\act_\Y$ as desired.
\end{proof}

\subsubsection{Proof of Theorem \ref{thm:partial-left-adjoint-existence}}

\begin{proof}
	By Lemmas \ref{lem:transpose-characterisation} and \ref{lem:transpose-characterisation-2}, the assignment $k \mapsto k^\transpose$ defines a function of the required form
	\begin{equation} \label{eq:left-adjoint-definition-on-morphisms-3}
		\C^\HH(\Res_\varphi(\X, \act_\X), \Res_\varphi(\Y, \act_\Y)) \to \C^\G((\G/\HH, \mul/\HH) \otimes (\X, \act_\X), (\Y, \act_\Y)).
	\end{equation}
	It is also straightforward to check that this is natural in $(\X, \act_\X)$ and $(\Y, \act_\Y)$.
	We will show it also has an inverse $m \mapsto m \circ \eta$, where $\eta$ is the morphism
	\[
		\tikzfig{unit-definition-2}
	\]
	We will show the assignment $k \mapsto k^\sharp$ from Lemma \ref{lem:orbit-map-coequaliser-existence}, which is a function of the form required by the statement of this Theorem by Lemma \ref{lem:transpose-characterisation-2}, has an inverse, namely $m \mapsto m \circ \eta$.

	First, we must check that $m \mapsto m \circ \eta$ is actually well-typed.
	We do so by showing that $\eta$ is a morphism in $\C^\HH$ of the following type:
	\begin{equation} \label{eq:left-adjoint-definition-on-morphisms-1}
		\Res_\varphi(\X, \act_\X) \to \Res_\varphi((\G/\HH, \mul/\HH) \otimes (\X, \act_\X)).
	\end{equation}
	Since $\Res_\varphi$ is the identity on morphisms, it follows by definition of composition in $\C^\HH$ that $m \circ \eta = \Res_\varphi(m) \circ \eta$ is a element of the left-hand side of \eqref{eq:left-adjoint-definition-on-morphisms-3} whenever $m$ is an element of the right-hand side.
	To show \eqref{eq:left-adjoint-definition-on-morphisms-1}, since all the morphisms involved in the definition of $\eta$ are deterministic, Remark \ref{rem:translating-classical-theory} allows us to work in $\Set$, where for $h \in \HH$ and $x \in \X$ we have
	\begin{align*}
		\eta(h \cdot x) &= (q(e), \varphi(h) \cdot x) \\
			&= (q(\varphi(h) \, \varphi(h)^{-1}), \varphi(h) \cdot x) \\
			&= (\varphi(h) \cdot q(\varphi(h)^{-1}), \varphi(h) \cdot x) \\
			&= (\varphi(h) \cdot q(e), \varphi(h) \cdot x) \\
			&= h \cdot \eta(x).
	\end{align*}
	Here the third step uses the fact that $q$ is $\G$-equivariant with respect to $\mul$ and $\mul/\HH$ by Proposition \ref{prop:induced-action-equivariance}, and the fourth uses fact that $q$ is $\HH$-invariant since it is an orbit map.

	Now we claim that for any morphisms $k$ and $m$ living in the left- and right-hand sides of \eqref{eq:left-adjoint-definition-on-morphisms-3} respectively, it holds that $k = m \circ \eta$ if and only if $m = k^\transpose$.
	The ``if'' direction establishes that $m \mapsto m \circ \eta$ is surjective, while the ``only if'' direction establishes injectivity.
	Noting the caveat of Remark \ref{rem:abusing-yoneda}, we demonstrate the ``if'' direction in $\Set$ as follows, where for $x \in \X$ we have
	\begin{align*}
		(k^\transpose \circ \eta)(x) &= k^\transpose(q(e), x) \\
			&= e \cdot k(e^{-1} \cdot x) \\
			&= k(x).
	\end{align*}
	Here the second step uses \eqref{eq:transpose-characterisation-1}.
	For the ``only if'' direction, suppose $m \circ \eta = k$.
	From the uniqueness part of Lemma \ref{lem:transpose-characterisation}, it follows that $m = k^\transpose$ if we can show that \eqref{eq:transpose-characterisation-1} holds when $m \circ \eta$ is substituted for $k$ on its right-hand side.
	Again noting Remark \ref{rem:abusing-yoneda}, we demonstrate this in $\Set$: given any $g \in \G$ and $x \in \X$, we have
	\begin{align*}
		g \cdot (m \circ \eta)(g^{-1} \cdot x) &= g \cdot m(q(e), g^{-1} \cdot x) \\
			&= m(g \cdot q(e), g \cdot g^{-1} \cdot x) \\
			&= m(q(g), x),
	\end{align*}
	where the first step uses the assumption that $m$ is $\HH$-equivariant, and the second uses the fact that $q$ is $\G$-equivariant with respect to $\mul$ and $\mul/\HH$.
	Since $k^\transpose$ is unique with this property by Lemma \ref{lem:transpose-characterisation}, it follows that $m = k^\transpose$ as desired, which gives the result.
\end{proof}

\subsection{Proof of Proposition \ref{prop:sym-gamma-is-stable}} \label{sec:proof-of-sym-gamma-is-stable}

\begin{proof}
	Given arbitrary $\Pre : (\X, \act_\X) \to (\G/\HH, \mul/\HH) \otimes (\X, \act_\X)$ in $\C^\G$, we will denote
	\begin{equation} \label{eq:sym-gamma-is-stable-proof-1}
		\sym_\Pre(k) \coloneqq k^\transpose \circ \Pre,
	\end{equation}
	where $k^\transpose$ is obtained via Theorem \ref{thm:partial-left-adjoint-existence}.
	In other words, this is just like $\sym_\pre$, but where its precomposition morphism is allowed to be arbitrary, rather than taking the specific form \eqref{eq:precomposition-morphism-2} from the main text.
	Now let $k : (\X, \act_\X) \to (\Y, \act_\Y)$ be a morphism in $\C^\G$.
	It is then easily verified that $\sym_\Pre$ is \emph{natural} in the following sense:
	\[
		\sym_\Pre(k) = \sym_\Pre(k \circ \id_\X) = k \circ \sym_\Pre(\id_\X).
	\]
	As a result, $\sym_\Pre$ is stable if and only if $\sym_\Pre(\id_\X) = \id_\X$.
	Now recall that $(\id_\X)^\transpose$ is the unique morphism in $\C$ such that
	\begin{equation} \label{eq:sym-gamma-is-stable-proof-2}
		\tikzfig{stability-proof-1}
	\end{equation}
	(where the second step is shown in Remark \ref{rem:translating-classical-theory}), and so we must have
	\[
		\tikzfig{stability-proof-2}
	\]
	since certainly \eqref{eq:sym-gamma-is-stable-proof-2} holds in this case.
	It follows that $\sym_\Pre$ is stable if and only if
	\[
		\tikzfig{stability-proof-3}
	\]
	This condition is always satisfied for $\Pre$ of the form \eqref{eq:precomposition-morphism-2} from the main text.
	Conversely, if $\C$ is positive, this condition implies that $\Pre$ has the form \eqref{eq:precomposition-morphism-2} by Theorem 2.8 of \citet{fritz2023dilations}.
\end{proof}

\addtocontents{toc}{\protect\setcounter{tocdepth}{3}} %
}

\end{document}